\renewcommand{\p@subfigure}{\thefigure}
\DeclareMathOperator*{\argmax}{arg\,max}
\DeclareMathOperator*{\argmin}{arg\,min}
\newtheorem{claim}{Claim}
\DeclarePairedDelimiter{\ceil}{\lceil}{\rceil}
\let\Ginclude@graphics\@org@Ginclude@graphics 
\title[EVaDE : Event-Based Variational Thompson Sampling]{EVaDE : Event-Based Variational Thompson Sampling for Model-Based Reinforcement Learning}
\author{\Name{Siddharth Aravindan} \Email{siddharth.aravindan@gmail.com}\\
 \Name{Dixant Mittal} \Email{dixant@comp.nus.edu.sg}\\
  \Name{Wee Sun Lee} \Email{leews@comp.nus.edu.sg}\\
  \addr Department of Computer Science, National University of Singapore} 
\begin{document}

\maketitle


\begin{abstract}
Posterior Sampling for Reinforcement Learning (PSRL) is a well-known algorithm that augments model-based reinforcement learning (MBRL) algorithms with Thompson sampling. PSRL maintains posterior distributions of the environment transition dynamics and the reward function, which are intractable for tasks with high-dimensional state and action spaces. Recent works show that dropout, used in conjunction with neural networks, induces variational distributions that can approximate these posteriors. In this paper, we propose Event-based Variational Distributions for Exploration (EVaDE), which are variational distributions that are useful for MBRL, especially when the underlying domain is object-based. We leverage the general domain knowledge of object-based domains to design three types of event-based convolutional layers to direct exploration. These layers rely on Gaussian dropouts and are inserted between the layers of the deep neural network model to help facilitate variational Thompson sampling. We empirically show the effectiveness of EVaDE-equipped Simulated Policy Learning (EVaDE-SimPLe) on the 100K Atari game suite.
\end{abstract}

\begin{keywords}
Exploration; Thompson Sampling; Model-Based Reinforcement Learning
\end{keywords}

\section{Introduction}
Model-Based Reinforcement Learning (MBRL) has recently gained popularity for tasks that allow for a very limited number of interactions with the environment \citep{Kaiser2020Model}. These algorithms use a model of the environment, that is learnt in addition to the policy, to improve sample efficiency in several ways; these include generating artificial training examples \citep{Kaiser2020Model,sutton1991dyna}, assisting with planning \citep{DBLP:journals/corr/abs-1708-02596,coulom2006efficient,williams2015model,curi2020efficient} and guiding policy search \citep{levine2013guided,chebotar2017path}. Additionally, it is easier to incorporate inductive biases derived from the domain knowledge of the task for learning the model, as the biases can be directly built into the transition and reward functions.

In this paper, we demonstrate how domain knowledge can be utilised for designing exploration strategies in MBRL. While model-free agents explore the space of policies and value functions, MBRL agents explore the space of transition dynamics and reward functions.

\begin{figure}
\centering
\hspace*{\fill}
{ \subfigure {
\includegraphics[width=0.15\linewidth, height=2.5cm]{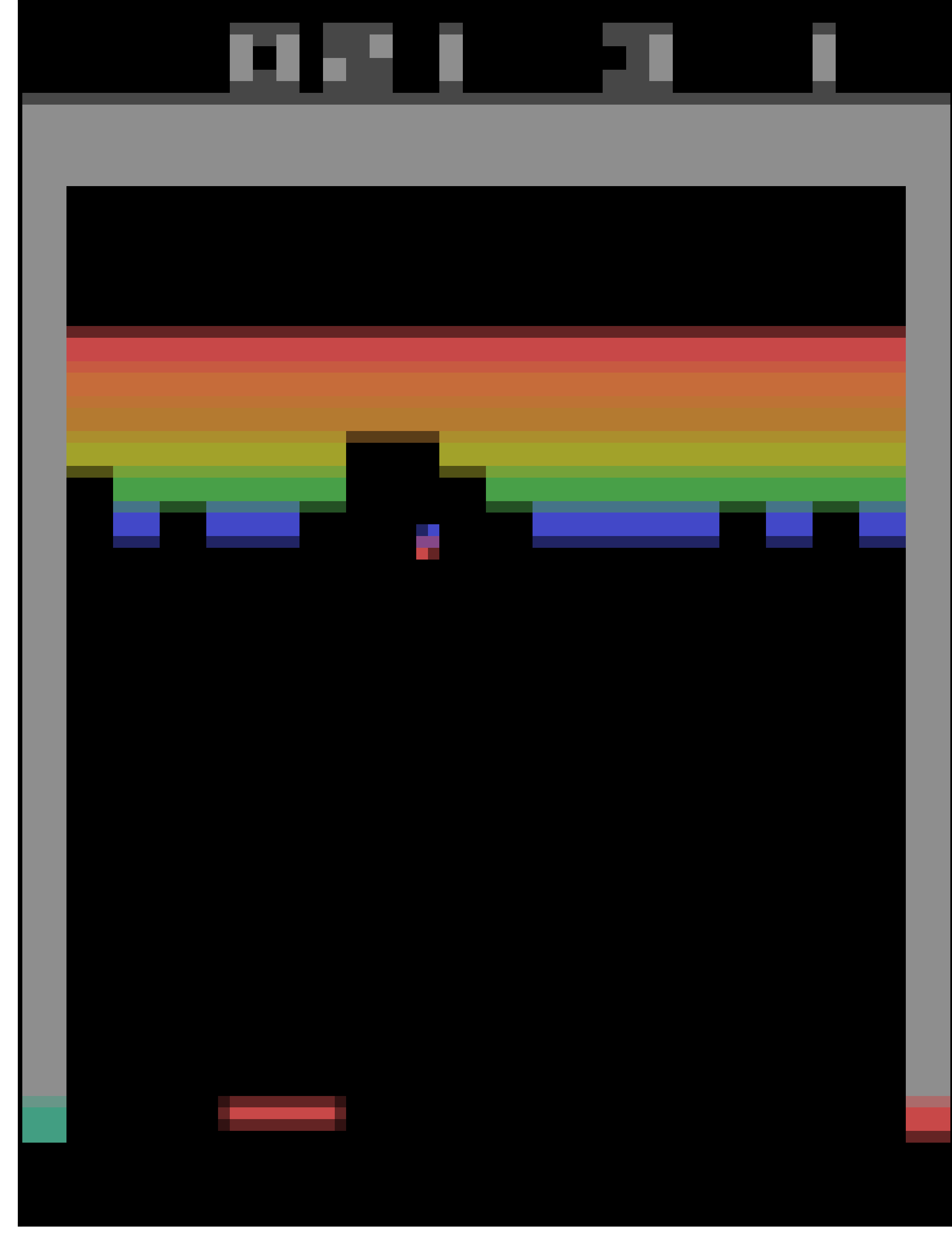}
    }
\hfill
 \subfigure{
\includegraphics[width=0.15\linewidth, height=2.5cm]{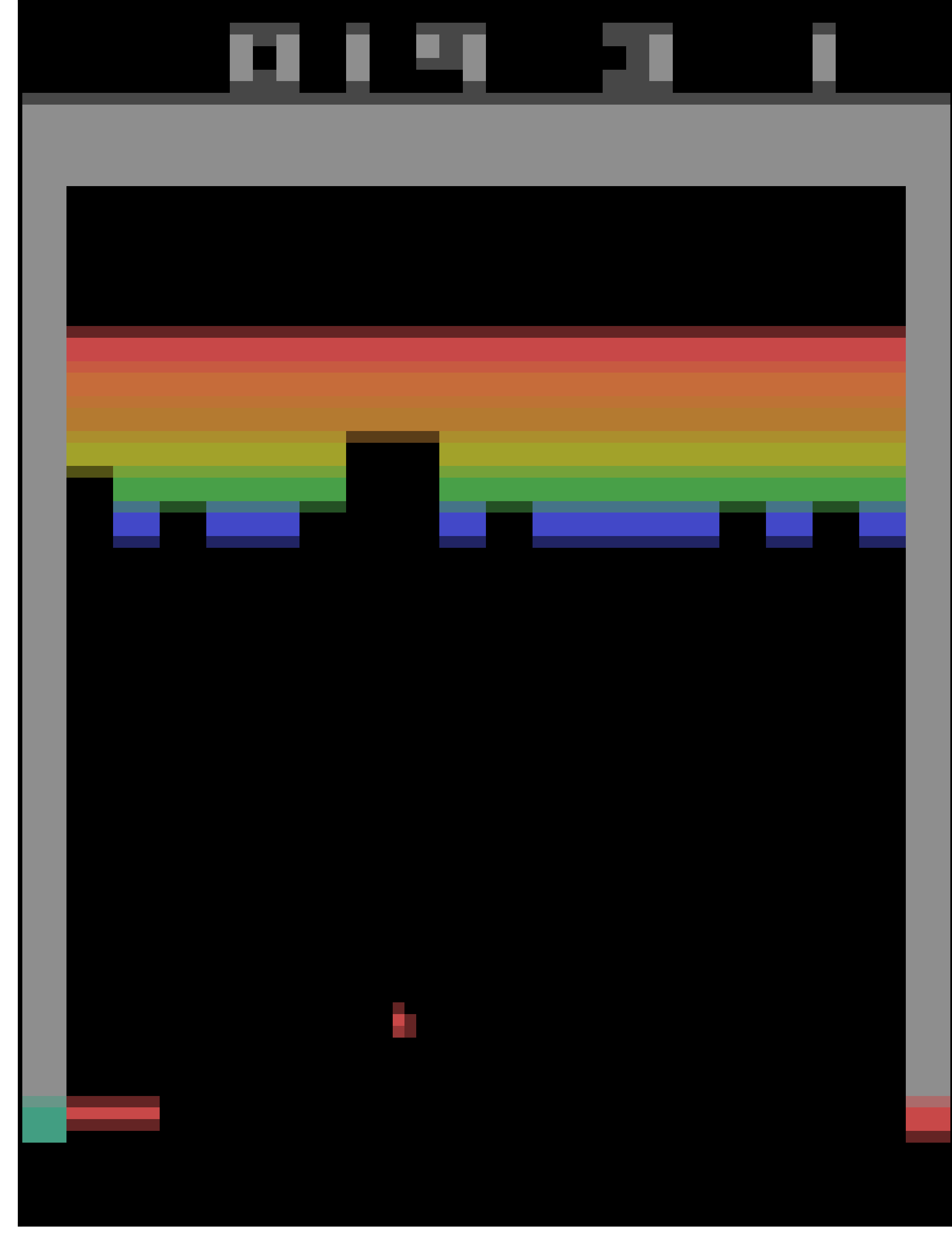}
    }
\hspace*{\fill}
}
\caption{Rewards in Breakout, a popular Atari game. (a) shows an interaction between the ball and a brick which gives the agent a positive reward. (b) shows a state, where the paddle is unable to prevent the ball from going out of bounds. The lack of this interaction between the agent and the ball in this situation results in a penalty for the agent.}
\label{fig:breakout_example}
\end{figure}

One method for exploring the space of transition dynamics and reward functions is Posterior Sampling for Reinforcement Learning (PSRL) \citep{strens2000bayesian,osband2017posterior}, which uses the Thompson sampling \citep{thompson1933likelihood} method of sampling the posterior of the model to explore other plausible models. Maintaining the posterior is generally intractable and in practice, variational distributions are often used as an approximation to the posterior \citep{aravindan2021state,pmlr-v119-wang20ab,zhang2019scalable}. 

Traditionally, variational distributions are designed with two considerations in mind: inference and/or sampling should be efficient with the variational distribution, and the variational distribution should approximate the true posterior as accurately as possible. However, as the variational distribution may not fully represent the posterior, different approximations may be suitable for different purposes. In this paper, we propose to design the variational distribution to generate trajectories through parts of the state space that may potentially give high returns, for the purpose of exploration.

In MBRL, trajectories are generated in the state space by running policies that are optimised against the learned model. One way to generate useful exploratory trajectories is by perturbing the reward function in the model, so that a different part of the state space appears to contain high rewards, leading the policy to direct trajectories towards those states. Another method is to perturb the reward function, so that the parts of the state space traversed by the current policy appear sub-optimal, causing the policy to seek new trajectories.

We focus on problems where the underlying domain is object-based, meaning that the reward functions heavily depend on the locations of individual objects and their interactions, which we refer to as events. An example of such an object-based task is the popular Atari game Breakout (as shown in Figure \ref{fig:breakout_example}). In this game, the agent receives rewards when the ball hits a brick and avoids losing a life by keeping the ball within bounds using the paddle, both of which are interactions between two objects. The rewards in this game are determined by the interactions between the ball and the bricks or the paddle.

For such domains, we introduce Event-based Variational Distributions for Exploration (EVaDE), a set of variational distributions that can help generate useful exploratory trajectories for deep convolutional neural network models. EVaDE comprises of three Gaussian dropout-based convolutional layers~\citep{JMLR:v15:srivastava14a}: the noisy event interaction layer, the noisy event weighting layer, and the noisy event translation layer. The noisy event interaction layer is designed to provide perturbations to the reward function in states where multiple objects appear at the same location, randomly perturbing the value of interactions between objects. The noisy event weighting layer perturbs the output of a convolutional layer at a single location, assuming that the output of the convolutional filters captures events; this would upweight and downweight the reward associated with these events randomly. The noisy event translation layer perturbs trajectories that go through "narrow passages"; small translations can randomly affect the returns from such trajectories, causing the policy to explore different trajectories.
 
These EVaDE layers can be used as standard convolutional layers and inserted between the layers of the environment network models. When included in deep convolutional networks, the noisy event interaction layers, the noisy event weighting layers, and the noisy event translation layers generate perturbations on possible object interactions, the importance of different events, and the positional importance of objects/events, respectively, through the dropout mechanism. This mechanism induces variational distributions over the model parameters \citep{JMLR:v15:srivastava14a,gal2016dropout}.

An interesting aspect of designing for exploration is that the variational distributions can be useful, even if they do not approximate the posterior well, as long as they assist in perturbing the policy out of local optima. After perturbing the policy, incorrect parts of the model will either be corrected by data or left unchanged if they are irrelevant to optimal behaviour.

Finally, we approximate PSRL by incorporating EVaDE layers into the reward models of Simulated Policy Learning (SimPLe)~\citep{Kaiser2020Model}. We conduct experiments to compare EVaDE-equipped SimPLe (EVaDE-SimPLe) with various popular baselines on the 100K Atari test suite. In the conducted experiments, all agents operate in the low data regime, where the number of interactions with the real environment is limited to 100K. EVaDE-SimPLe agents achieve a mean human-normalised score (HNS) of 0.682 in these games, which is 79\% higher than the mean score of 0.381 achieved by a recent low data regime method, CURL~\citep{laskin2020curl}, and 30\% higher than the mean score of 0.525 achieved by vanilla SimPLe agents.  


    

\section{Background and Related Work}

Posterior sampling approaches like Thompson Sampling \citep{thompson1933likelihood} have been one of the more popular methods used to balance the exploration exploitation trade-off. Exact implementations of these algorithms have been shown to achieve near optimal regret bounds \citep{NIPS2017_3621f145,jaksch2010near}. These approaches, however, work by maintaining a posterior distribution over all possible environment models and/or action-value functions. This is generally intractable in practice. Approaches  that work by maintaining an approximated posterior distribution \citep{osband2016generalization,azizzadenesheli2018efficient}, or approaches that use bootstrap re-sampling to procure samples, \citep{osband2016deep,osband2015bootstrapped}  have achieved success in recent times. 

Variational inference procures samples from distributions that can be represented efficiently while also being easy to sample. These variational distributions are updated with observed data to approximate the true posterior as accurately as possible. Computationally cost effective methods such as dropouts have been known to induce variational distributions over the model parameters \citep{JMLR:v15:srivastava14a,gal2016dropout}. Consequently, variational inference approaches that approximate the posterior distributions required by Thompson sampling have gained popularity \citep{aravindan2021state,pmlr-v119-wang20ab,urteaga2018variational,xie2018nadpex}. 

Model-based reinforcement learning improves sample complexity at the computational cost of maintaining and performing posterior updates to the learnt environment models.  Neural networks have been successful in modelling relatively complex and diverse tasks such as Atari games \citep{oh2015action,ha2018recurrent}. Over the past few years, variational inference has been used to represent environment models, with the intention to capture environment stochasticity \citep{hafner2019learning,babaeizadeh2018stochastic,gregor2018temporal}. 

SimPLe \citep{Kaiser2020Model} is one of the first algorithms to use MBRL to train agents to play video games from images. It is also perhaps the closest to EVaDE, as it not only employs an iterative algorithm to train its agent, but also uses an additional convolutional network assisted by an autoregressive LSTM based RNN to approximate the posterior of the hidden variables in the stochastic model. Thus, similar to existing methods \citep{hafner2019learning,babaeizadeh2018stochastic,gregor2018temporal}, these variational distributions are used for the purpose of handling environment stochasticity rather than improving exploration. 
To the contrary, EVaDE-SimPLe is an approximation to PSRL, that uses a Gaussian dropout induced variational distribution over deterministic reward functions solely for the purpose of exploration. Unlike SimPLe, which uses the stochastic model to generate trajectories to train its agent, EVaDE-SimPLe agents optimize for a deterministic reward model sampled from the variational distribution and a learnt transition model. Moreover, with EVaDE, these variational distributions are carefully designed so as to explore different object interactions, importance of events and positional importance of objects/events, that we hypothesize are beneficial for learning good policies in object-based tasks.

The current state of the art scores in the Atari 100K benchmark is achieved by EfficientZero \citep{ye2021mastering}, which was developed concurrently with our work. Its success is a consequence of combining several improvements proposed previously in addition to integrating tree search with learning to improve the policy executed by the agent.  We believe that the benefits of using the variational designs induced by the EVaDE layers proposed in this paper are complementary to such  search based methods, as these layers could be used in their reward models to guide the policy search  by generating useful exploratory trajectories, especially in object-based domains.  

\section{Event Based Variational Distributions}
\label{sec:evade}

\begin{figure*}[t!]
\centering
{ 
    \subfigure{
        \includegraphics[width=0.35\linewidth]{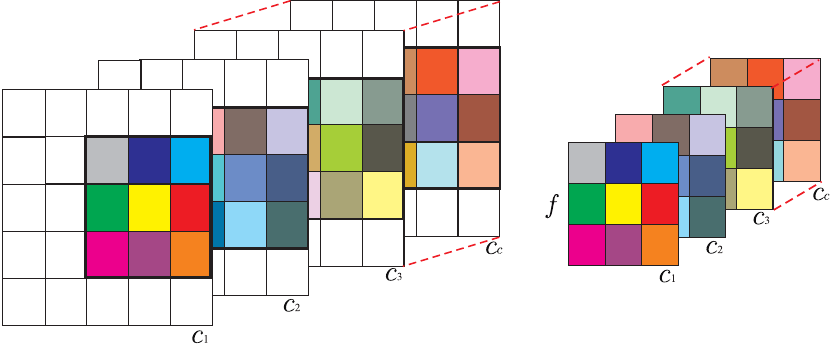}
        \label{fig:interaction_filter}
        }
    \subfigure {
        \includegraphics[width=0.2\linewidth]{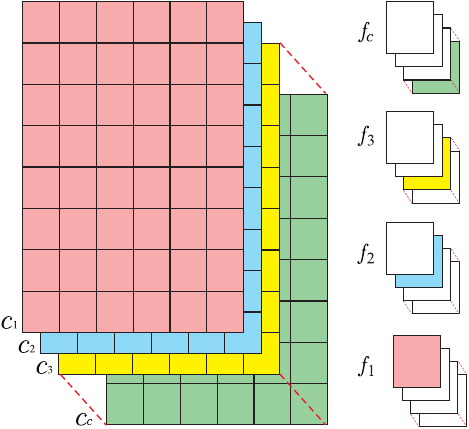}
        \label{fig:weighting_filter}
    }
    \subfigure[] {
        \includegraphics[width=0.35\linewidth]{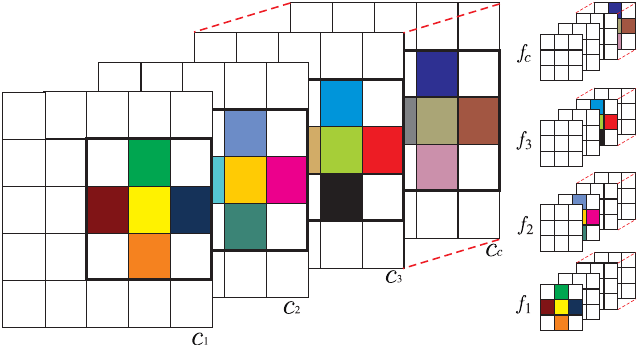}
        \label{fig:translation_filter}
    }
}
\caption{(a) This image shows one noisy event interaction filter acting on an input with $c$ channels. Here $f$ is an $m \times m$ noisy convolutional filter, which acts upon input patches at the same location across different channels, noisily altering the value of events captured at those locations. (b) This image shows how the filters of the noisy event weighting layer weight the input channels. The filters $f_1, f_2, f_3$ and $f_c$ randomly upweight and downweight the events captured by the channels $c_1, c_2, c_3$ and $c_c$ respectively. The white entries of the filter are entries that are set to zero, while the rest are trainable noisy model parameters. (c) The noisy event translation filter. The filters $f_1, f_2, f_3$ and $f_c$ noisily translate events/objects captured by the channels $c_1, c_2, c_3$ and $c_c$ respectively. The white entries of the filter are entries that are set to zero, while the rest are trainable noisy model parameters. Gaussian multiplicative dropout is applied to all the non-zero parameters of all EVaDE filters.}
\label{fig:all_states_low_risk}
\end{figure*}

Event-based Variational Distributions for Exploration (EVaDE) consist of a set of variational distribution designs, each induced by a noisy convolutional layer. These convolutional layers can be inserted after any intermediate hidden layer in deep convolutional neural networks to help us construct approximate posteriors over the model parameters to produce samples from relevant parts of the model space. EVaDE convolutional layers use Gaussian multiplicative dropout to draw samples from the variational approximation of the posterior. Posterior sampling is done by multiplying each parameter, $\theta_{env}^i$, of these EVaDE layers by a perturbation drawn from a Gaussian distribution, $\mathcal{N}(1,(\sigma_{env}^{i})^{2})$. These perturbations are sampled by leveraging the reparameterization trick \citep{kingma2015variational,salimans2017evolution, plappert2018parameter, fortunato2018noisy} using a noise sample from the standard Normal distribution, $\mathcal{N}(0,1)$, as shown in Equation \ref{eq:reparameterization}. The variance corresponding to each parameter, $(\sigma_{env}^{i})^{2}$, is trained jointly with the model parameters $\theta_{env}$.    
\begin{equation}
    \Tilde{\theta}^i_{env} \leftarrow \theta^i_{env}(1 + \sigma^i_{env} \epsilon^i ) ; \;\; \epsilon^i \sim \mathcal{N}(0,1)
    \label{eq:reparameterization}
\end{equation}

When the number of agent-environment interactions is limited, the exploration strategy employed by the agent is critical. In object-based domains, rewards and penalties are often sparse and occur when objects interact. Hence, the agent needs to experience most of the events in order to learn a good environment model. Generating trajectories that contain events is hence a reasonable exploration strategy.
Additionally, a very common issue with training using a very few number of interactions is that the agent may often get stuck in a local optimum, prioritising an event, which is relatively important, but may not lead to an optimal solution. Generating potentially high return alternate trajectories that do not include that event is another reasonable exploration strategy.

With these exploration strategies in mind, we introduce three EVaDE layers, namely the noisy event interaction layer, the noisy event weighting layer and the noisy event translation layer. All the three layers are constructed with the hypothesis that the channels of the outputs of intermediate layers of deep convolutional neural networks either capture object positions, or events (interaction of multiple objects detected by multi-layer composition of the network).

\subsection{Noisy Event Interaction Layer}
The  noisy event interaction layer is designed with the motivation of increasing the variety of events experienced by the agent. This layer consists of  noisy convolutional filters, each having a dimension of $m \times m \times c$, where $c$  is the number of input channels to the layer. Every filter parameter is multiplied by a Gaussian perturbation as shown in Equation \ref{eq:reparameterization}. The filter dimension, $m$, is a hyperparameter that can be set so as to capture objects within a small $m \times m$ patch of an input channel. Assuming that the input channels capture the positions of different objects, a filter that combines the $c$ input channels locally captures the local object interaction within the $m\times m$ patch. By perturbing the filter, different combinations of interactions can be captured; if the filter is used as part of the reward function, it will correspondingly reward different interactions depending on the perturbation. The policy optimized for different perturbed reward functions is likely to generate trajectories that contain different events. Note that convolutional filters are equivariant, so the same filter will detect the event anywhere in the image and can result in trajectories that include the event at different positions in the image.

We describe the filter in more detail. Every output pixel of the filter, $y_{i,j}^{k}$, representing $(i,j)^{th}$ pixel of the $k^{th}$ output channel, can be computed as shown in Equation \ref{eq:ev_int}.  Here $x$ is the input to the layer with $c$ input channels, $P_{x_{i,j}^{l}}$ is the $m\times m$ patch (represented as a matrix) centred around  $x_{i,j}^{l}$, the $(i,j)^{th}$ pixel of the $l^{th}$ input  channel, $\Tilde{\theta}^{l}_{k}$ is the $l^{th}$ channel of the $k^{th}$ noisy convolutional filter, $\odot$ the Hadamard product operator, and $\mathbbm{1}_m$ is an $m$ dimensional column vector whose every entry is $1$.  
\begin{equation}
    y_{i,j}^k  = \sum\limits_{l=0}^c \mathbbm{1}_{m}^{T} \left(\Tilde{\theta}^{l}_{k} \odot P_{x_{i,j}^{l}}\right)\mathbbm{1}_{m}
    \label{eq:ev_int}
\end{equation}
Figure \ref{fig:interaction_filter} shows how this filter is applied over the input channels.


\subsection{Noisy Event Weighting Layer}
\label{sec:ev_we}
Overemphasis on certain events is possibly one of the main causes due to which agents converge to sub-optimal policies in object based tasks. Hence, it would be useful to easily be able to increase as well as decrease the importance of an event. For this layer, we assume that each input channel is already detecting an event and design a variational distribution over model parameters that directly up-weights or down-weights the events captured by different input channels.

This layer can be implemented with the help of $c$ $1 \times 1$ noisy convolutional filters (each having a dimension of $1 \times1 \times c$ as shown in Figure \ref{fig:weighting_filter}), where $c$ is the number of input channels.  We denote the $l^{th}$ element of the $k^{th}$ filter in the layer as $\theta^l_k$. To implement per channel noisy weighting, we set every $\theta^k_k$ as a trainable model parameter, which has a Gaussian dropout variance parameter associated with it to facilitate noisy weighting as shown in Equation \ref{eq:reparameterization}.
All other weights, i.e., $\theta^l_k$ when $l \neq k$ are set to 0. 
Thus each noisy event weighting layer has $c$ trainable model parameters and $c$ trainable Gaussian dropout parameters. A pictorial representation of how this layer acts on its input is presented in Figure \ref{fig:weighting_filter}.

Every output $y_{i,j}^k$, corresponding to the $(i,j)^{th}$ pixel of the $k^{th}$ output channel, can be computed using Equation \ref{eq:ev_we}, where $\Tilde{\theta}^{k}_{k}$ is the noisy scaling factor for the $k^{th}$ input channel.
\begin{equation}
    y_{i,j}^k  =  \Tilde{\theta}^{k}_{k}x_{i,j}^{k}
    \label{eq:ev_we}
\end{equation}
We expect that inducing such a variational distribution that up-weights or downweights events randomly helps the agents learn from different events that are randomly emphasised by different model samples drawn from the distribution. This may eventually help them in escaping local optima caused by overemphasis of certain events.

\subsection{Noisy Event Translation Layer}
\label{sec:ev_tr}
In object based domains, an agent often has to perform a specific sequence of actions to successfully gain some rewards and may be penalized heavily for deviation from the sequence. We refer to the specific sequence of actions as a "narrow passage". A small translation of the positions of the environment or other objects will often cause the agent to be unsuccessful. When random translations of obstacles, events or boundaries are performed within the reward function, the optimized policy may select a different trajectory, possibly allowing it to escape from a locally optimal trajectory. We thus design the noisy event  translation layer to induce a variational distribution over such model posteriors that can sample a variety of translations of relevant objects.

The noisy soft-translation on an input with $c$ channels, is performed with the help of $c$ convolutional  filters, each having a dimension of $m \times m \times c$. These filters compute a noisy weighted sum of the corresponding input pixel and the pixels near it to effect a \textit{noisy} translation of the channel. Similar to the noisy event weighting layer, each filter of the noisy event translation layer acts on one input channel. To achieve this, every parameter except the parameters of the $k^{th}$ channel of the $k^{th}$ filter, $\theta^k_k$ (which has a dimension of $m \times m$), and their corresponding dropout variances, is set to 0, for all $k$. 
Moreover in the channel $\theta_k^k$,  only the middle column and row contain trainable parameters. Figure \ref{fig:translation_filter} shows a detailed pictorial representation of this structure of the filters.   A random translation of up to $n$ pixels of the input can be achieved by using a $(2n+1)\times (2n+1)$ noisy event translation layer.  

Equation \ref{eq:ev_tr} shows how $y_{i,j}^k$, the $(i,j)^{th}$ output pixel of the $k^{th}$ channel, is computed. Here, $P_{x_{i,j}^{k}}$ is a $m \times m$ patch centred at $(i,j)^{th}$ pixel of the $k^{th}$ input channel,  $\Tilde{\theta}^{k}_{k}$ is the $k^{th}$ channel of the $k^{th}$ noisy convolutional filter, $\odot$ the Hadamard product operator, and $\mathbbm{1}_m$ is an $m$ dimensional column vector where all the entries are $1$.
\begin{equation}
        y_{i,j}^k  = \mathbbm{1}_{m}^{T} \left( \Tilde{\theta}^{k}_{k} \odot P_{x_{i,j}^{k}}\right)\mathbbm{1}_{m}
    \label{eq:ev_tr}
\end{equation}

\subsection{Representational Capabilities of EVaDE networks}
Ideally, adding EVaDE layers for exploration should not hinder the network  to be unable to represent the true model, even if they don't accurately approximate the posterior. Theorem \ref{th} below states that this is indeed the case.

\begin{theorem}
\label{th}
Let $\mathbbm{n}$ be any neural network. For any convolutional layer $l$, let $m_i(l) \times n_i(l) \times c_i(l)$ and $m_o(l) \times n_o(l) \times c_o(l)$ denote the dimensions of its input and output respectively. Then, any function that can be represented by $\mathbbm{n}$ can also be represented by any network $\Tilde{\mathbbm{n}} \in \Tilde{\mathscr{N}}$, where $\Tilde{\mathscr{N}}$ is the set of all neural networks that can be constructed by adding any combination of EVaDE layers to  $\mathbbm{n}$, provided that, for every EVaDE layer $\Tilde{l}$ added, $\Tilde{l}$ uses a  stride of $1$, $m_i(\Tilde{l}) \leq m_o(\Tilde{l}), n_i(\Tilde{l}) \leq n_o(\Tilde{l})$ and $c_i(\Tilde{l}) \leq c_o(\Tilde{l})$.  
\end{theorem}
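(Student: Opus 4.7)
The plan is to prove the theorem constructively: given any weights of $\mathbbm{n}$ realising a function $f$, I exhibit weights of an arbitrary $\tilde{\mathbbm{n}} \in \tilde{\mathscr{N}}$ that also realise $f$. The core idea is to configure every inserted EVaDE layer so that it deterministically implements an ``identity plus zero-padding'' map from its input to its output, and then to rewire the immediately succeeding layer so that it ignores the padded coordinates and reproduces what the corresponding layer of $\mathbbm{n}$ would have computed on the un-embedded input.

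The first step is to switch off all stochasticity by setting every dropout variance $\sigma^i_{env}$ in every inserted EVaDE layer to $0$; by Equation \ref{eq:reparameterization} this forces $\tilde{\theta}^i_{env} = \theta^i_{env}$, so each EVaDE layer becomes a deterministic convolution. Next I would pick the deterministic parameters $\theta$ of each EVaDE layer as follows. For the noisy event weighting layer (Eq.\ \ref{eq:ev_we}) set $\theta^k_k = 1$: since this layer is $1\times 1$ spatially and has $c_o = c_i$, the output equals the input exactly. For the noisy event translation layer (Eq.\ \ref{eq:ev_tr}) set the centre entry of every $\theta^k_k$ to $1$ and its remaining trainable entries to $0$: this is a per-channel delta kernel, so with stride $1$ and sufficient zero-padding (admissible because $m_i \le m_o$ and $n_i \le n_o$) the output contains a copy of the input in an $m_i\times n_i$ subregion and is zero elsewhere. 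For the noisy event interaction layer (Eq.\ \ref{eq:ev_int}), for each $k \le c_i$ pick $\theta^k_k$ to be a centred delta kernel and $\theta^l_k = 0$ for $l \ne k$, so output channel $k$ equals input channel $k$; for each $c_i < k \le c_o$ set every $\theta^l_k$ to $0$, making those extra output channels identically zero. The stated inequalities stride $= 1$, $m_i \le m_o$, $n_i \le n_o$, $c_i \le c_o$ are exactly what make this embedding construction feasible in every layer type.

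The last step is to argue that the layer of $\tilde{\mathbbm{n}}$ immediately after each insertion can reproduce $\mathbbm{n}$'s corresponding computation on the embedded input. If that successor is trainable (convolutional or fully connected), I extend its weight tensor by copying $\mathbbm{n}$'s weights onto the coordinates corresponding to the embedded input and setting every weight on a padded coordinate to $0$; since the padded entries of the EVaDE output are $0$, they contribute nothing to the weighted sum and the extended layer computes exactly what the original layer would have. For fixed layers (activations, normalisations, pooling) the embedding region can be chosen so that their locality, together with the $0\mapsto 0$ property of common choices, preserves both the identity region and the surrounding zeros. Iterating this construction independently at every EVaDE insertion point and leaving all remaining parameters of $\mathbbm{n}$ unchanged yields a weight assignment of $\tilde{\mathbbm{n}}$ realising $f$. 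The principal technical nuisance I expect is the convolution-arithmetic bookkeeping, namely choosing paddings consistently so that the identity region lands in predictable positions across every tensor; the theorem's inequalities on $m$, $n$ and $c$ are precisely what guarantee such paddings exist.
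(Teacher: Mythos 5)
Your construction is correct and rests on the same key idea as the paper's proof --- switch off the Gaussian dropout via Equation \ref{eq:reparameterization} and choose centred delta kernels so that each EVaDE layer computes the identity --- but you handle the dimension bookkeeping differently, and more laboriously than necessary. The paper first observes that the hypotheses do more than ``make the embedding feasible'': because any run of inserted EVaDE layers sits between two consecutive layers $N_i$, $N_{i+1}$ of $\mathbbm{n}$ whose interface dimensions already agree, and because each inserted layer can only preserve or enlarge each dimension, every inserted layer must in fact satisfy $m_i(\Tilde{l})=m_o(\Tilde{l})$, $n_i(\Tilde{l})=n_o(\Tilde{l})$ and $c_i(\Tilde{l})=c_o(\Tilde{l})$, and the stride-$1$ condition then forces SAME padding. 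Hence there are never any surplus output channels or padded spatial positions: each layer is exactly the identity on a tensor of unchanged shape, the composite $f_k\circ\cdots\circ\Tilde{f}_j\circ\cdots\circ f_1$ collapses to $f$, and no layer of $\mathbbm{n}$ needs to be touched. Your ``identity plus zero-padding, then rewire the successor'' machinery is therefore vacuous in every case that actually arises; worse, if a successor's weight tensor ever genuinely had to be extended, the resulting network would no longer be obtained by merely adding EVaDE layers to $\mathbbm{n}$ and so would fall outside $\Tilde{\mathscr{N}}$ as defined --- it is only because the inequalities collapse to equalities that your argument stays inside the stated class. What your version buys is robustness to a hypothetical relaxation in which EVaDE layers were allowed to genuinely enlarge their outputs and downstream layers were re-shaped to match; what the paper's version buys is a shorter argument that needs no assumptions about the nature of the successor layers (activations, pooling, etc.) and remains strictly within $\Tilde{\mathscr{N}}$.
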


\proof {
The proof follows from the fact that every EVaDE layer $\Tilde{l_i}$ that is added is capable of representing the identity function. A detailed proof is presented in the supplementary material. 
}\\

If the added EVaDE layers induce distributions that poorly approximate the posterior, performance can indeed be poorer. But with enough data, the correct model should still be learnable since it is representable, as long as the optimization does not get trapped in a poor local optimum.
\begin{figure*}
\centering
\includegraphics[width=0.98\textwidth, height=5cm]{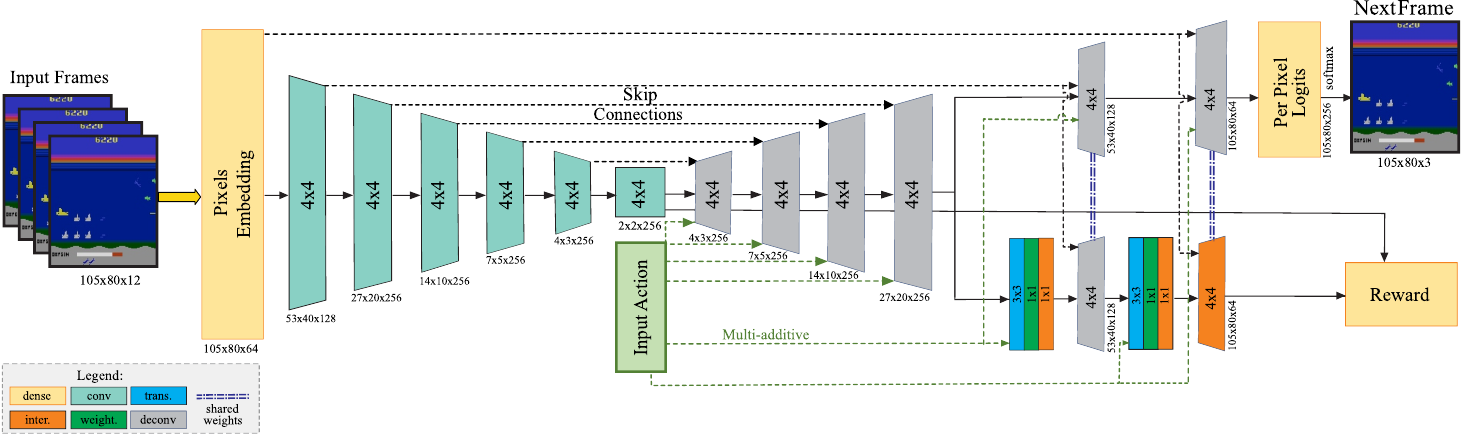}
\caption{The network architecture of the environment model used to train EVaDE-SimPLe.}
\label{fig:evade_arch}
\end{figure*} 
\subsection{Approximate PSRL with EVaDE equipped Simulated Policy Learning}
Simulated Policy Learning (SimPLe) \citep{Kaiser2020Model} is an iterative model based reinforcement learning algorithm, wherein the environment model learnt is used to generate artificial episodes to train the agent policy. In every iteration, the SimPLe agent first interacts with the real environment using its current policy. After being trained on the set of all collected interactions, the models of the transition and reward functions are then used as a substitute to the real environment to train the policy of the agent to be followed by it in its next interactions with the real environment. PSRL \citep{strens2000bayesian,osband2017posterior}, which augments MBRL with Thompson sampling, has a very similar iterative structure as that of SimPLe. However, instead of maintaining a single environment model, PSRL maintains a posterior distribution over all possible environment models given the interactions experienced by the agent with the real environment. The agent then optimizes a policy for an environment model sampled from this posterior distribution. This policy is used in its real environment interactions of the next iteration. EVaDE equipped SimPLE approximates PSRL, by maintaining an approximate posterior distribution of the reward function with the help of the variational distributions induced by the three EVaDE layers.


Being an approximation of PSRL, an EVaDE-SimPLe agent has the same iterative training structure where it acts in the real environment using its latest policy to collect training interactions, learns a transition model and an approximate posterior over the reward model by jointly optimizing the environment model parameters, $\theta_{env}$, and the Gaussian dropout parameters of the reward model, $\sigma_{rew}$, with the help of supervised learning. It then optimizes its policy with respect to an environment characterized by the learnt transition function and a reward model sample that is procured from the posterior with the help of Gaussian dropout as shown in Equation \ref{eq:reparameterization}. This policy is then used by the agent to procure more training interactions in the next iteration.










\section{Experiments}
\label{sec:experiments}
We conduct our experiments on the 100K Atari game test suite first introduced by \citep{Kaiser2020Model}. This test suite consists of 26 Atari games where the 
 number of agent-environment interactions is limited to 100K.  Due to its diverse range of easy and hard exploration games \citep{bellemare2016unifying}, this test-suite  has become a popular test-bed for evaluating reinforcement algorithms.

\subsection{Network Architecture}

In our experiments we use the network architecture of the deterministic world model introduced by \citep{Kaiser2020Model}  to train the environment models of the SimPLe agents, but do not augment it with the convolutional inference network and the autoregressive LSTM unit. 
Readers are referred to  \cite{Kaiser2020Model} for more details.

The architecture of the environment model used by EVaDE-SimPLe agents is shown in Figure \ref{fig:evade_arch}. This model is very similar to the one used by SimPLe agents, except that it has a combination of a $3 \times 3$ noisy event translation layer, a noisy event weighting layer and a $1\times 1$ noisy event interaction layer inserted before the fifth and sixth de-convolutional layers. The final de-convolutional layer acts as a noisy event interaction filter when computing the reward, while it acts as a normal de-convolutional layer while predicting the next observation. Sharing weights between layers allows us to achieve this. We insert EVaDE layers in a way that it  perturbs only the reward function and not the transition dynamics.  

 We reuse the network architecture of \citep{Kaiser2020Model} to train the policies in both the SimPLe and EVaDE-SimPLe agents using  Proximal Policy Optimization (PPO) \citep{DBLP:journals/corr/SchulmanWDRK17}. All the hyperparameters used for training the policy network and environment are the same as the ones used in \citep{Kaiser2020Model}.

\subsection{Experimental Details}
The training regimen that we use to train all the agents is the same and is structured similarly to the setup used by \citep{Kaiser2020Model}. The agents, initialized with a random policy and collect 6400 real environment interactions before starting the first training iteration.  In every subsequent iteration, every agent trains its environment model with its collection of real world interactions, refines its policy by interacting with the environment model, if it is a vanilla-SimPLe agent, or a transition model and a reward model sampled from the approximate posterior, if it is an EVaDE-SimPLe agent, and then collects more interactions with this refined policy. 

PSRL regret bounds scale linearly with the length of an episode experienced by the agent in every iteration \citep{osband2013more}. Shorter horizons, however, run the risk of the agent not experiencing anything relevant before episode termination. To balance these factors, we set the total number of iterations to 30, instead of 15. We allocate an equal number of environment interactions to each iteration, resulting in 3200 agent-environment interactions per iteration. The total number of interactions that each SimPLe and EVaDE-SimPLe agent procures (about 102K) is similar to SimPLe agents trained in \citep{Kaiser2020Model}, which allocates double the number of interactions per iteration, but trains for only 15 iterations. To disambiguate between the different SimPLe agents referred to in this paper, we refer to the SimPLe agents trained in our paper and \citep{Kaiser2020Model} as SimPLe(30) and SimPLe respectively from here on. 

\begin{table*}[t]
  \scriptsize
      \caption{ Comparison of the performances achieved by popular baselines and five independent training runs of EVaDE-SimPLe and SimPLe(30) agents with 100K agent-environment interactions in the 26 game Atari 100K test suite.}
    \label{tab:EVADE_sota}
\centering
  \begin{tabular}{c@{\hspace{0.9\tabcolsep}}c@{\hspace{0.9\tabcolsep}}c@{\hspace{0.9\tabcolsep}}c@{\hspace{0.9\tabcolsep}}c@{\hspace{0.9\tabcolsep}}c@{\hspace{0.9\tabcolsep}}c}
    \toprule
    Game & SimPLe&SimPLe(30)&CURL&OTRainbow&Eff. Rainbow&EVaDE-SimPLe\\
    \midrule
        Mean HNS & 0.443 & 0.525& 0.381 & 0.264  & 0.285 & \textbf{0.682} \\
        Median HNS & 0.144 & 0.151& 0.175 & 0.204  & 0.161 & \textbf{0.267} \\
        Vs EVaDE (W/L) &7W,19L&3W,23L &9W,17L&6W,20L&9W,17L&- \\
        Best Performing & 5& 2& 4& 1& 3& \textbf{11}\\  
    \bottomrule
  \end{tabular}

\end{table*}

We try to keep the training schedule of EVaDE-SimPLe and SimPLe(30) similar to the training schedule of the deterministic model in \citep{Kaiser2020Model} so as to keep the comparisons fair. We train  the environment model for 45K steps in the first iteration and 15K steps in all subsequent iterations.  In every iteration of simulated policy training, 16 parallel PPO agents collect $z*1000$ batches of 50 environment interactions each, where  $z=1$ in all iterations except iterations 8, 12, 23 and 27 where $z=2$ and in iteration 30, where  $z=3$. The policy is also trained when the agent interacts with the real environment. However, the effect of these interactions (numbering 102K) on the policy  is minuscule when compared to  the 28.8M transitions experienced by the agent when interacting with the learnt environment model. Additional experimental details as well as the anonymized code for our agents are provided in the supplementary, which is available at \url{https://tinyurl.com/3zb8nywx}.

\subsection{Results}

We report the mean and median Human Normalized Scores (HNS) achieved by SimPLe(30), EVaDE-SimPLe  and popular baselines SimPLe \citep{Kaiser2020Model}, CURL \citep{laskin2020curl}, OverTrained Rainbow \citep{DBLP:journals/corr/abs-2003-10181} and Data-Efficient Rainbow \citep{DBLP:conf/nips/HasseltHA19} in Table \ref{tab:EVADE_sota}. For each baseline, we report 
 the number of games in which it is the best performing, among all compared methods, as well as the number of games in which it scores more (or less) than EVaDE-SimPLe, which are counted as its wins (or losses).

EVaDE-SimPLe agents achieve the highest score in 11 of the 26 games in the test suite, outperforming every other method on at least 17 games. Moreover, the effectiveness of the noisy layers to improve exploration can be empirically verified as EVaDE-SimPLe manages to attain higher mean scores than SimPLe(30) in 23 of the 26 games, even though both methods follow the same training routine. EVaDE-SimPLe also scores a mean HNS of 0.682, which is 79\% higher than the score of 0.381 achieved by a popular baseline, CURL, and 30\% more than the mean HNS of 0.525 achieved by SimPLe(30). Additionally, EVaDE-SimPLe agents also surpass the human performances \citep{DBLP:journals/corr/abs-1905-12726} in 5 games, namely Asterix, Boxing, CrazyClimber, Freeway and Krull. 

We also compute the Inter-Quartile Means (IQM), \footnote{IQM is well regarded in the reinforcement learning community, advocated by \citep{agarwal2021deep}, which won the Outstanding Paper Award at NeurIPS 2021} a metric resilient to outlier games and runs, of SimPLe(30) and EVaDE-SimPLe agents.  EVaDE-SimPLe agents achieve an IQM  of 0.339, which is 68\% higher than the IQM of 0.202 achieved by Simple(30) agents. This affirms that the improvements obtained due to the addition of the EVaDE layers are robust to outlier games and runs. In the supplementary material, we provide the scores achieved by all five independent runs of SimPLe(30) and EVaDE-SimPLe agents, which were used to compute the IQM.

Furthermore, a paired t-test on the mean HNS achieved by EVaDE-SimPLe and SimPLe(30) agents on each of the 26 games yields a single-tailed p-value of $3 \times 10^{-3}$ confirming that the performance improvements over SimPLe(30) of EVaDE-SimPLe agents are statistically significant as an algorithm when applied to multiple games. 

\begin{table*}[t]

  \scriptsize

   \caption{Scores (mean $\pm$ 1 standard error) achieved by SimPLe agents when equipped with all three EVaDE filters individually and when equipped with all filters simultaneously. All scores are averaged over five independent training runs.}
   
    \label{tab:EVADE_ablation}
\centering
  \begin{tabular}{c@{\hspace{1.1\tabcolsep}}c@{\hspace{1.1\tabcolsep}}c@{\hspace{1.1\tabcolsep}}c@{\hspace{1.1\tabcolsep}}c@{\hspace{1.1\tabcolsep}}c}

    \toprule
    Game & SimPle (30)&Inter. Layer &  Weight. Layer & Trans. Layer & EVaDE-SimPLe   \\
    \midrule
    BankHeist&78.6$\pm$31.7&107.5$\pm$29.2 &168.4$\pm$19.9 & 180.7 $\pm$ 16.7  & \textbf{224.2 $\pm$ 35.4}\\
    BattleZone&4544$\pm$803& 6688 $\pm$ 1617 & 7525$\pm$2164 & 7750 $\pm$ 1355  & \textbf{11094 $\pm$ 572} \\
    Breakout&18.9$\pm$1.7& 19.8 $\pm$ 3.6 & 22.4 $\pm$ 5.5& 19.5 $\pm$1.5 & \textbf{24 $\pm$ 3.4} \\
    CrazyClimber&43458$\pm$7709&59546$\pm$3164&\textbf{64191$\pm$5196}&59006$\pm$3282&60716$\pm$4082\\
    DemonAttack&120.7$\pm$18.2&136.3 $\pm$24.4& 132 $\pm$ 14.7& 133.7 $\pm$ 26 & \textbf{141.8 $\pm$ 12.5} \\
    Frostbite&260.3$\pm$2.5& 254.6$\pm$5.5& 254.4$\pm$3.6 & 263.2$\pm$ 2.1 & \textbf{274.2 $\pm$ 11}\\
    JamesBond&\textbf{245.6$\pm$11.2}&202.2$\pm$65.2 & 182.5 $\pm$ 56.2 & 160.3 $\pm$ 68.4 &235.6 $\pm$ 50.2 \\
    Kangaroo&576$\pm$330&\textbf{2201$\pm$993} & 837.5 $\pm$ 345 & 1297$\pm$ 321& 1186.5$\pm$168 \\
    Krull&4532$\pm$883&3117$\pm$781 & 4818 $\pm$440 &5185 $\pm$ 991 &\textbf{5335$\pm$455} \\
    Qbert&2583$\pm$746&1953$\pm$ 674& 932 $\pm$148 & \textbf{3333 $\pm$ 575} &2764 $\pm$ 783 \\
    RoadRunner &2385$\pm$888& 7178$\pm$1227 & 4853 $\pm$ 1322 & 6070$\pm$1834 &\textbf{7799 $\pm$ 1296} \\
    Seaquest &321.6$\pm$52& \textbf{644.4 $\pm$ 91.6} & 608.5 $\pm$ 144.9&644.2 $\pm$56.9 & 617.5 $\pm$ 118.1 \\
\midrule
        \textbf{HNS}& 0.52 & 0.56 & 0.65  & 0.69 & \textbf{0.77} \\
      \textbf{IQM}& 0.22 & 0.29 & 0.26  & 0.29 & \textbf{0.4} \\
      \textbf{Vs SimPLe(30) (W/L)} & -  &  8W,4L&9W,3L &11W,1L& 11W,1L\\
    \bottomrule
  \end{tabular}
\end{table*}

\subsection{Ablation Studies}
We also conduct ablation studies by equipping SimPLe(30) with just one of the three EVaDE layers to ascertain whether all of them aid in exploration. We do this by just removing the other two layers from the EVaDE environment network model (see Figure \ref{fig:evade_arch}).  Note that reward models that do not equip the noisy event interaction filter,  also do not apply the Gaussian multiplicative dropout to the sixth de-convolutional layer.

We use a randomly selected suite of 12 Atari games in our ablation study. The games were chosen by arranging the 26 games of the suite in the alphabetical order, and then using the numpy \cite{harris2020array} random function to sample 12 numbers from 0 to 25 without replacement. The corresponding games were then picked. Coincidentally, the chosen test suite contains easy exploration games such as Kangaroo, RoadRunner and Seaquest as well as BankHeist, Frostbite and Qbert, which are hard exploration games \citep{bellemare2016unifying}.

The mean scores, HNS and IQM achieved when SimPLe(30) is equipped with only one type of noisy convolutional layer and those of SimPLe(30) and EVaDE-SimPLe are presented in Table \ref{tab:EVADE_ablation}. 

\begin{figure*}[t!]
{ \scriptsize
\centering
\hspace*{\fill}
\captionsetup[subfigure]{labelformat=empty}
\subfigure{
\includegraphics[width=0.153\textwidth]{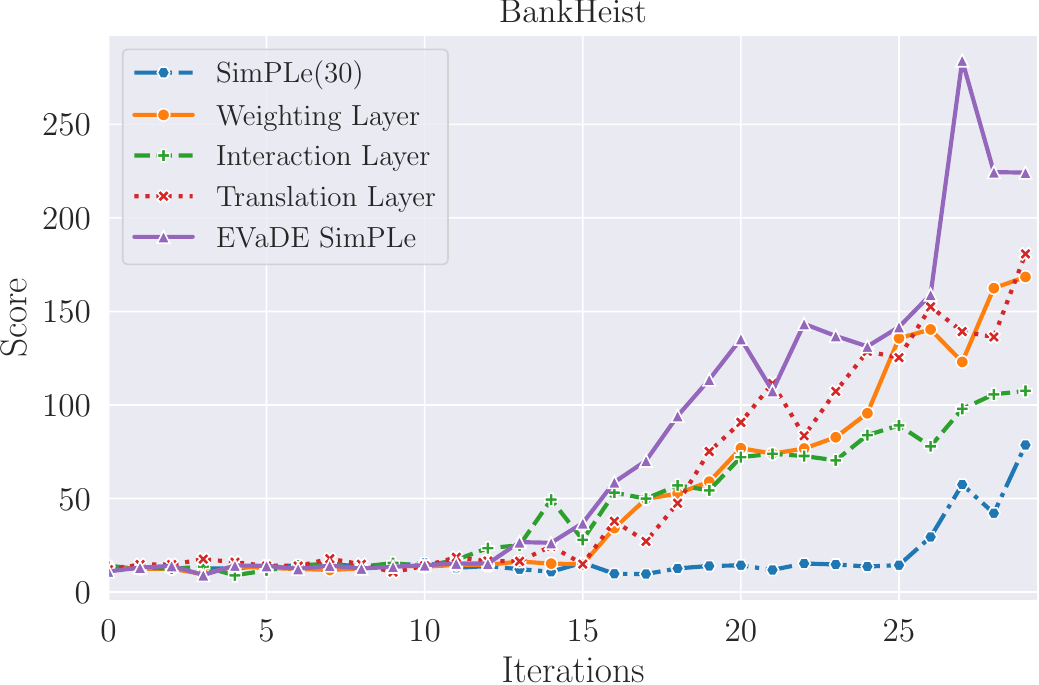}
}
\hfill
\subfigure{
\includegraphics[width=0.153\textwidth]{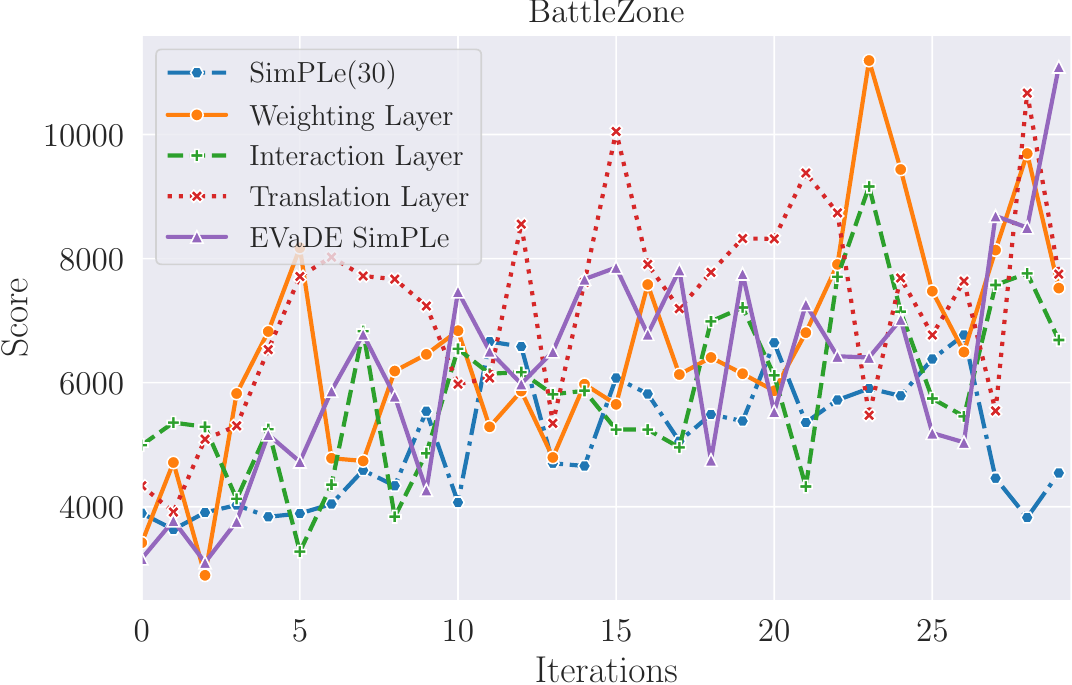}
}
\hfill
\subfigure{
\includegraphics[width=0.153\textwidth]{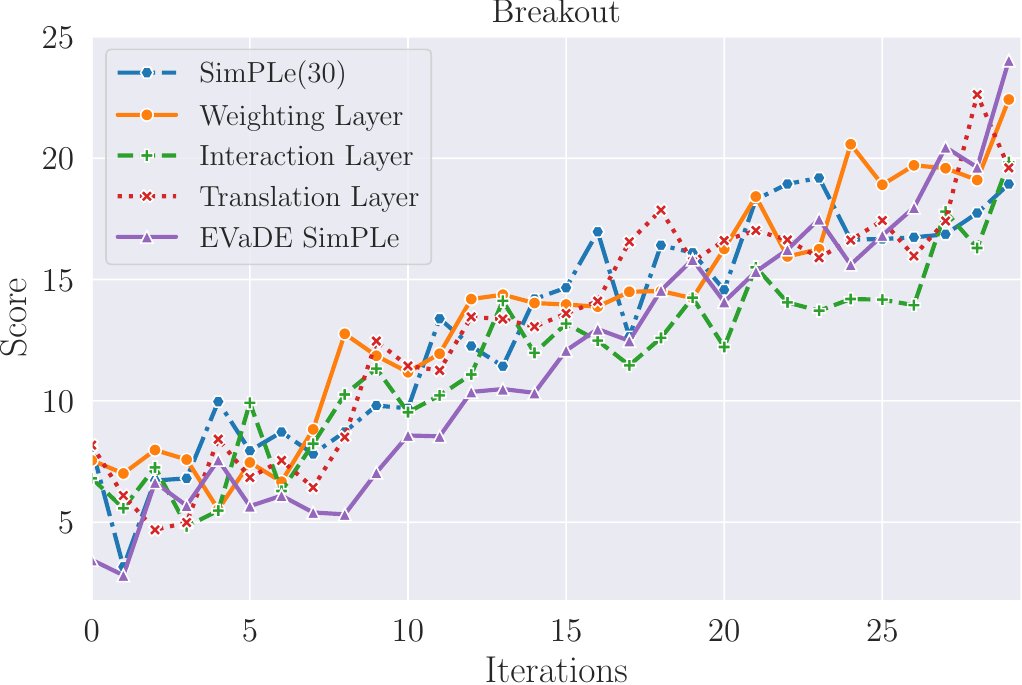}
}
\hfill
\subfigure{
\includegraphics[width=0.153\textwidth]{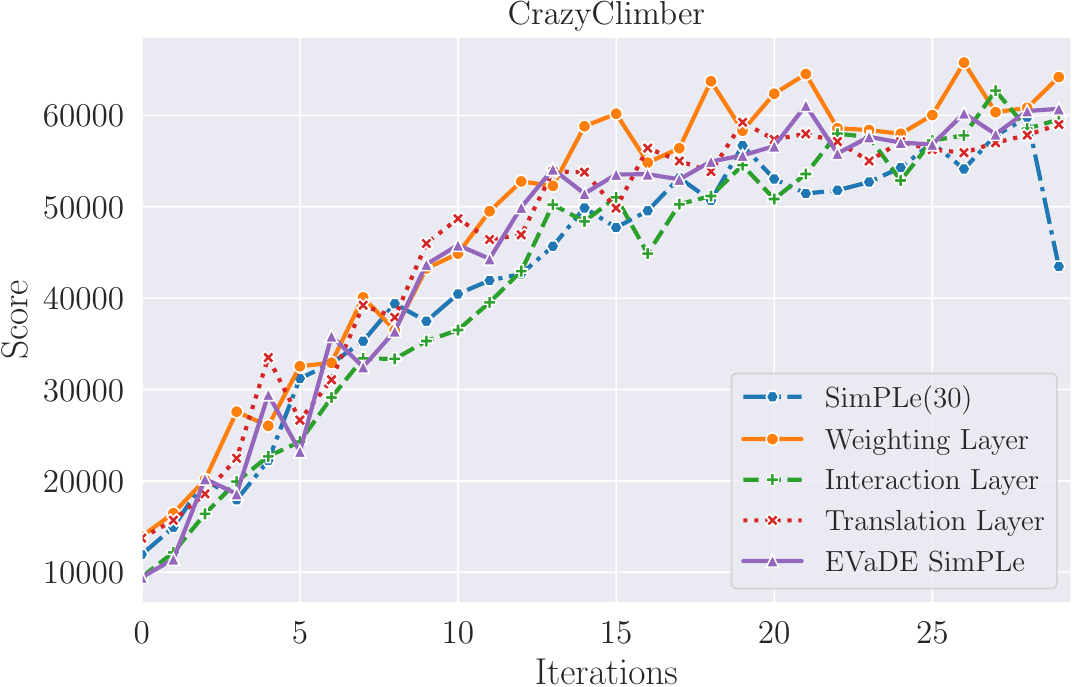}
}
\hfill
\subfigure{
\includegraphics[width=0.153\textwidth]{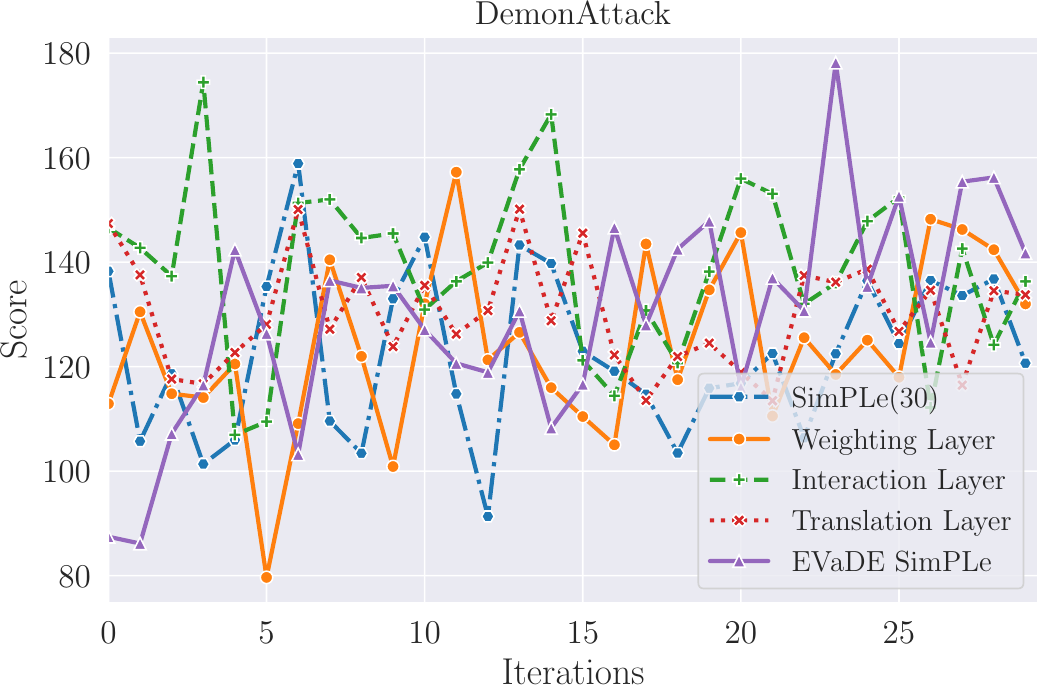}
}
\hfill
\subfigure{
\includegraphics[width=0.153\textwidth]{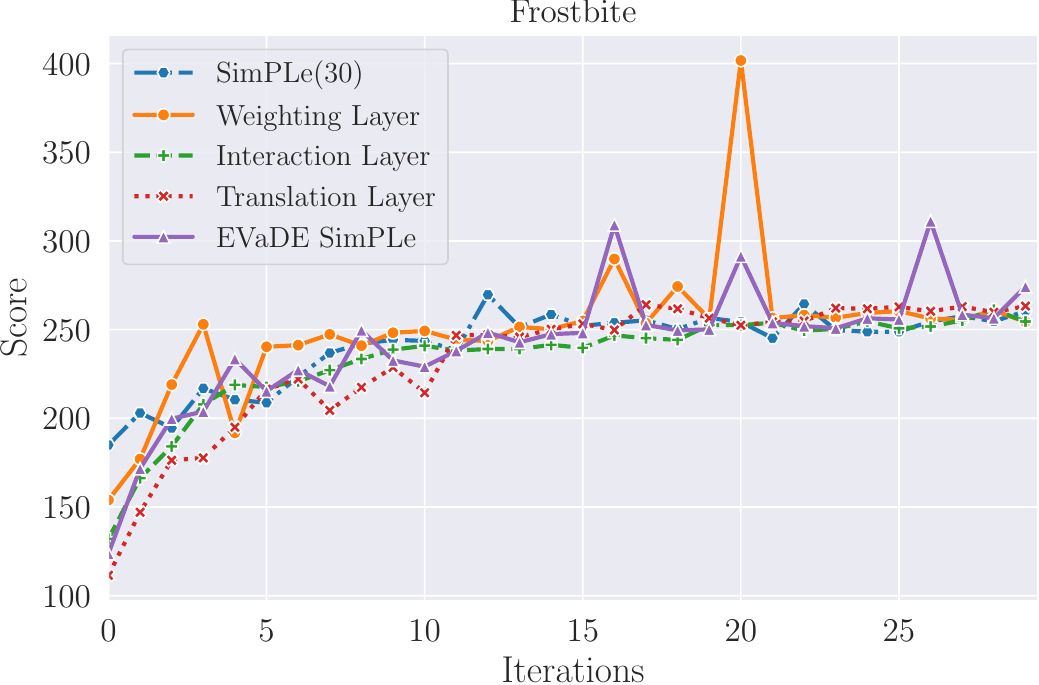}
}
\hspace*{\fill}
\newline
\subfigure{
\includegraphics[width=0.153\textwidth]{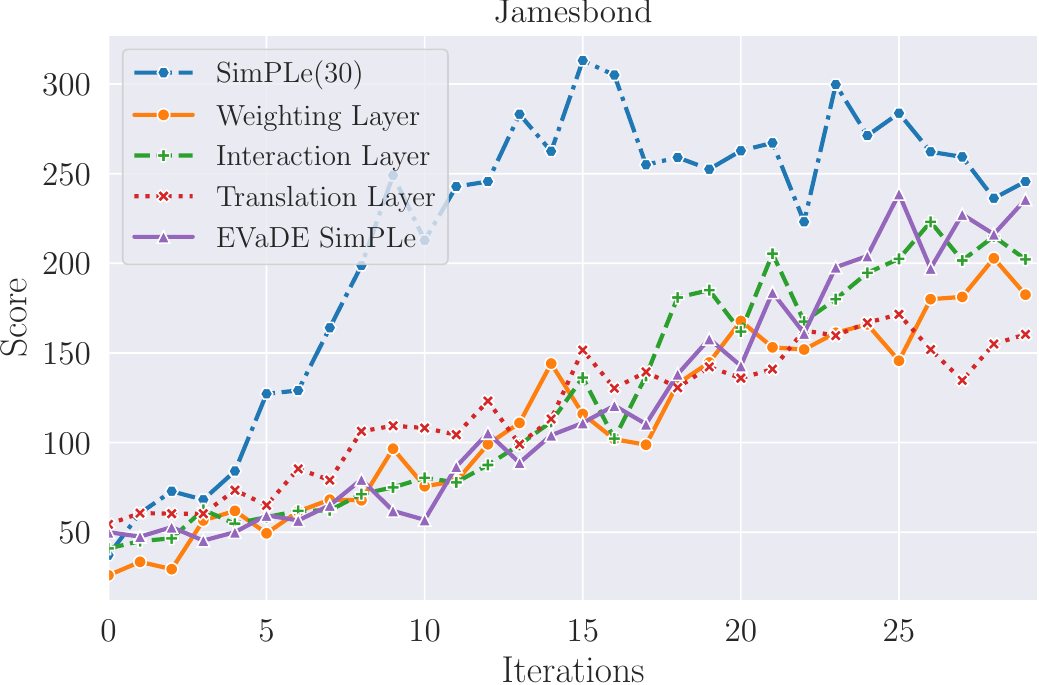}
}
\subfigure{
\includegraphics[width=0.153\textwidth]{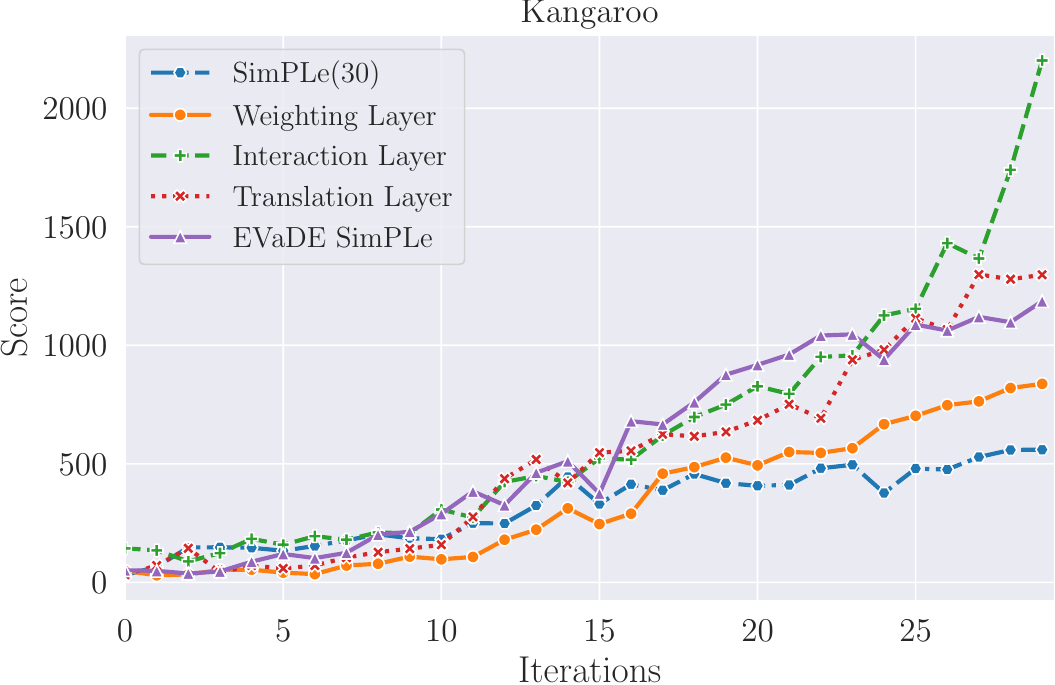}
}
\subfigure{
\includegraphics[width=0.153\textwidth]{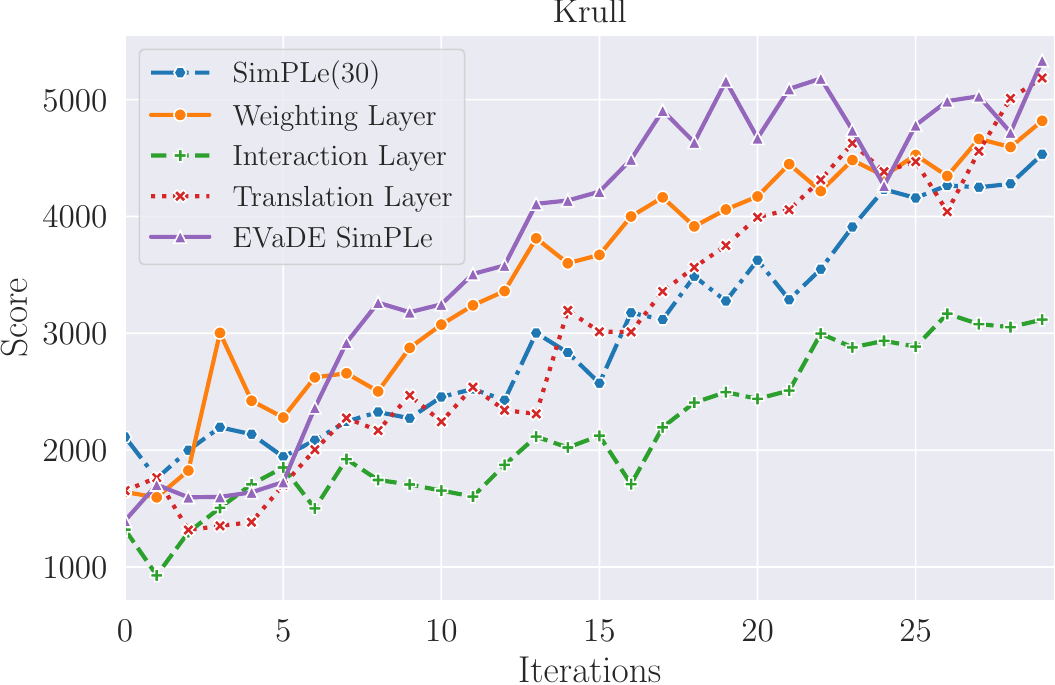}
}
\subfigure{
\includegraphics[width=0.153\textwidth]{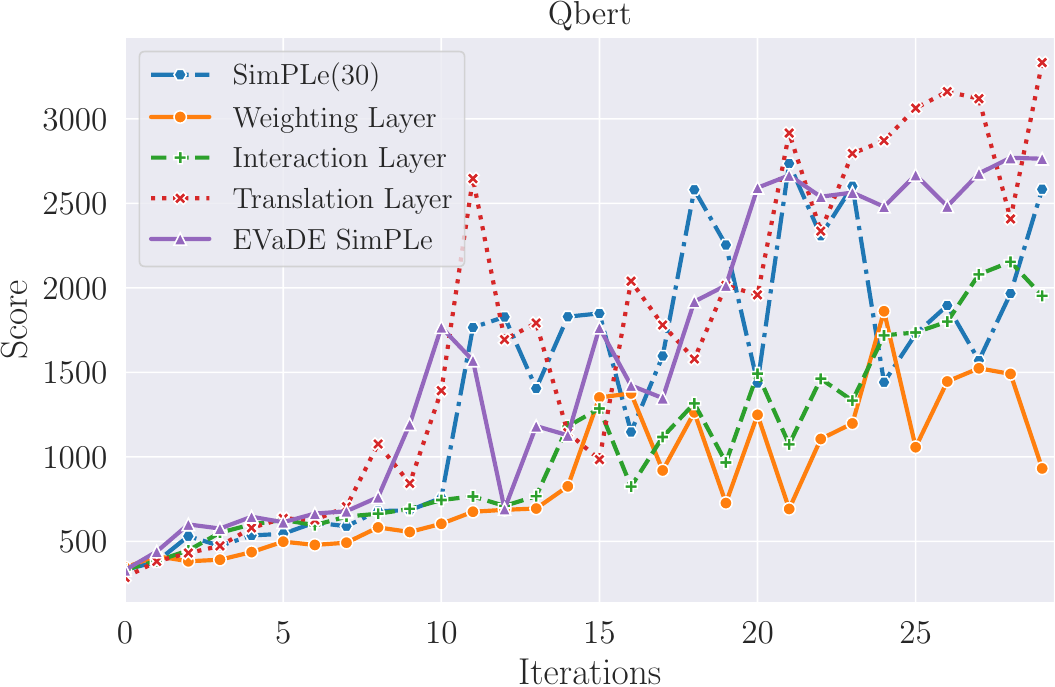}
}
\subfigure{
\includegraphics[width=0.153\textwidth]{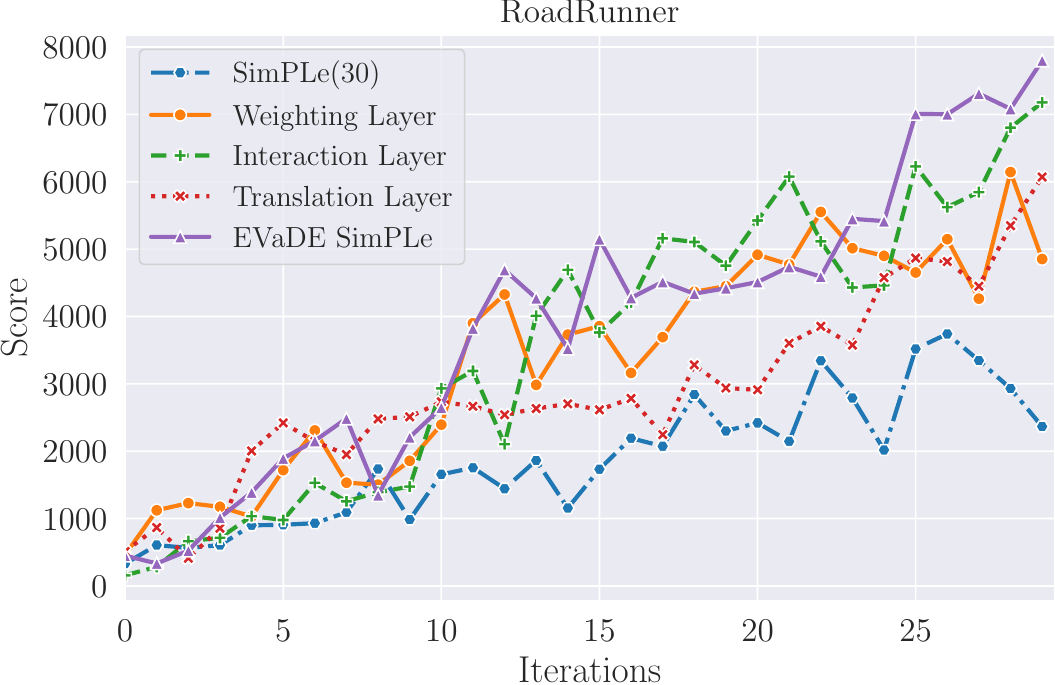}
}
\subfigure{
\includegraphics[width=0.153\textwidth]{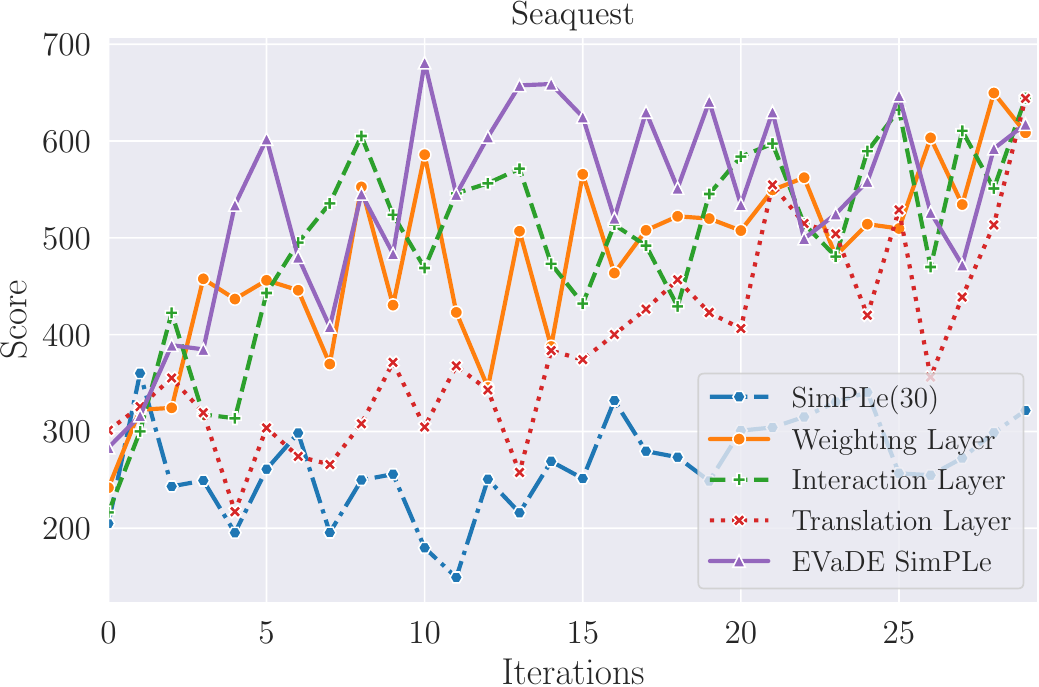}
}
}
\caption{Learning curves of EVaDE-SimPLe agents, SimPLe(30) agents and agents which only add one of the EVaDE layers trained on the 12 game subset of the  Atari 100K test suite.}
\label{fig:evade_exp2}
\end{figure*} 

We present the learning curves of the trained EVaDE and SimPLe(30) agents in  Figure \ref{fig:evade_exp2}. We omit the error bars here for clarity.  Looking at the learning curves presented, it can possibly be said that an increase in scores of SimPLe(30) equipped  with one of the EVaDE layers at a particular iteration  would mean an increase in scores of EVaDE-SimPLe, albeit in later iterations. This pattern can clearly be seen in the games of BankHeist, Frostbite, Kangaroo, Krull and Qbert. This delay in learning could possibly be attributed to the agent draining its interaction budget by exploring areas suggested by one of the layers that are ineffective for that particular game. However, we hypothesise that since all the layers provide different types of exploration, their combination is more often helpful than wasteful. This is validated by the fact that EVaDE-SimPLe achieves higher mean HNS and IQM than any other agent in this study.

We also look at the policies learnt by these agents in RoadRunner, where every EVaDE layer seems to improve the performance of the SimPLe agent even when added individually. We find that the vanilla-SimPLe RoadRunner agent learns a suboptimal policy where it frequently either collides with a moving obstacle or gets caught by the coyote. When the agent adds noisy interaction layers into its reward models, the policy learns to trick the coyote into colliding with the obstacle, an interaction beneficial to the roadrunner. On the other hand, when the reward models use only the noisy translation layers, the agent seems to learn a less \textit{risky} policy, as it aggressively keeps away from both the obstacle as well as the coyote. The agent that adds only the noisy weighting layers seems to prioritize collecting points. This allows the coyote to get near the roadrunner, which could be undesirable. The policy learnt by the EVaDE-SimPLe agent combines the properties of the agents that add only the interaction and translation layers, as it tricks the coyote to colliding with the obstacle, while keeping a safe distance from both. The behaviour of the agent in this game provides some evidence that EVaDE layers can allow us to design different types of exploration based on our prior knowledge. We include videos of one episode run of each agent in the supplementary. 

From Table \ref{tab:EVADE_ablation}, it can be seen that individually, each filter achieves a higher HNS than SimPLe(30), thus indicating that, on average, all the filters help in aiding exploration. Moreover, we see that with the exception of the noisy event interaction layer, the increase in HNS when the other two EVaDE layers are added individually is considerable. Additionally, all agents that add any of the noisy convolutional layers achieve higher IQM scores than baseline SimPLe agents.  Furthermore, we hypothesise that all the layers provide different types of exploration since their combination is more often helpful than wasteful. This is validated by the fact that EVaDE-SimPLe achieves a higher mean HNS and IQM than any other agent in this study.

While having more parameters enlarges the class of functions representable by the model used by EVaDE agents, we emphasize that it is the design of the layers and not the higher number of parameters that is the reason for the performance gains. The agents that include only the noisy translation layers outperform SimPLe(30) on all metrics but add just 4K parameters to the reward model which contains about 10M parameters. Similarly, EVaDE-SimPLe adds only 0.43M parameters for its improvements. 

\subsection{Visualizations of the EVaDE layers}
\label{sec:vis}
We also present some visualizations that help us understand the functionality of the EVaDE layers. All these visualizations were obtained after the completion of training, with the learned weights and variances of the final trained model.

In Figure \ref{fig:nil} we show illustrations of output map of a noisy event interaction layer detecting interactions between the right facing green-coloured enemy ships and the right facing blue-coloured divers given different input images from the game of Seaquest. We also show two input feature maps, which seem to capture the positions of these objects at the same locations. We observe that the pixels in the output feature map in Figure \ref{fig:nil}d are brighter at the locations where the two objects are close to each other, whereas in the same feature map these pixels are dimmer when the two objects are separated by some distance (Figure \ref{fig:nil}h).

\begin{figure}
{\scriptsize
\centering
\hspace*{\fill}
\subfigure[{\scriptsize Input Frame 1}]{
\includegraphics[width=0.15\linewidth, height=2.5cm]{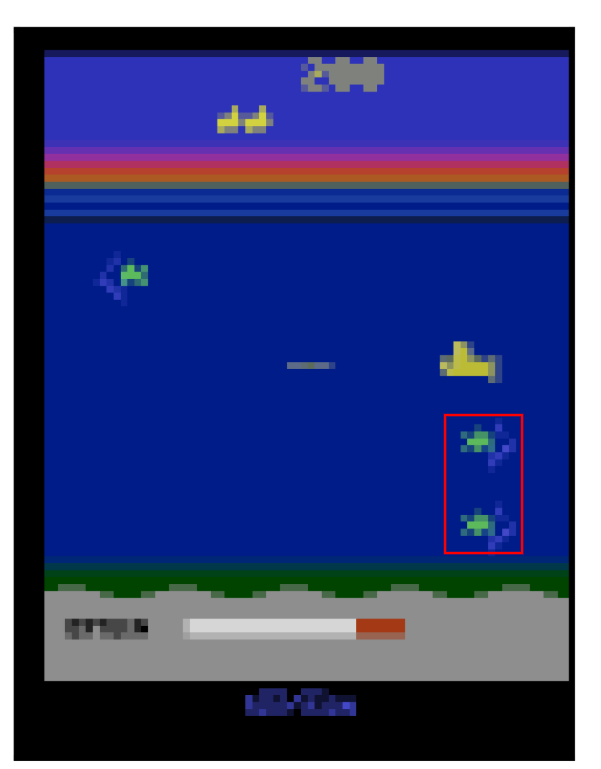}
}
\hfill
\subfigure[{\scriptsize Input Map $1^1$}]{
\includegraphics[ width=0.14\linewidth, height=2.5cm]{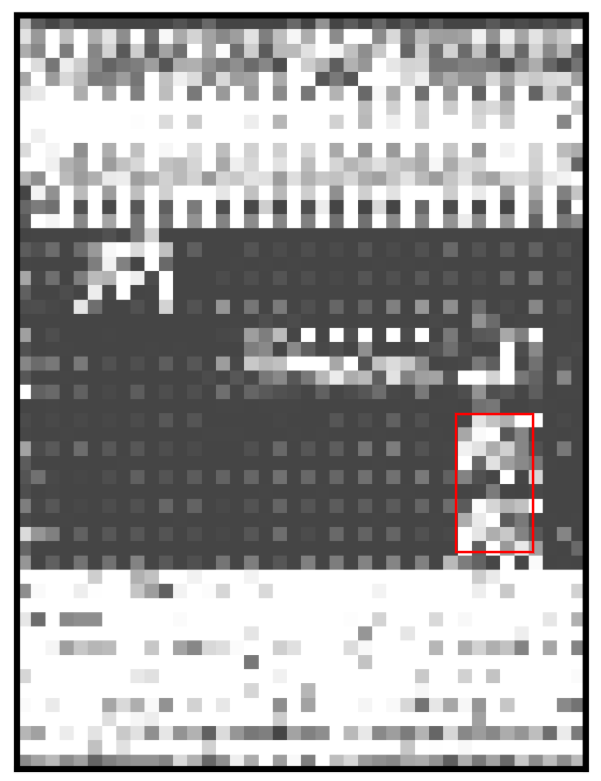}
}
\hfill
\subfigure[{\scriptsize Input Map $1^2$}]{
\includegraphics[ width=0.14\linewidth, height=2.5cm]{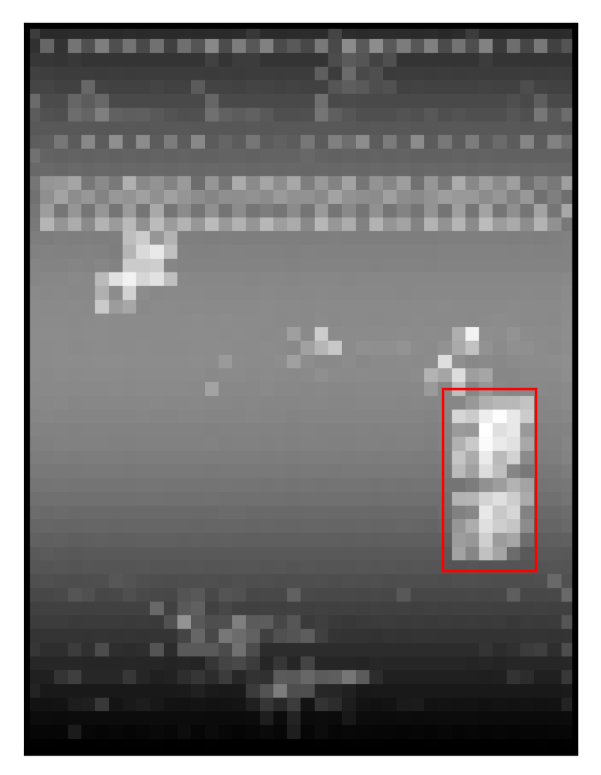}
}
\hfill
\subfigure[{\scriptsize Output Map 1}]{
\includegraphics[ width=0.15\linewidth, height=2.5cm]{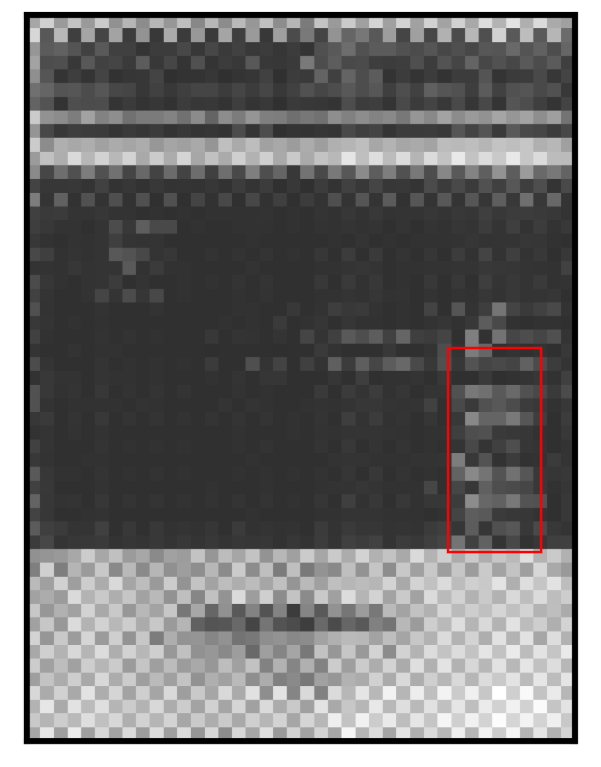}
}
\hspace*{\fill}
\newline
\hspace*{\fill}
\subfigure[\scriptsize Input Frame 2]{
\includegraphics[ width=0.15\linewidth, height=2.5cm]{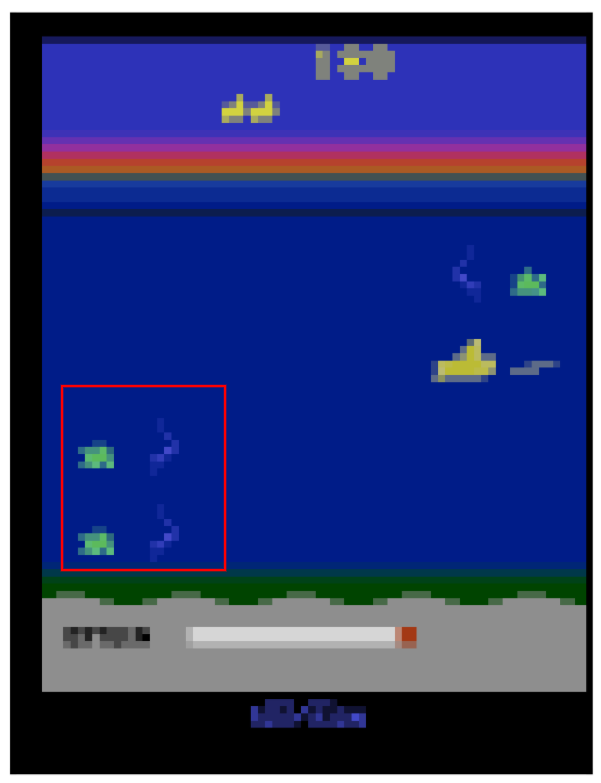}
}
\hfill
\subfigure[\scriptsize Input Map $2^1$]{
\includegraphics[ width=0.14\linewidth, height=2.5cm]{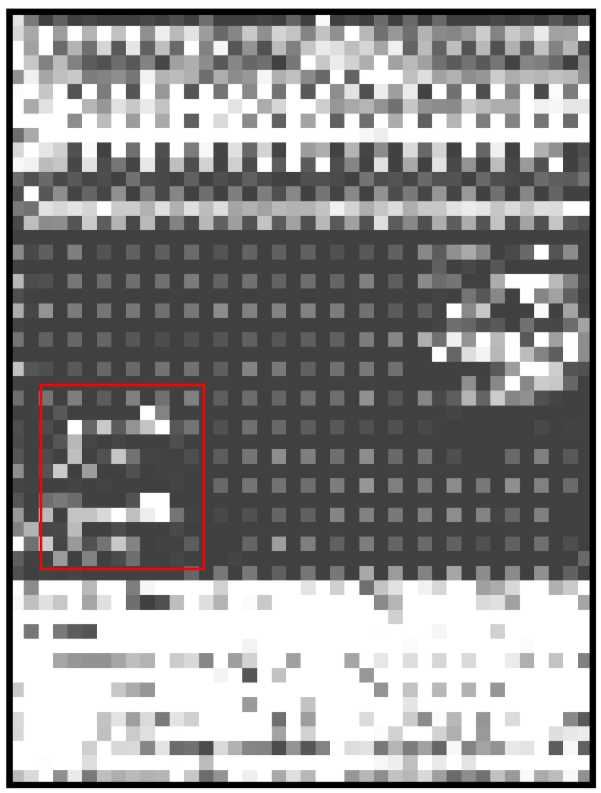}
}
\hfill
\subfigure[\scriptsize Input Map $2^2$]{
\includegraphics[ width=0.14\linewidth, height=2.5cm]{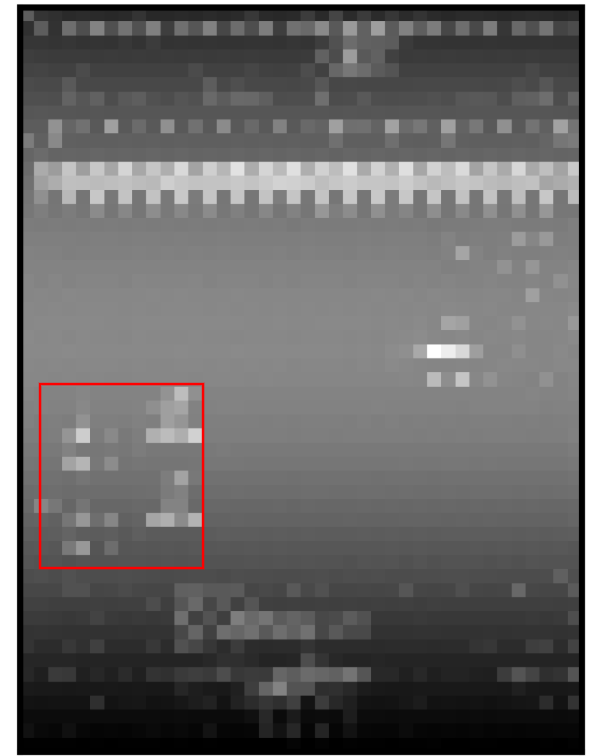}
}
\hfill
\subfigure[\scriptsize Output Map 2]{
\includegraphics[ width=0.15\linewidth, height=2.5cm]{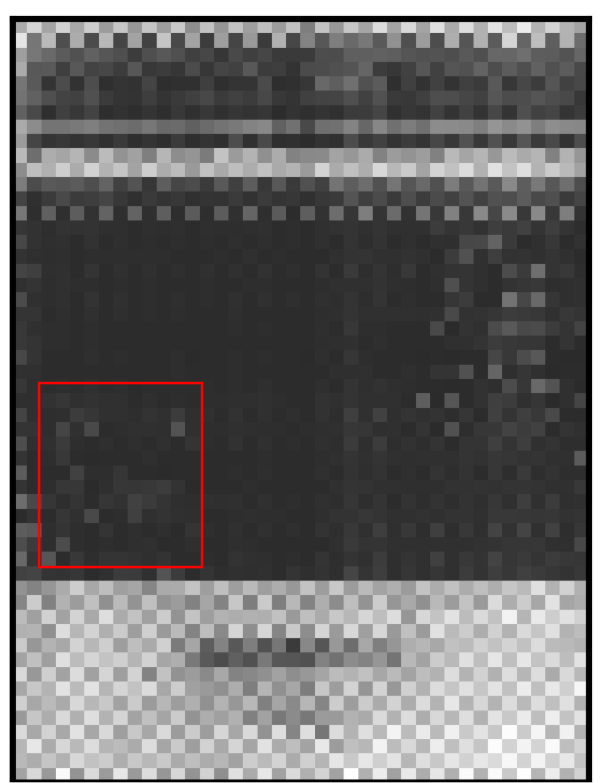}
}
\hspace*{\fill}
}
\caption{This figure shows the output map that captures interactions between two input maps when passed through the noisy event interaction layer.}

\label{fig:nil}
\end{figure}

\begin{figure}
\centering
{\scriptsize
\hspace*{\fill}
\subfigure[{\scriptsize Input Frame}]{
\includegraphics[width=0.14\linewidth, height=2.5cm]{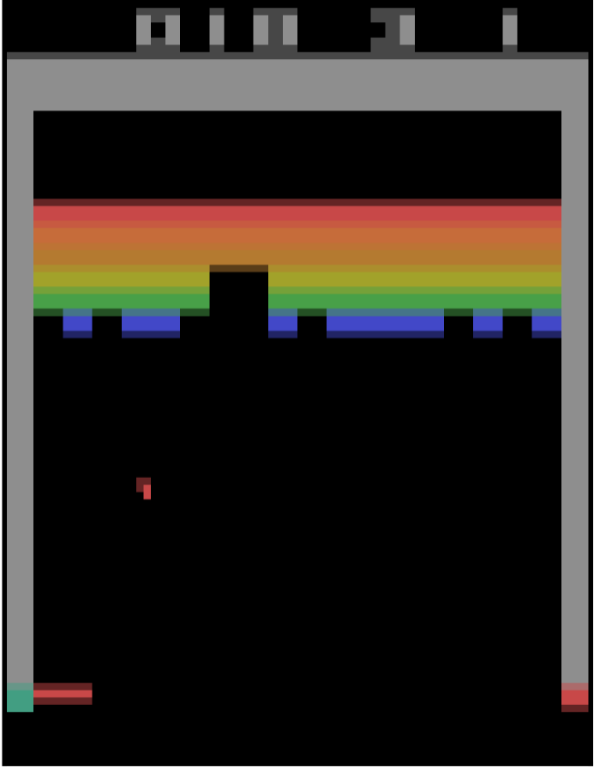}
}
\hfill
\subfigure[{\scriptsize Input Map}]{
\includegraphics[width=0.14\linewidth, height=2.5cm]{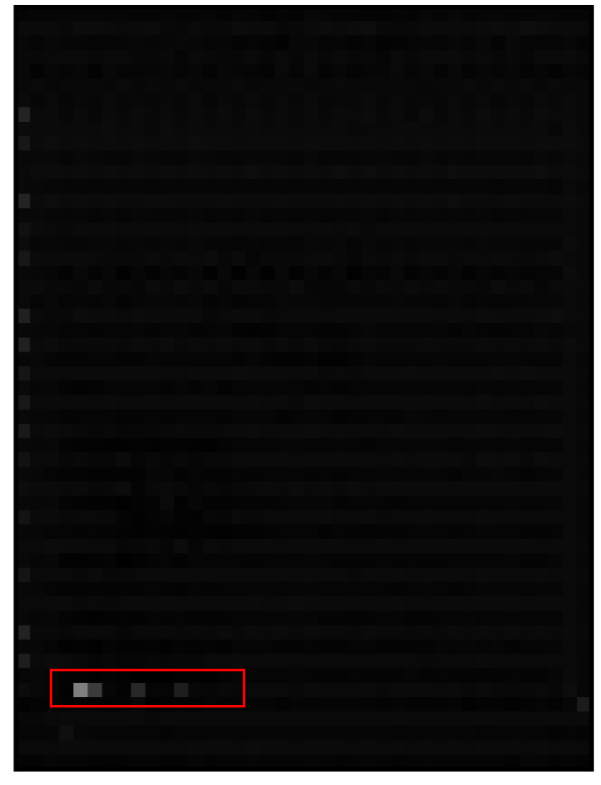}
}
\hfill
\subfigure[{\scriptsize Output Map}]{
\includegraphics[width=0.14\linewidth, height=2.5cm]{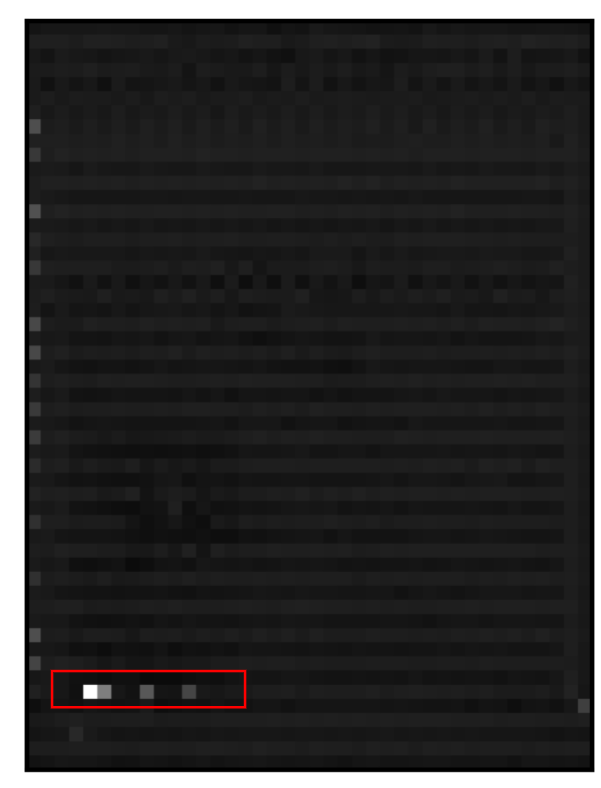}
}
\hspace*{\fill}
}
\caption{This figure shows an output map (channel) that up-weights the corresponding input feature map when passed through the noisy event weighting layer.}
\label{fig:nwl_up}
\end{figure}
\begin{figure}
\centering
{\scriptsize
\hspace*{\fill}
\subfigure[{\scriptsize Input Frame}]{
\includegraphics[width=0.14\linewidth, height=2.5cm]{figures/visualizations/Breakout_weighting_1_up.png}
}
\hfill
\subfigure[{\scriptsize Input Map}]{
\includegraphics[width=0.14\linewidth, height=2.5cm]{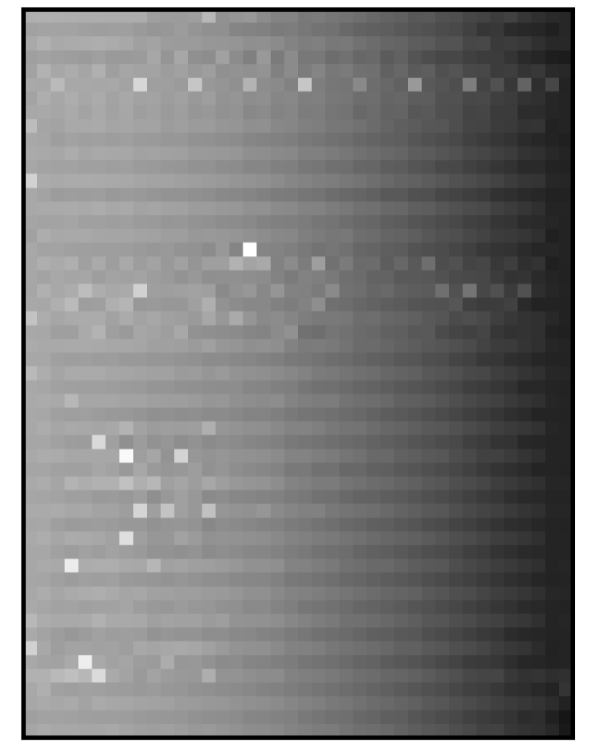}
}
\hfill
\subfigure[{\scriptsize Output Map}]{
\includegraphics[width=0.14\linewidth, height=2.5cm]{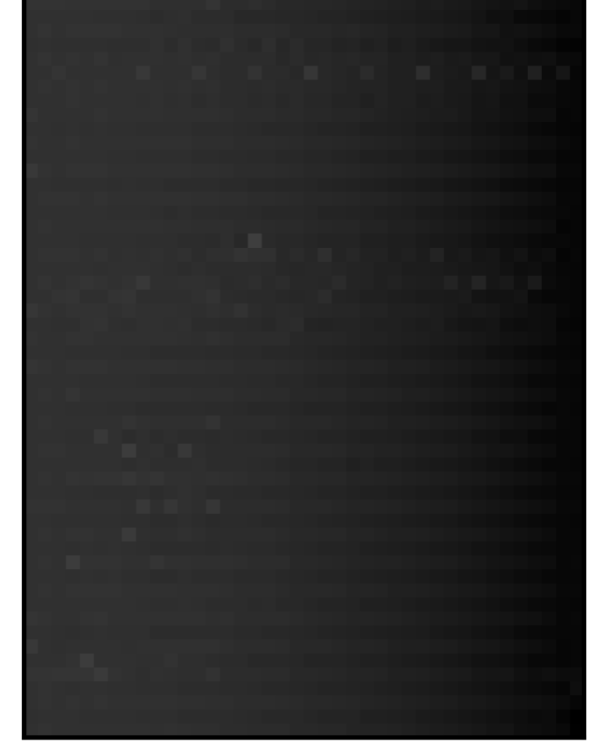}
}
\hspace*{\fill}
}
\caption{This figure shows an output map (channel) that down-weights the corresponding input feature map when passed through the noisy event weighting layer.}
\label{fig:nwl_down}
\end{figure}
In Figures \ref{fig:nwl_up} and \ref{fig:nwl_down}, we show two feature maps before and after passing them through a noisy event weighting layer. The input images for these visualizations were taken from the game of Breakout. The input feature maps to the noisy event weighting layer seem to capture the objects from the input image. The output feature map in Figure \ref{fig:nwl_up} is an upweighted version of its input, as the pixels seem to be brighter. On the other hand, the output feature map in Figure \ref{fig:nwl_down} seems to down-weight its input feature map, as the pixels seem a lot dimmer. The weighting factors for the input-output pairs shown in Figures \ref{fig:nwl_up} and \ref{fig:nwl_down} are 1.93 and 0.57 respectively. 
In Figure \ref{fig:ntl}, we show the input and output feature maps of a game state from Krull, before and after passing it through a noisy event translation layer. The input feature map seems to capture different objects from the input image. The translation effect in output feature map can be seen clearly as every light pixel in the input seems to have lightened up the pixels to its top, bottom, left and right to different degrees.

\section{Conclusion}
In this paper, we introduce Event-based Variational Distributions for Exploration (EVaDE), a set of variational distributions over reward functions. EVaDE consists of three noisy convolutional layers: the noisy event interaction layer, the noisy event weighting layer, and the noisy event translation layer. These layers are designed to generate trajectories through parts of the state space that may potentially give high rewards, especially in object-based domains. We can insert these layers between the layers of the reward network models, inducing variational distributions over the model parameters. The dropout mechanism then generates perturbations on object interactions, importance of events, and positional importance of objects/events. We draw samples from these variational distributions and generate simulations to train the policy of a Simulated Policy Learning (SimPLe) agent.

\begin{figure}
\centering
{\scriptsize
\hspace*{\fill}
\subfigure[{\scriptsize Input Frame}]{
\includegraphics[width=0.14\linewidth, height=2.5cm]{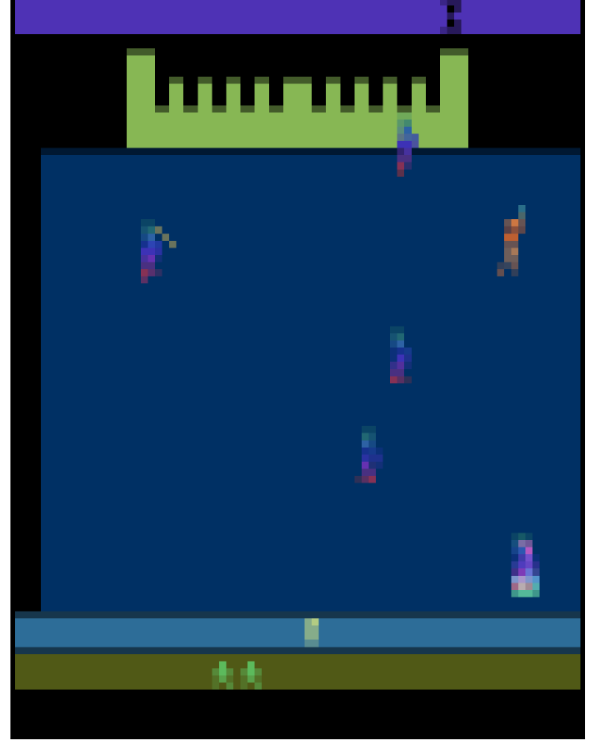}
}
\hfill
\subfigure[{\scriptsize Input Map}]{
\includegraphics[width=0.14\linewidth, height=2.5cm]{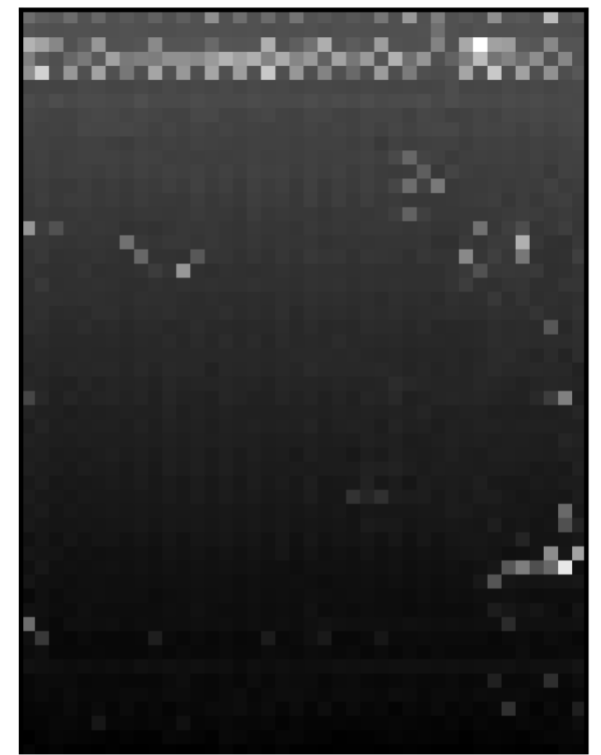}
}
\hfill
\subfigure[{\footnotesize Output Map}]{
\includegraphics[width=0.14\linewidth, height=2.5cm]{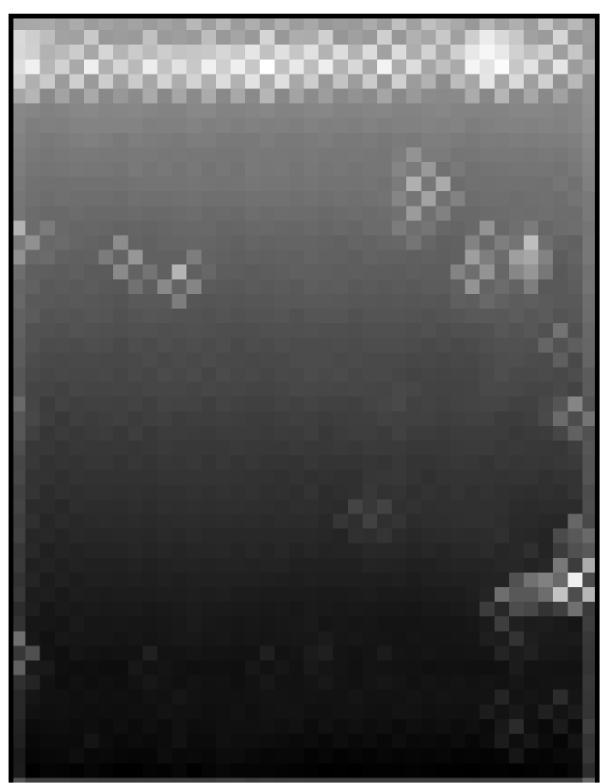}
}
\hspace*{\fill}
}
\caption{This figure shows the function of the noisy translation layer. The output map translates the input pixels to its top, bottom, left and right to different degrees}
\label{fig:ntl}
\end{figure}

We conduct experiments on the Atari 100K test suite, a test suite comprising 26 games where the agents are only allowed 100K interactions with the real environment. EVaDE-SimPLe agents outperform vanilla-SimPLe(30) on this test suite by achieving a mean HNS of 0.682, which is 30\% more than the mean HNS of 0.525 achieved by vanilla-SimPLe(30) agents. EVaDE-SimPLe agents also surpass human performances in five games of this suite. Additionally, these agents also outperform SimPLe(30) on median HNS as well as IQM scores, which is a  metric that is resilient to outlier runs and games. Furthermore, a paired-t test also confirms the statistical significance of the improvements in HNS on the test suite ($p = 3 \times 10^{-3}$). We also find, through an ablation study, that each noisy convolutional layer, when added individually to SimPLe results in a higher mean HNS and IQM. Additionally, the three noisy layers complement each other well, as EVaDE-SimPLe agents, which include all three EVaDE layers,  achieve higher mean HNS and IQM than agents which add only one noisy convolutional layer. Finally, we also present visualizations that help us understand the functionality of the EVaDE layers.







\clearpage
\appendix
\section{EVaDE-SimPLe as an approximation of PSRL}
\begin{algorithm*}[hbtp!]
    \caption{Approximate PSRL with EVaDE equipped Simulated Policy Learning}
     Initialize agent policy, environment model and reward model dropout parameters $\theta_\pi, \theta_{env}, \sigma_{rew}$ respectively;
     
    Initialize empty real environment interaction dataset  $D_{real} \leftarrow \{\}$;
    
    \For{iteration in $1 \cdots T$}{
        $s \leftarrow \emptyset$\;
        
        \tcp{Interact with real environment}      
        \While{$k_{real}$ real-world interactions not collected}{
            \If{$s$ is terminal or $\emptyset$}{
                Start new episode, initialize start state $s$\;
            }
            Sample action $a \sim \text{Policy}(s, \theta_\pi)$\;
            
            $(s',r) \leftarrow \text{Interact\_Real\_World}(s, a)$\;
            
            $D_{real} \leftarrow D_{real} \cup \{(s,a,r,s')\}$, $s \leftarrow s'$\;
        }
        \tcp{Learn a variational posterior}
        $\theta_{env}, \sigma_{rew} \leftarrow \text{Supervised\_Learn}(\theta_{env}, \sigma_{rew}, D_{real})$\;
        
        \tcp{Draw a sample from the posterior}
        \For{layer $i$ in environment model}{
            \If{$i$ is an EVaDE layer}{
                Sample $\epsilon^i  \sim N(0,1)$\;
                
                $\Tilde{\theta}^i_{env} \leftarrow \theta^i_{env}(1 + \sigma^i_{rew} \epsilon^i)$\;
            }
            \Else{
                $\Tilde{\theta}^i_{env} \leftarrow \theta^i_{env}$\;
            }
        }

        \tcp{Train policy with environment sample}
        $s \leftarrow \emptyset$, $D_{sim} \leftarrow \emptyset$, steps $\leftarrow 0$\;
        
        \While{$k_{sim}$ interactions not completed}{
            \If{$s$ is terminal or $\emptyset$}{
                Start new episode, initialize start state $s$\;
            }
            Sample action $a \sim \text{Policy}(s, \theta_\pi)$\;
            
            $(s',r) \leftarrow \text{Interact\_Env\_Sample}(\Tilde{\theta}_{env},s, a)$\;
            
            $D_{sim} \leftarrow D_{sim} \cup \{(s,a,r,s')\}$, $s \leftarrow s'$, steps $\leftarrow$ steps + 1\;
            
            \If{steps $\bmod$ update\_frequency = 0}{
                Update policy: $\theta_\pi \leftarrow \text{Reinforcement\_Learn}(\theta_\pi,D_{sim})$\;
            }
        }
    }
    \Return $\theta_\pi$\;
    \label{algo:evade}
\end{algorithm*}

We present the pseudocode of EVaDE-SimPLe in Algorithm \ref{algo:evade}. As mentioned in Section \ref{sec:evade}, an EVaDE-SimPLe agent has the same iterative training structure as SimPLe and PSRL. In the first step of each iteration, the agent interacts with the real environment using its latest policy to collect interactions. The agent then updates its posterior distribution over the environment model parameters by jointly optimizing the environment model parameters $\theta_{env}$ which include the parameters of the transition and reward function and the Gaussian dropout parameters of the reward network $\sigma_{rew}$ with the help of supervised learning. A sample from this approximate posterior distribution is then acquired with the help of Gaussian dropout. Subsequently, the agent updates its policy by optimizing the parameters of the policy network, $\theta_\pi$ by interacting with this environment sample. This optimized policy is used by the agent to procure more training interactions by interacting with the

\section{Proof of Theorem \ref{th}}
\label{sec:proof}
We provide the proof for Theorem \ref{th}, which is restated below, in this section. 

\textbf{Theorem \ref{th} :} Let $\mathbbm{n}$ be any neural network. For any convolutional layer $l$, let $m_i(l) \times n_i(l) \times c_i(l)$ and $m_o(l) \times n_o(l) \times c_o(l)$ denote the dimensions of its input and output respectively. Then, any function that can be represented by $\mathbbm{n}$ can also be represented by any network $\Tilde{\mathbbm{n}} \in \Tilde{\mathscr{N}}$, where $\Tilde{\mathscr{N}}$ is the set of all neural networks that can be constructed by adding any combination of EVaDE layers to  $\mathbbm{n}$, provided that, for every EVaDE layer $\Tilde{l}$ added, $\Tilde{l}$ uses a  stride of $1$, $m_i(\Tilde{l}) \leq m_o(\Tilde{l}), n_i(\Tilde{l}) \leq n_o(\Tilde{l})$ and $c_i(\Tilde{l}) \leq c_o(\Tilde{l})$.

\subsection{Notations}
\subsubsection*{Neural Networks}
Any function $f$ represented by a $k$-layer neural network $\mathbbm{n}$ is an ordered composition of the functions $f_1,f_2,\cdots, f_k$ computed by its layers  $N_1, N_2, \cdots N_k$ respectively, i.e., $f = f_k \circ f_{k-1}  \circ \cdots  f_1$.  

\subsubsection*{Convolutional Layers}
Any $m \times n$ convolutional layer $l$ has a total of $m \times n \times c_i(l) \times c_o(l)$ learnable parameters, where $c_i(l)$ and $c_o(l)$ are the number of channels in the input and output of the layer respectively. The parameters of any convolutional layer $l$ can be partitioned into $c_o(l)$ filters, where each filter has $m \times n \times c_i(l)$ parameters, and is responsible for computing one output channel.

We denote the set of parameters of any convolutional layer by $\theta$. We denote the set of parameters of the $k^{th}$ filter by $\theta_k$, and the parameters of the $l^{th}$ channel of this filter by $\theta^l_k$. We denote the $(i,j)^{th}$ parameter of the $l^{th}$ channel of the $k^{th}$ filter by $\theta^{l,i,j}_k$. 
For noisy convolutional layers, we have a learnable Gaussian dropout parameter attached to every parameter of the convolutional layer (see Equation \ref{eq:reparameterization}). We use $\sigma_k$, $\sigma^l_k$ and $\sigma^{l,i,j}_k$ to denote the dropout parameters of the $k^{th}$ filter, the $l^{th}$ channel of the $k^{th}$ filter and the $(i,j)^{th}$ parameter of the $l^{th}$ channel of the $k^{th}$ filter respectively.

\subsubsection*{Strides}
A stride is a hyperparameter of a convolutional layer, that determines the number of pixels of the input that each convolutional filter moves, to compute the next output pixel. 

\subsection{Implications of the constraints in Theorem \ref{th}}
Theorem \ref{th} states that every EVaDE layer $\Tilde{l}$ added uses a stride of 1 and satisfies the constraints  $m_i(\Tilde{l}) \leq m_o(\Tilde{l}), n_i(\Tilde{l}) \leq n_o(\Tilde{l})$ and $c_i(\Tilde{l}) \leq c_o(\Tilde{l})$. This means that for any inserted EVaDE layer, every output dimension is at least as large as its corresponding input dimension. This eventually implies that for every EVaDE layer, all input and output dimensions match, i.e., $m_i(\Tilde{l}) = m_o(\Tilde{l}), n_i(\Tilde{l}) = n_o(\Tilde{l})$ and $c_i(\Tilde{l}) = c_o(\Tilde{l})$. 

To see why, let us assume that the EVaDE layers $\Tilde{l_j}, \cdots \Tilde{l_k}$ are inserted, in order, in between the layers $N_i$ and $N_{i+1}$ of a neural network $\mathbbm{n}$. As  $N_i$ and $N_{i+1}$ are two consecutive layers of $\mathbbm{n}$, we must have $m_i(N_{i+1}) = m_o(N_{i}), n_i(N_{i+1}) = n_o(N_{i})$ and $c_i(N_{i+1}) = c_o(N_{i})$. This implies that the dimensions of the input to layer $\Tilde{l_j}$ match the dimensions of the output of the layer $\Tilde{l_k}$, i.e., $m_i(\Tilde{l_j}) = m_o(\Tilde{l_k}), n_i(\Tilde{l_j}) = n_o(\Tilde{l_k})$ and $c_i(\Tilde{l_j}) = c_o(\Tilde{l_k})$. However, under the constraints imposed in Theorem \ref{th}, every output dimension is greater than or equal to its corresponding input dimension for every EVaDE layer. Thus, matching the output dimensions of $\Tilde{l_k}$ with the input dimensions of $\Tilde{l_j}$ is only possible if the input and output dimensions match for every EVaDE layer $\Tilde{l_j}, \cdots \Tilde{l_k}$ that is added. 

With the above implications, the constraint of using a stride of 1, forces SAME padding for every EVaDE layer, and also ensures that patches centred around every $(i,j)^{th}$ pixel of every channel in the input are used to compute the outputs. This is an important implication that will help us prove the claims that all EVaDE layers can represent the identity transformation. 

\subsection{Claims}

We prove the three following claims by construction, i.e., showing that there is a combination of parameters using which these layers can perform the identity transformation. 
\newline

\begin{claim}
The noisy event interaction layer can represent the identity transformation. 
\end{claim}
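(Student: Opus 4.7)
\medskip

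\textbf{Proof proposal for Claim 1.} The plan is to exhibit a specific setting of the noisy event interaction filter parameters (both the means $\theta$ and the dropout variances $\sigma$) under which the layer computes the identity map on every admissible input. First I would invoke the discussion in Section ``Implications of the constraints''. Under the hypotheses of Theorem \ref{th}, the inserted layer satisfies $m_i=m_o$, $n_i=n_o$, $c_i=c_o=:c$, uses stride $1$, and hence must apply SAME padding. Consequently, for every output coordinate $(i,j,k)$ the layer evaluates the formula in Equation \ref{eq:ev_int}, where the patch $P_{x_{i,j}^{l}}$ is an $m\times m$ window of the $l$-th input channel whose anchor position corresponds to pixel $(i,j)$ of that channel.

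Next I would construct the ``diagonal Kronecker'' parameter setting: for each filter index $k\in\{1,\dots,c\}$ and each channel $l\in\{1,\dots,c\}$ of that filter, let $\theta_k^{l,i,j}=0$ whenever $l\neq k$, and within the $k$-th channel of the $k$-th filter set $\theta_k^{k,i_0,j_0}=1$ at the single anchor position $(i_0,j_0)$ of the patch (the pixel corresponding to $x^k_{i,j}$ itself) and $\theta_k^{k,i,j}=0$ elsewhere. For the dropout variances, set $\sigma_k^{k,i_0,j_0}=0$, and leave the remaining $\sigma$ values arbitrary. Then the perturbed weights become
\begin{equation*}
\tilde\theta_k^{k,i_0,j_0}=1\cdot(1+0\cdot\epsilon)=1,\qquad \tilde\theta_k^{l,i,j}=0\cdot(1+\sigma_k^{l,i,j}\epsilon)=0\ \text{otherwise},
\end{equation*}
so randomness is absorbed by the vanishing mean in every off-diagonal slot.

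Plugging this into Equation \ref{eq:ev_int}, only the term $l=k$ survives the outer sum, and within that term the Hadamard product $\tilde\theta_k^k\odot P_{x_{i,j}^{k}}$ is zero except at the anchor position, where it equals $x^k_{i,j}$. Summing with $\mathbbm{1}_m^\top(\cdot)\mathbbm{1}_m$ therefore yields $y^k_{i,j}=x^k_{i,j}$ for every $(i,j,k)$, i.e.\ the identity. Since an identity layer inserted between any two layers $N_i$ and $N_{i+1}$ of $\mathbbm{n}$ preserves the composition $f_k\circ\cdots\circ f_1$, this shows that $\Tilde{\mathbbm{n}}$ still represents the same function as $\mathbbm{n}$ for this particular parameter choice.

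The only real subtlety, and what I would flag as the main thing to get right, is the bookkeeping for the ``center/anchor'' pixel of the patch when $m$ is even or when the convolution convention shifts the window: the construction uses an arbitrary but fixed anchor $(i_0,j_0)$, and SAME padding with stride $1$ guarantees that the value at this anchor is precisely $x^k_{i,j}$ for every in-range output index $(i,j)$, including boundary pixels where the remaining entries of the patch would have referenced padded (zero) values—these are harmless because the corresponding filter weights are $0$. Everything else is routine once the diagonal-plus-point-mass construction is written down.
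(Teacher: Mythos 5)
Your proposal is correct and matches the paper's own proof essentially exactly: both place a single weight of $1$ at the centre of the $k$-th channel of the $k$-th filter, zero out everything else, and rely on the stride-$1$/SAME-padding implications so that Equation \ref{eq:ev_int} collapses to $y^k_{i,j}=x^k_{i,j}$. Your one refinement—observing that the dropout variances need only vanish at the single nonzero weight, since multiplicative noise on a zero-mean parameter is absorbed—is valid but does not change the argument (the paper simply sets all $\sigma$ to zero); just make sure the anchor is pinned to the patch centre, as a non-central anchor would yield a translation rather than the identity.
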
 
\begin{proof}
Let us assume an $m \times m$ noisy event interaction layer. With the help of the observations made in the previous section, we are ensured of using patches centred around every input $x_{i,j}^{l} \; \forall i,j,l$ and the constraints also ensure that the number of filters in this layer is equal to the number of input channels. 

The identity transformation can be achieved with the following parameter assignments. 
\begin{itemize}
    \item The dropout parameter $\sigma^{l,i,j}_k$ corresponding to every convolutional layer parameter $\theta^{l,i,j}_k$ is set to zero.
    \item The layer parameter corresponding to the central entry of the $k^{th}$ layer of the $k^{th}$ convolutional filter, i.e., $\theta^{k,\ceil{\frac{m}{2}},\ceil{\frac{m}{2}}}_{k}$ is set to 1 $\forall k$.
    \item All other convolutional layer parameters are set to 0. 
\end{itemize}

As stated in Equation \ref{eq:ev_int}, the event interaction layer computes the outputs $y^k_{i,j} \; \forall i,j,k$ using the following equation. 
$$y_{i,j}^k  = \sum\limits_{l=0}^c \mathbbm{1}_{m}^{T} \left(\Tilde{\theta}^{l}_{k} \odot P_{x_{i,j}^{l}}\right)\mathbbm{1}_{m}$$

Applying the above parameter assignments, we get $y_{i,j}^k = x_{i,j}^k$, as the only non-zero parameter in the $k^{th}$ filter, that is set to 1, aligns with $x_{i,j}^k$. This is the required identity transformation.

\end{proof}

\begin{claim}
The noisy event weighting layer can represent the identity transformation. 
\end{claim}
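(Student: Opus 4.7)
The plan is to reproduce the constructive template established in Claim 1, which will be even simpler here because the noisy event weighting layer already fixes most of its parameters to zero by definition and uses $1 \times 1$ filters that sidestep any patch-based bookkeeping. The dimensional implications from the preceding subsection ensure that the number of output channels equals $c$, matching the number of input channels, so there is exactly one filter per input channel, and no padding or stride issues need to be managed.

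First I would invoke the functional form of the layer stated in Equation \ref{eq:ev_we}, $y_{i,j}^k = \tilde{\theta}^k_k\, x_{i,j}^k$, observing that the only learnable convolutional parameter in the $k^{th}$ filter is $\theta^k_k$, while all $\theta^l_k$ with $l \neq k$ are hardwired to $0$ by the construction described in Section \ref{sec:ev_we}. Next I would propose the assignment $\theta^k_k = 1$ and $\sigma^k_k = 0$ for every $k \in \{1,\dots,c\}$, then apply the reparameterization rule of Equation \ref{eq:reparameterization} to deduce $\tilde{\theta}^k_k = 1 \cdot (1 + 0 \cdot \epsilon^k) = 1$ irrespective of the sampled noise $\epsilon^k$. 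Substituting back into Equation \ref{eq:ev_we} yields $y_{i,j}^k = x_{i,j}^k$ for all $i,j,k$, which is exactly the identity transformation acting channelwise and pixelwise.

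I expect no substantive obstacle here; the only subtle point worth flagging in the written proof is that setting $\sigma^k_k = 0$ is what neutralises the Gaussian multiplicative dropout and makes the identity hold deterministically rather than merely in expectation. This deterministic property is the one actually needed to guarantee that the composed network $\tilde{\mathbbm{n}}$ implements the same function as $\mathbbm{n}$ for every realization of the noise, and hence to match the statement of Theorem \ref{th}. I would therefore close by remarking that, because the construction fixes both the mean and the variance parameters, the identity representation is achieved exactly, not approximately.
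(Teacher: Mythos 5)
Your proposal is correct and follows essentially the same constructive argument as the paper: set $\theta^k_k = 1$ and $\sigma^k_k = 0$ for all $k$, note that the off-diagonal parameters are already zero by construction, and substitute into Equation \ref{eq:ev_we} to obtain $y_{i,j}^k = x_{i,j}^k$. Your added remark that zeroing the variance makes the identity hold for every noise realization (not merely in expectation) is a worthwhile clarification that the paper leaves implicit.
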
 
\begin{proof}
The noisy event weighting layer uses $c$ $1 \times 1$ convolutional filters, where $c$ is the number of input channels. Consequently, $\theta^k_k$, is just a single trainable parameter instead of a grid of trainable parameters as in the other two EVaDE layers. 

The identity transformation can be achieved with the following parameter assignments. 
\begin{itemize}
    \item The dropout parameter $\sigma^{l}_k$ corresponding to every convolutional layer parameter $\theta^{l}_k$ is set to zero.
    \item The layer parameter corresponding to the $k^{th}$ layer of the $k^{th}$ convolutional filter, i.e., $\theta^{k}_{k}$ is set to 1 $\forall k$.
    \item All other convolutional layer parameters are set to 0. 
\end{itemize}
This is a valid assignment, as the only parameters set to 1 are trainable, while the other parameters are forced to be set to 0 by construction (see Section \ref{sec:ev_we}).

As stated in Equation \ref{eq:ev_we}, the event interaction layer computes every output $y^k_{i,j}$ using the following equation. 
$$y_{i,j}^k  =  \Tilde{\theta}^{k}_{k}x_{i,j}^{k}$$

Setting $\theta^k_k = 1$ and $\sigma^k_k = 0 \; \forall k$, yields  $y_{i,j}^k = x_{i,j}^k \; \forall i,j,k$ , which is the identity transformation required. 
\end{proof}

\begin{claim}
The noisy event translation layer can represent the identity transformation. 
\end{claim}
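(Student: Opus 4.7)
The plan is to prove Claim 3 by construction, mirroring the template used for the previous two claims: exhibit a specific assignment of parameters and dropout variances for the noisy event translation layer that makes its output equal to its input pixel-wise and channel-wise. The key structural fact I will exploit is that, under the hypotheses of Theorem \ref{th}, the layer has stride $1$, input and output dimensions agree, and patches $P_{x_{i,j}^k}$ are centred at every input location $(i,j)$, so Equation \ref{eq:ev_tr} applies at every output coordinate without any edge-case padding trouble.

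The construction is as follows. First, set every Gaussian dropout parameter $\sigma_k^{l,i,j}$ to zero so that $\Tilde{\theta}_k^{l,i,j} = \theta_k^{l,i,j}$ and no noise perturbation is injected. Next, recall from Section \ref{sec:ev_tr} that for each filter $k$ only the parameters of the channel $\theta_k^k$ are non-fixed, and within that channel only the middle row and middle column are trainable; every other entry is structurally forced to $0$. I will set the single central entry $\theta_k^{k,\lceil m/2 \rceil,\lceil m/2 \rceil}$ to $1$ for every $k$ and leave all the remaining trainable middle-row and middle-column entries of $\theta_k^k$ at $0$. This assignment is legal because the central entry lies at the intersection of the middle row and the middle column, so it is among the trainable parameters rather than the structurally-fixed zeros.

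To finish, I will substitute this assignment into Equation \ref{eq:ev_tr}. For each $(i,j,k)$, the Hadamard product $\Tilde{\theta}_k^k \odot P_{x_{i,j}^k}$ has exactly one non-zero entry, located at the centre of the patch, whose value is $x_{i,j}^k$. The double sum $\mathbbm{1}_m^T(\,\cdot\,)\mathbbm{1}_m$ therefore collapses to $y_{i,j}^k = x_{i,j}^k$. Since the structural constraint forces $\theta_k^l = 0$ whenever $l \neq k$, no inter-channel mixing occurs, so the output of the layer is the input, which is the desired identity transformation.

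I do not expect a genuine obstacle here; the only point requiring care is the bookkeeping on which parameters are trainable versus structurally zero, so that the chosen non-zero entry is legal to set and every other non-zero choice I might have been tempted to make is correctly forbidden. Once that is confirmed, the calculation from Equation \ref{eq:ev_tr} is immediate, and the claim follows.
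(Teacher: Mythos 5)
Your construction is identical to the paper's: zero out all dropout variances, set the central entry $\theta_k^{k,\lceil m/2\rceil,\lceil m/2\rceil}=1$ for each $k$ (legal since it lies in the trainable middle row and column), set all other parameters to zero, and observe that Equation \ref{eq:ev_tr} then collapses to $y_{i,j}^k = x_{i,j}^k$. The proposal is correct and takes essentially the same route as the paper's proof.
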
 
\begin{proof}

In this case, we can use the parameter assignments as stated in the proof of Claim 1 to produce an identity transformation. This is possible, since we construct the noisy event translation layer with the same structure of an $m \times m$ convolutional layer with the number of filters equalling the number of input channels. Moreover, the only non-zero parameter (which is set to 1) in the $k^{th}$ filter, $\theta^{k,\ceil{\frac{m}{2}},\ceil{\frac{m}{2}}}_{k}$ is in the middle row and middle column of its $k^{th}$ channel, making it a valid assignment for the noisy event translation layer  (see Section \ref{sec:ev_tr}).

As stated in Equation \ref{eq:ev_tr}, the event interaction layer computes every output $y^k_{i,j}$ using the following equation. 
$$y_{i,j}^k  = \mathbbm{1}_{m}^{T} \left( \Tilde{\theta}^{k}_{k} \odot P_{x_{i,j}^{k}}\right)\mathbbm{1}_{m}$$

As in the case with the noisy event interaction layer, substituting these assignments, we get $y_{i,j}^k = x_{i,j}^k \; \forall i,j,k$ , which is the identity transformation required. 

\end{proof}

\subsubsection{Proof of Theorem \ref{th}}

We have to prove that all elements from $\Tilde{\mathscr{N}}$, the set of neural networks that can be constructed by adding any combination of EVaDE layers to the neural network $\mathbbm{n}$, can represent the functions represented by k-layered neural network $\mathbbm{n}$. 

Let $\Tilde{n}$ be a general element from $\Tilde{\mathscr{N}}$, that adds the EVaDE layers $\Tilde{l_1}, \Tilde{l_2}, \cdots \Tilde{l_m}$, in order, after the layers $N_{i_1}, N_{i_2}, \cdots N_{i_m}$ of the neural network $\mathbbm{n}$, where $i_{j-1} \leq i_j \leq i_{j+1} \; ; \forall 2\leq j \leq m-1$ and $i_1 \geq 0, i_m \leq k$. Adding an EVaDE layer after $N_0$ refers to it being added after the input layer and before the first layer of $\mathbbm{n}$. Note that more than one EVaDE layer can be added after any layer $N_j$ of $\mathbbm{n}$. 

Also, let $f_1, f_2, \cdots f_k$ be the functions computed by the layers $N_1, N_2, \cdots, N_k$ of $\mathbbm{n}$ respectively. Thus the function represented by $\mathbbm{n}$ is $f = f_k \circ f_{k-1}  \circ \cdots  f_1$. 

Let  $\Tilde{f_1}, \Tilde{f_2}, \cdots \Tilde{f_m}$ be the functions computed by the EVaDE layers $\Tilde{l_1}, \Tilde{l_2}, \cdots \Tilde{l_m}$ respectively. Thus the function computed by $\Tilde{n}$ is $\Tilde{f} = f_k \circ f_{k-1}  \circ \cdots \circ \Tilde{f}_m \circ f_{i_m} \cdots \circ \Tilde{f_1} \circ  f_{i_1} \circ \cdots f_1$. As all $\Tilde{f_1}, \Tilde{f_2}, \cdots \Tilde{f_m}$ can learn to represent the identity transformation, $\Tilde{f}$ can learn to represent $f$. This implies that $\Tilde{n}$ can represent any function represented by $\mathbbm{n}$.

\section{Variational Distributions using Dropouts}
Variational methods are used to approximate inference and/or sampling when using  intractable posterior distributions. These methods work by using variational distributions that facilitate easy sampling and/or inference, while approximating the true posterior as closely as possible.

These methods require the user to define two distributions, the prior $p(\theta)$, and the variational distribution $q(\theta)$. Given a set of training samples $D = (X,Y)$, where $X$ is the set of input samples and $Y$ the set of corresponding labels, variational methods work to minimize the KL-divergence between the learnt variational distribution $q(\theta)$ and the  true posterior $p(\theta | D)$. This is equivalent to maximizing the Evidence Lower Bound (ELBO) as shown below. 

\begin{align*}
KL(q(\theta), p(\theta|D)) = \int q(\theta) \log \frac{q(\theta)}{p(\theta|D)} d\theta
\end{align*}

Now, 
\begin{align*}
p(\theta|D) =& p(\theta|X,Y) = \frac{p(\theta)p(X,Y|\theta)}{p(X,Y)} = \frac{p(\theta)p(Y|X,\theta)p(X|\theta)}{p(X,Y)} \\
=& \frac{p(\theta)p(Y|X,\theta)p(X)}{p(X,Y)};  \end{align*}

Substituting the value for $p(\theta|D)$, 
\begin{align*}
&KL(q(\theta), p(\theta|D)) = \int q(\theta) \left[ \log \frac{q(\theta)p(X,Y)}{p(\theta)p(Y|X,\theta)p(X)} \right] d\theta \\
&= \int q(\theta) \log \frac{q(\theta)}{p(\theta)p(Y|X,\theta)}d\theta + \int q(\theta) \log \frac{  P(X,Y)}{P(X)} d\theta \\
&= \int q(\theta) \log \frac{q(\theta)}{p(\theta)p(Y|X,\theta)} d\theta +  \log \frac{  P(X,Y)}{P(X)} \\
&= \int q(\theta) \log \frac{q(\theta)}{p(\theta)}d\theta - \int q(\theta) \log p(Y|X,\theta)d\theta +  \log \frac{  P(X,Y)}{P(X)} \\
&= KL( q(\theta), p(\theta)) - \int q(\theta) \log p(Y|X,\theta)d\theta +  \log \frac{  P(X,Y)}{P(X)}
\end{align*}
 Since $P(X,Y)$ and $P(X)$ are constants with respect to $\theta$, the set of parameters that minimize $KL(q(\theta), p(\theta|D))$ are the same as the ones that maximize the ELBO, i.e., 
 
 $$\argmin_{\theta} KL(q(\theta), p(\theta|D)) = \argmax_{\theta}  \int q(\theta) \log p(Y|X,\theta)d\theta -KL( q(\theta), p(\theta))$$

\subsection{Dropouts as Variational Distributions}
\citep{gal2016dropout} introduces the usage of dropout as a mechanism to induce variational distributions, samples from which are used to approximate the ELBO.  The first term of the ELBO can be re-written as,
$$\int q(\theta) \sum\limits_{i=1}^{N}\log p(y_i | x_i, \theta)d\theta$$
where every $(x_i,y_i)$ is a training example from $D$.

This integral can be approximated by averaging out the log-probabilities using several samples drawn from the variational distribution $q(\theta)$ (Equation \ref{eq:elbo_approx}).

\begin{equation}
\label{eq:elbo_approx}
\int q(\theta) \sum\limits_{i=1}^{N}\log p(y_i | x_i, \theta)d\theta \approx  \sum\limits_{i=1}^{N}\log p(y_i | x_i, \theta_i); \; \text{where }\theta_i \sim q(\theta)
\end{equation}

Neural networks that use different types of dropouts help us maintain variational distributions $q(\theta)$ that approximate posteriors over deep Gaussian processes \citep{gal2016dropout,kingma2015variational,aravindan2021state}. Procuring a sample from this posterior using $q(\theta)$ is easy, as a random dropped out network corresponds to a  sample from the  posterior over these deep Gaussian processes.

 

\section{Network Architectures}
\label{sec:network_arch}
In this section, we detail the network architectures used for training the environment models of SimPLe \citep{Kaiser2020Model} and EVaDE-SimPLe, and the policy network architectures used by both the methods.   

\subsection{Environment Network Architecture}
\subsubsection{SimPLe}
In our experiments we use the network architecture of the deterministic world model introduced in \citep{Kaiser2020Model}  to train the environment models of the SimPLe agents, but do not augment it with the convolutional inference network and the autoregressive LSTM unit. 

Given four consecutive game frames and an action as input, the network jointly models the transition and reward functions, as it predicts the next game frame and the reward using the same network. The network consists of a dense layer, which outputs a pixel embedding of the stacked input frames. This layer is followed by a stack of six $4 \times 4$ convolutional layers, each with a stride of 2. These layers are followed by six $4 \times 4 $ de-convolutional layers. For $1 \leq i \leq 5$, the $i^{th}$ de-convolutional layers, take in as input, the output of the previous layer, as well as the output of the $6-i^{th}$ convolutional layer. The last de-convolutional layer takes in as input the output of its previous layer and the dense pixel embedding layer. An embedding of the action input is multiplied and added to the input channels of every de-convolutional layer. The outputs from the last de-convolutional layer is passed through a softmax function to predict the next frame. The outputs from the last de-convolutional layer is also combined with the output of the last convolutional layer and then  passed through a fully connected layer with $128$ units followed by the output layer to predict the reward. 

\subsubsection{EVaDE-SimPLe}
The architecture of the environment model used by EVaDE agents is shown in Figure \ref{fig:evade_arch}. This model resembles the model of SimPLe agents until the fourth de-convolutional layer. 
All the stand-alone EVaDE layers that we use, use a stride of 1 and SAME padding so as to keep the size of the inputs and outputs of the layer same. As the EVaDE layers are added only to the reward function, we split the network into two parts, one that predicts the next frame (the transition network) and one that predicts the reward (the reward network) respectively. We denote the last two de-convolutional layers in each part $d^{t}_5,d^{t}_6$ and $d^{r}_5,d^{r}_6$ respectively.

As shown in Figure \ref{fig:evade_arch}, in the transition network, the outputs of the fourth de-convolutional layer and the first convolutional layer are passed to $d^{t}_5$. $d^{t}_6$ takes in as inputs the outputs of $d^{t}_5$  and the pixel embedding layer. 

The reward network adds a combination of a $3 \times 3$ noisy event translation layer, a noisy event weighting layer and a $1\times 1$ noisy event interaction layer which are inserted before both $d^{r}_5$ and $d^{r}_6$. $d^r_5$ shares weights with $d^{t}_5$, and takes in the outputs of the previous event interaction layer and the first convolutional layer as inputs. Likewise, $d^r_6$ shares weights with $d^{t}_6$, and takes in the outputs of the previous event interaction layer and the pixel embedding layer as inputs. Moreover, we also apply Gaussian multiplicative dropout to the weights of $d^r_6$, to make it act as an event interaction layer. As with SimPLe agents, an embedding of the action input is multiplied and added to the input channels of every de-convolutional layer (also shown in Figure \ref{fig:evade_arch}).

The outputs of $d^t_6$ are passed through a softmax to predict the next frame, while the outputs of $d^r_6$ are combined with the output of the last convolutional layer and  passed through a fully connected layer with $128$ units followed by the output layer to predict the reward. 

\subsection{Policy Network}
The policy network for both SimPLe and EVaDE-SimPLe agents consists of two convolutional layers followed by a hidden layer and an output layer. The inputs to the policy network are four consecutive game frames, which are stacked and passed through two $5 \times 5$ convolutional layers, both of which use a stride of 2. These convolutional layers are followed by a fully connected layer with $128$ hidden units, which is followed by the output layer, that predicts the stochastic policy, i.e., the probabilities corresponding to each valid action, and the value of the current state of the agent.
\section{Experimental Details}

\subsection{Codebase used and Hyperparameters}
The code for Simple(30) and EVaDE-SimPLe agents is provided in the supplementary.


We build our SimPLe(30) and EVaDE-SimPLe agents by utilizing the implementation of SimPLe agents from \citep{tensor2tensor}. To keep the comparison fair, we use the same hyperparameters as used by \citep{tensor2tensor} to train all our agents. The codebase in \citep{tensor2tensor} uses an Apache 2.0 license, thus allowing for public use and extension of their codebase.






\subsection{Computational Hardware Used}
We train our agents on a cluster of 4 NVIDIA RTX 2080 Ti GPUs with an Intel Xeon Gold 6240 CPU. The total time taken to train 5 independent runs of all 5 algorithms on the test suite of 12 games in addition to 5 independent runs of SimPLe(30) and EVaDE-SimPLe on the rest of the 14 games in the suite was around 195 days (or about 6.5 months).

\subsection{Human Normalized Score}

We use the human normalized scores from \citep{DBLP:journals/corr/abs-1905-12726} as defined in Equation \ref{eq:hns} to compare our agents. 
\begin{equation}
\label{eq:hns}
    \text{HNS}_\text{agent} = \frac{\text{Score}_\text{agent} - \text{Score}_\text{random}} {\text{Score}_\text{human} - \text{Score}_\text{random}}
\end{equation}

where $\text{Score}_\text{agent}, \text{Score}_\text{human}$ and $\text{Score}_\text{random}$ denote the scores achieved by agent being evaluated, a human and an agent which acts with a random policy respectively.

We also list the baseline scores achieved by humans and random agents, as listed in \citep{DBLP:journals/corr/abs-1905-12726} in Table \ref{tab:baseline_scores} for easy access.

\begin{table}[ht]
\centering
    \caption{Baseline human and random values used to calculate Human Normalized Scores}
  \begin{tabular}{c c c}
    \toprule
    Game & Human Score& Random Score\\
    \midrule
    Alien & 7,127.7 & 227.8\\
    Amidar & 1719.5& 5.8\\
    Assault & 742& 222.4\\
    Asterix & 8503.3& 210 \\
    BankHeist & 753.1&14.2 \\
    BattleZone & 37187.5&2360 \\
    Boxing & 12.1 & 0.1 \\
    Breakout & 30.5&1.7 \\
    ChopperCommand & 7387.8& 811\\
    CrazyClimber & 35829.4&10780.5 \\
    DemonAttack & 1971& 152.1\\
    Freeway & 29.6& 0 \\
    Frostbite &4334.7&65.2 \\
    Gopher & 2412.5& 257.6\\
    Hero & 30826.4& 1027 \\
    JamesBond &302.8& 29\\
    Kangaroo & 3035 &52\\
    Krull & 2665.5&1598 \\
    KungFuMaster & 22736.3&258.5 \\
    MsPacman &6951.6&307.3 \\
    Pong &14.6& -20.7\\
    PrivateEye &69571.3&24.9 \\
    Qbert & 13455&163.9 \\
    RoadRunner & 7845&11.5 \\
    Seaquest & 42054.7&68.4 \\
    UpNDown &11693.2&533.4 \\
    \bottomrule
  \end{tabular}
    \label{tab:baseline_scores}
\end{table}



\subsection{Inter-Quartile Mean}

Benchmarking the results of reinforcement learning algorithms is inherently noisy, as the results of most training runs of these algorithms depend on a variety of factors including random seeds, choice of the evaluation environment and the codebase used by these runs \citep{henderson2018deep}. While the human normalized scores will average out the variability in the performances of these training runs with a large number of training runs, often these scores are skewed by outlier games or scores, i.e., games or random trials in which the algorithm achieves unusually high or low scores. 

The inter-quartile mean \citep{agarwal2021deep} (IQM) of a reinforcement learning algorithm that is evaluated on $n$ tasks, with $m$ evaluation runs per task, can be computed as the mean of the human normalised scores of those training runs that comprise the 25 - 75 percentile range of these $n \times m$ training runs. In doing so, this metric judges the algorithm on the group of games as a whole, while ignoring the outliers.

\subsection{More Experimental Details}

We present the scores achieved by all five independent runs of all agents trained on the 12-game subset of the Atari 100K test-suite in Table \ref{tab:all_scores}. 
Additionally, we also present the learning curves with  error bars equal to a width of 1 standard error on each side are shown in Figure \ref{fig:evade_exp2_err}.

We present the scores of all five independent runs of EVaDE-SimPLe and SimPLe(30) agents trained on rest of the 14 games in the 100K test-suite  in Table \ref{tab:all_scores_14_game} and in Table \ref{tab:all_mean_scores},  we present the mean scores achieved by SimPLe(30), EVaDE-SimPLe and other baselines in the Atari 100K test-suite. 

shows the learning curves as shown in Figure \ref{fig:evade_exp2} with error bars equal to a width of 1 standard error on each side.

Looking at the learning curves presented in Figure \ref{fig:evade_exp2}, it can possibly be said that an increase in scores of SimPLe(30) equipped  with one of the EVaDE layers at a particular iteration  would mean an increase in scores of EVaDE-SimPLe, albeit in later iterations. This pattern can clearly be seen in the games of BankHeist, Frostbite, Kangaroo, Krull and Qbert. This delay in learning could possibly be attributed to the agent wasting its interaction budget exploring areas suggested by one of the layers that is ineffective for that particular game. However, we hypothesise that since all the layers provide different types of exploration, their combination is more often helpful than wasteful. This is validated by the fact that EVaDE-SimPLe achieves higher mean HNS, IQM and SimPLe-NS than any other agent in this study.

{
\onecolumn
\scriptsize
\begin{longtable}{@{}cccccc@{}}
\caption{Scores achieved by every independent run of every SimPLe agent and when equipped with different EVaDE layers in the 12 game subset of the Atari 100K test-suite.} \label{tab:all_scores}
\\

\toprule
Game&SimPLe(30) & Inter. Layer & Weight. Layer & Trans. Layer & EVaDE-SimPLe\\  \midrule

\multirow{5}{*}{BankHeist} & 133.1 & 85 & 232.2&218.4 &155.3\\ 
 &  9.375& 12.5&195.3&128.8 &205.9\\
  &  13.13& 186.9&154.7&158.4 &347.8\\
 & 69.38 & 142.8 & 130.6 & 187.5 & 250.9\\
 &167.8 & 110.3 & 129.4 & 210.6 & 160.9 \\\midrule
  
\multirow{5}{*}{BattleZone} & 4156 & 1313& 9250&4438 &10844\\ 
 & 6969 & 8031&4250& 6000&9375\\
  & 3344 &9781& 4938& 6844&11063\\
 & 5719 & 4750 & 3906 & 9313 & 11219\\
 &  2531 & 9563 & 15281 & 12156 & 12969\\\midrule

\multirow{5}{*}{Breakout} & 20.09 &8.563 & 29.78&14.45 &35.38\\ 
 & 18.25 &25.03 & 27.56 &23.64 &20.91\\
  & 20.94 & 24.81& 0.625&19.69 &20.5\\
 &12.69 & 14.25 & 26.13 & 21.13 & 15.84 \\
 & 22.69 & 26.53 & 28 & 18.81 & 27.59\\\midrule

\multirow{5}{*}{CrazyClimber} & 54569 & 57534& 75300&69494 &55194\\ 
 & 51244 & 58522&74141& 59838&59934\\
  & 12959 & 62266&65431& 61503&70719\\
 & 47391 & 69391 & 47234 & 53328 & 68747\\
 &51128 & 50019 & 58847 & 50866 & 48984\\\midrule

\multirow{5}{*}{DemonAttack} & 55.31 & 215.3 & 134.1 &129.7 &169.1\\ 
 & 112.7 & 71.41& 155.5&105.9 &100.9\\
  & 127.7 & 152.2& 142.2&62.5 &166.4\\
 & 159.5 & 102 & 75.31 & 151.1 & 141.1 \\
 & 148.1 & 140.6 & 153 & 219.5 & 131.3 \\\midrule

\multirow{5}{*}{Frostbite} &261.3  &256.6& 250&263.4 &268.4\\ 
 &251.9  &241.6 &259.4& 258.1&249.4\\
  & 259.1 &242.2 &259.1& 267.5&268.4\\
 & 262.5 & 268.1 & 261.3 & 259.1 & 315.6 \\
 &  266.9 & 264.4 & 242.2 & 268.1 & 269.4\\\midrule

\multirow{5}{*}{JamesBond} & 268.8 & 12.5& 59.38& 371.9&232.8\\ 
 & 240.6 & 282.8&332.8&117.2 &101.6\\
  & 256.3 & 82.81&92.19& 23.44&228.1\\
 & 257.8 & 350 & 126.6 & 262.5 & 203.1 \\
 & 204.7 & 282.8 & 301.6 & 26.56 & 412.5 \\\midrule

\multirow{5}{*}{Kangaroo} & 987.5 &3294&293.8 & 25&1144\\ 
 & 56.25 &1588 &362.5& 1481&956.3\\
  & 112.5 & 37.5&175& 1719&1444\\
 & 37.5 & 5500 & 1756 & 1681 & 1663 \\
 & 1688 & 587.5 & 1600 & 1581 & 725 \\\midrule

\multirow{5}{*}{Krull} &5639  & 6124&5150& 5548&5569\\ 
 & 4873 & 3103&3290& 7266&4906\\
  & 2868 & 2142&4460& 1443&5744\\
 &7035 & 2384 & 5386 & 6236 & 3864 \\
 & 2244 & 1831 & 5806 & 5430 & 6591 \\\midrule

\multirow{5}{*}{Qbert} & 3002 &3935& 516.4& 1193&1082\\ 
 & 4151 & 1133&1420 &4190&3916\\
  & 4106 & 857&873.4& 3494&4208\\
 & 806.3 & 3198 & 1034 & 3325 & 3983 \\
 & 849.2 & 640.6 & 814.1 & 4462 & 631.3 \\\midrule

\multirow{5}{*}{RoadRunner} & 2793 & 8794&3034 &5709 &8666\\ 
 & 831.3 &8744 &4763&6763 &8541\\
  & 5034 & 6188&7397&12622 &9538\\
 & 3219 & 2791 & 8069 & 2581 & 9566 \\
 & 46.88 & 9375 & 1000 & 2675 & 2684 \\\midrule

\multirow{5}{*}{Seaquest} &392.5  &791.3&221.3 &649.4 &536.3\\ 
 & 419.4 & 286.3& 288.1&604.4&813.8\\
  & 395 & 692.5& 849.4&854.4&861.9\\
 & 249.4 & 760.6 & 851.3 & 513.8 & 671.9 \\
 & 151.9 & 691.3 & 832.5 & 598.8 & 203.8 \\
\bottomrule

\end{longtable}
}
\clearpage
{
\scriptsize

\begin{longtable}{@{} c c c @{}}
\caption{Scores achieved by every independent run of every SimPLe(30) and EVaDE-SimPLe agent when trained on the remaining 14 games of the Atari 100K test suite.} \label{tab:all_scores_14_game}
\\

\toprule
Game&SimPLe(30) & EVaDE-SimPLe\\  \midrule

\multirow{5}{*}{Alien} 
&579.1 & 671.9 \\
&494.1 & 605.9 \\
&387.5 & 545.9 \\
&330.3 & 444.1 \\
&33.44 & 595.6 \\ \midrule
\multirow{5}{*}{Amidar}
& 90.25 & 112.1 \\
& 29.56 & 128.7 \\
& 92.25 & 171.5 \\
& 84.41 & 99.25 \\
& 75.19 & 149.8 \\ \midrule
\multirow{5}{*}{Assault}
& 2170 & 1255 \\
& 895.2 & 868.1 \\
& 901.9 & 1315 \\
& 768.4 & 868.8 \\
& 961.5 & 837.8 \\ \midrule
\multirow{5}{*}{Asterix} 
& 1559 & 1228 \\
& 345.3 & 1322 \\
& 1269 & 1150 \\
& 1569 & 1727 \\
& 403.1 & 917.2 \\ \midrule
\multirow{5}{*}{Boxing} 
& 47.91 & 26.69 \\
&35.94 & 44.63 \\
&41.56 & 44.66 \\
&27.81 & 42.16 \\
&17.38 & 39.44 \\ \midrule
\multirow{5}{*}{ChopperCommand} 
& 859.4 & 734.4 \\
&878.8 & 984.8 \\
&487.5 & 953.1 \\
&821.9 & 818.2 \\
&872.7 & 875   \\ \midrule
\multirow{5}{*}{Freeway} 
& 33.63 & 33.94 \\
&20.84 & 32.5 \\
&32.47 & 33.66 \\
&33.44 & 33.31 \\
&33.78 & 33.41 \\ \midrule
\multirow{5}{*}{Gopher} 
& 659.4 & 752.5 \\
&293.8 & 793.1 \\
&690.6 & 1854 \\
&656.9 & 530 \\
&1038 & 423.8 \\ \midrule
\multirow{5}{*}{Hero} 
& 3028 & 3056 \\
&2908 & 2904 \\
&2976 & 3009 \\
&71.88 & 3275 \\
&3079 & 3004   \\ \midrule
\multirow{5}{*}{KungFuMaster}
& 13703 & 14175 \\
&17406 & 14384 \\
&13481 & 21191 \\
&11175 & 21684 \\
&13006 & 14248 \\ \midrule
\multirow{5}{*}{MsPacman} 
& 1194 & 1794 \\
&939.1 & 1551 \\
&1400 & 1050 \\
&1118 & 1483 \\
&1058 & 1688 \\ \midrule
\multirow{5}{*}{Pong} 
&4.313 & 6.375 \\
&-1.781 & 16.13 \\
&-17.78 & 13.03 \\
&-7.781 & 10.22 \\
&17.31 & 20.09 \\ \midrule
\multirow{5}{*}{PrivateEye} 
& 0 & 34.09 \\
&-38.25 & 100 \\
&332 & 0 \\
&4071 & 100 \\
&100 & 100\\ \midrule
\multirow{5}{*}{UpNDown} 
& 566.9 & 1452 \\
&1163 & 1182 \\
&1016 & 1681 \\
&1870 & 1586 \\
&236.6 & 1264 \\ \bottomrule
\end{longtable}
}

\clearpage    
\begin{table*}
\small
\begin{longtable}{@{}ccccccc@{}}
\caption{Mean scores achieved by SimPLe(30), EVaDE-SimPLe and other popular baselines in the Atari 100K test-suite.} \label{tab:all_mean_scores}
\\

\toprule
Game           & SimPLe  & SimPLe(30) & Curl    & OTRainbow & Eff Rainbow & EVaDE-Simple \\ \midrule
Alien          & 616.9   & 364.888    & 558.2   & \textbf{824.7}     & 739.9       & 572.68       \\
Amidar         & 88      & 74.332     & 142.1   & 82.8      & \textbf{188.6}       & 132.27       \\
Assault        & 527.2   & \textbf{1139.4}     & 600.6   & 351.9     & 431.2       & 1028.94      \\
Asterix        & 1128.3  & 1029.08    & 734.5   & 628.5     & 470.8       & \textbf{1268.84}      \\
BankHeist      & 34.2    & 78.557     & 131.6   & 182.1     & 51          & \textbf{224.16}       \\
BattleZone     & 5184.4  & 4543.8     & \textbf{14870}   & 4060.6    & 10124.6     & 11094        \\
Boxing         & 9.1     & 34.12      & 1.2     & 2.5       & 0.2         & \textbf{39.516}       \\
Breakout       & 16.4    & 18.932     & 4.9     & 9.84      & 1.9         & \textbf{24.024}       \\
ChopperCommand & \textbf{1246.9}  & 784.06     & 1058.5  & 1033.33   & 861.8       & 873.1        \\
CrazyClimber   & \textbf{62583.6} & 43458.2    & 12146.5 & 21327.8   & 16185.3     & 60715.6      \\
DemonAttack    & 208.1   & 120.662    & \textbf{817.6}   & 711.8     & 508         & 141.76       \\
Freeway        & 20.3    & 30.832     & 26.7    & 25        & 27.9        & \textbf{33.364}       \\
Frostbite      & 254.7   & 260.34     & \textbf{1181.3}  & 231.6     & 866.8       & 274.24       \\
Gopher         & 771     & 667.74     & 669.3   & 778       & 349.5       & \textbf{870.68}       \\
Hero           & 2656.6  & 2412.576   & 6279.3  & 6458.8    & \textbf{6857}        & 3049.6       \\
Jamesbond      & 125.3   & 245.64     & \textbf{471}     & 112.3     & 301.6       & 235.62       \\
Kangaroo       & 323.1   & 576.35     & 872.5   & 605.4     & 779.3       & \textbf{1186.46}      \\
Krull          & 4539.9  & 4531.8     & 4229.6  & 3277.9    & 2851.5      & \textbf{5334.8}       \\
KungFuMaster   & \textbf{17257.2} & 13754.2    & 14307.8 & 5722.2    & 14346.1     & 17136.4      \\
MsPacman       & 1480    & 1141.82    & 1465.5  & 941.9     & 1204.1      & \textbf{1513.2}       \\
Pong           & 12.8    & -1.1438    & -16.5   & 1.3       & -19.3       & \textbf{13.169}       \\
PrivateEye     & 58.3    & \textbf{892.95}     & 218.4   & 100       & 97.8        & 66.818       \\
Qbert          & 1288.8  & 2582.9     & 1042.4  & 509.3     & 1152.9      & \textbf{2764.06}      \\
RoadRunner     & 5640.6  & 2384.836   & 5661    & 2696.7    & \textbf{9600}         & 7799         \\
Seaquest       & \textbf{683.3}   & 321.64     & 384.5   & 286.92    & 354.1       & 617.54       \\
UpNDown        & \textbf{3350.3}  & 970.5      & 2955.2  & 2847.6    & 2877.4      & 1433         \\
\bottomrule

\end{longtable}
\end{table*}

\begin{figure*}[htbp]
\centering
{
\scriptsize
\hspace*{\fill}
\subfigure{
\includegraphics[width=0.28\linewidth]{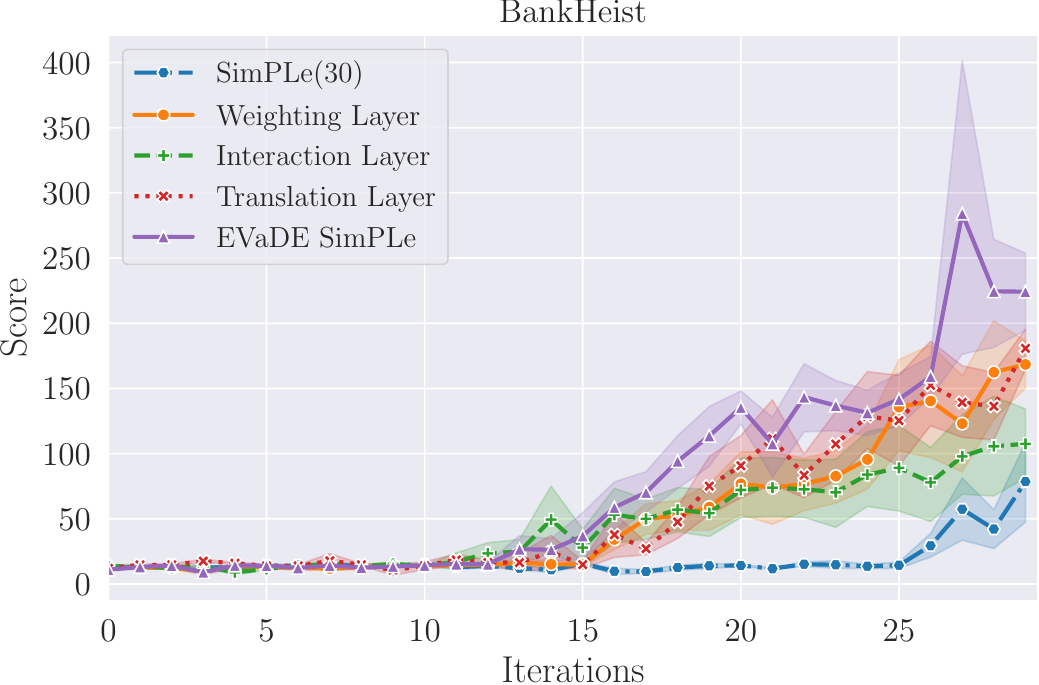}
}
\hfill
\subfigure{
\includegraphics[width=0.28\linewidth]{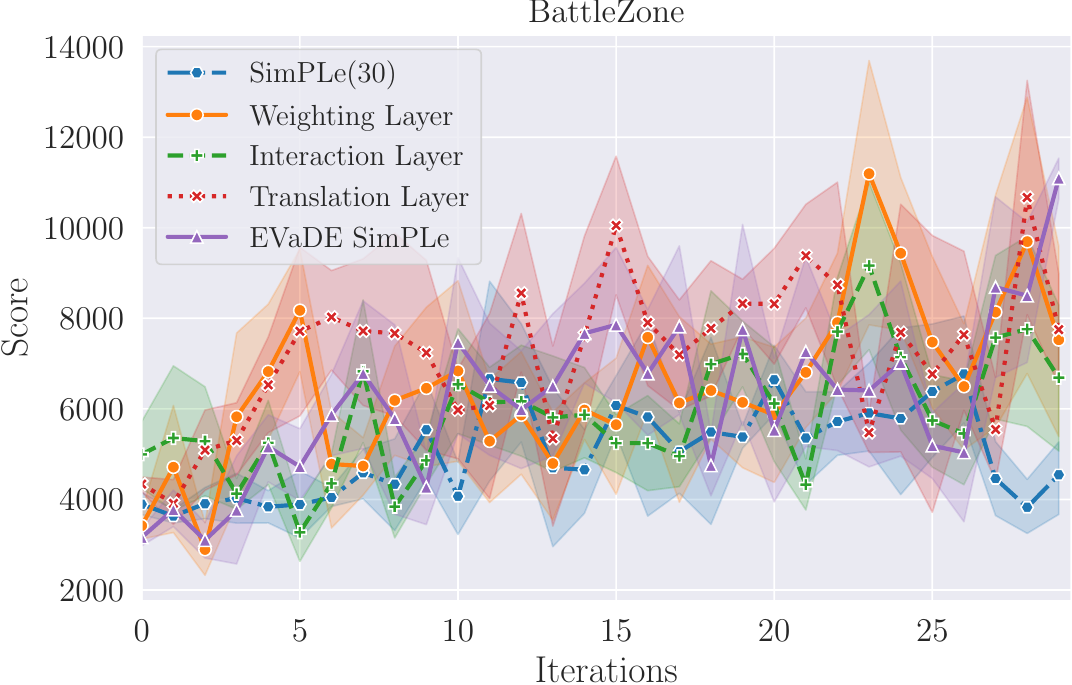}
} 
\hfill
\subfigure{
\includegraphics[width=0.28\linewidth]{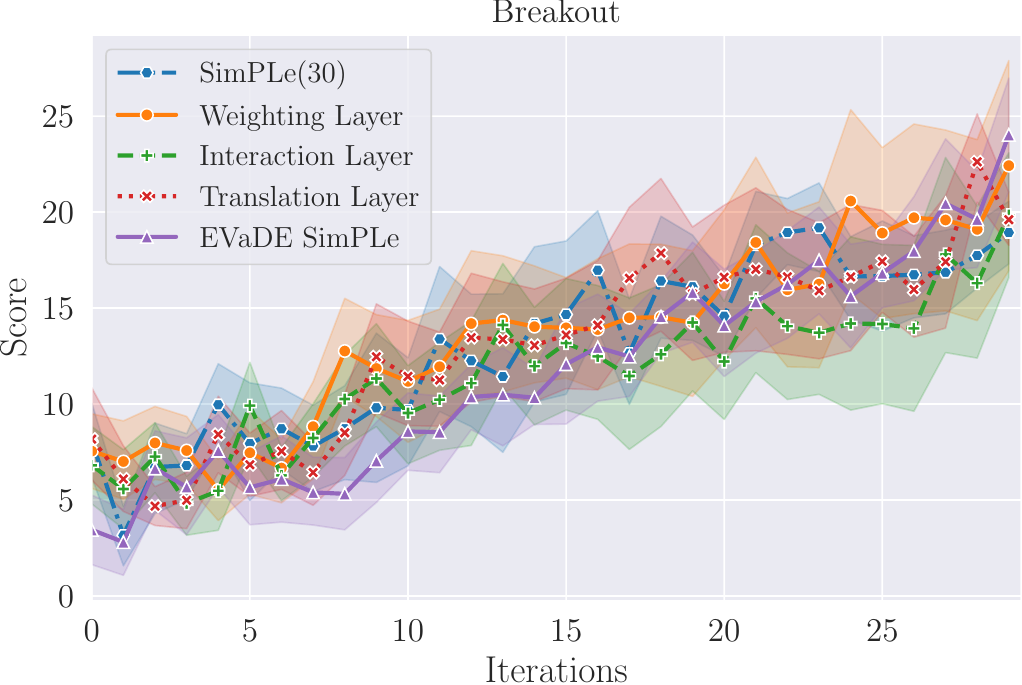}
}
\hspace*{\fill}
\newline 
\hspace*{\fill}
\subfigure{
\includegraphics[width=0.28\linewidth]{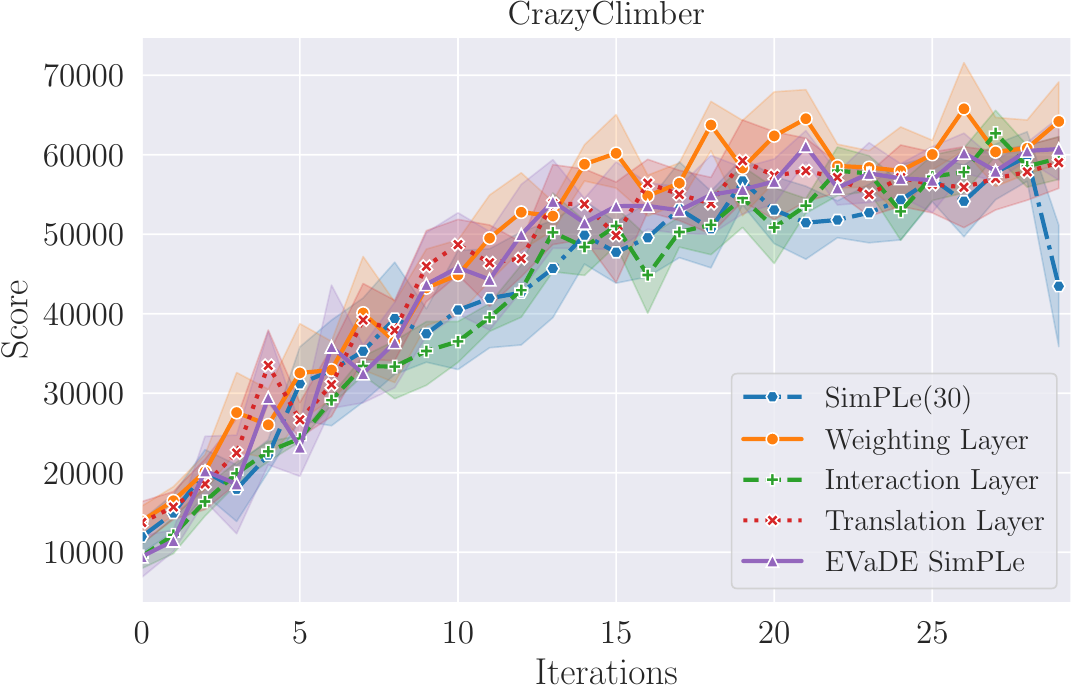}
}
\hfill
\subfigure{
\includegraphics[width=0.28\linewidth]{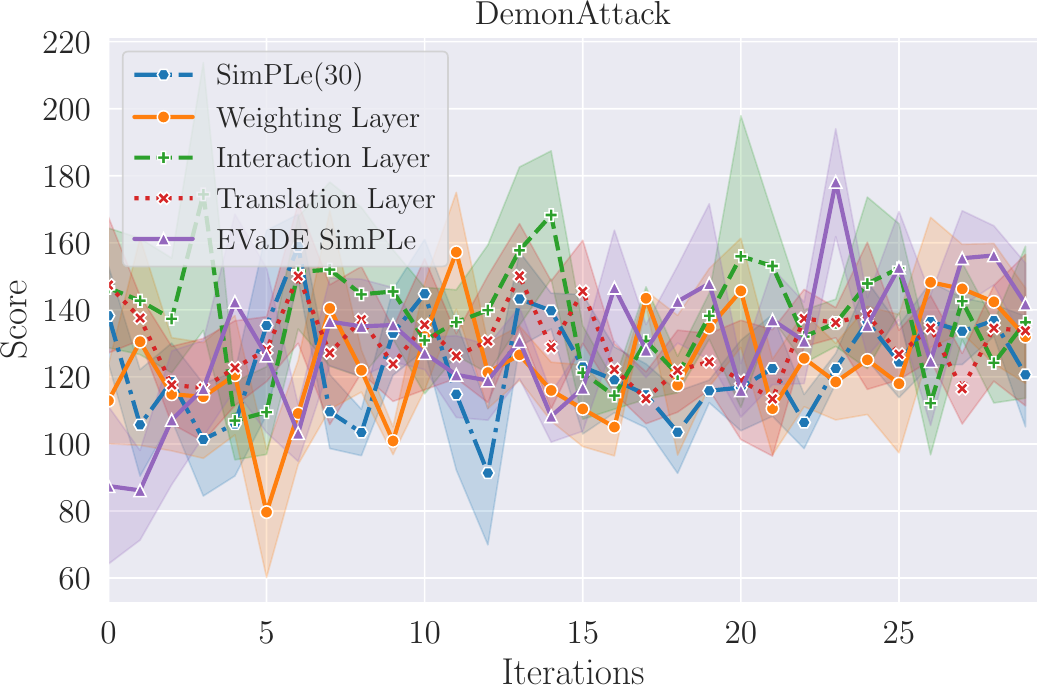}
}
\hfill
\subfigure{
\includegraphics[width=0.28\linewidth]{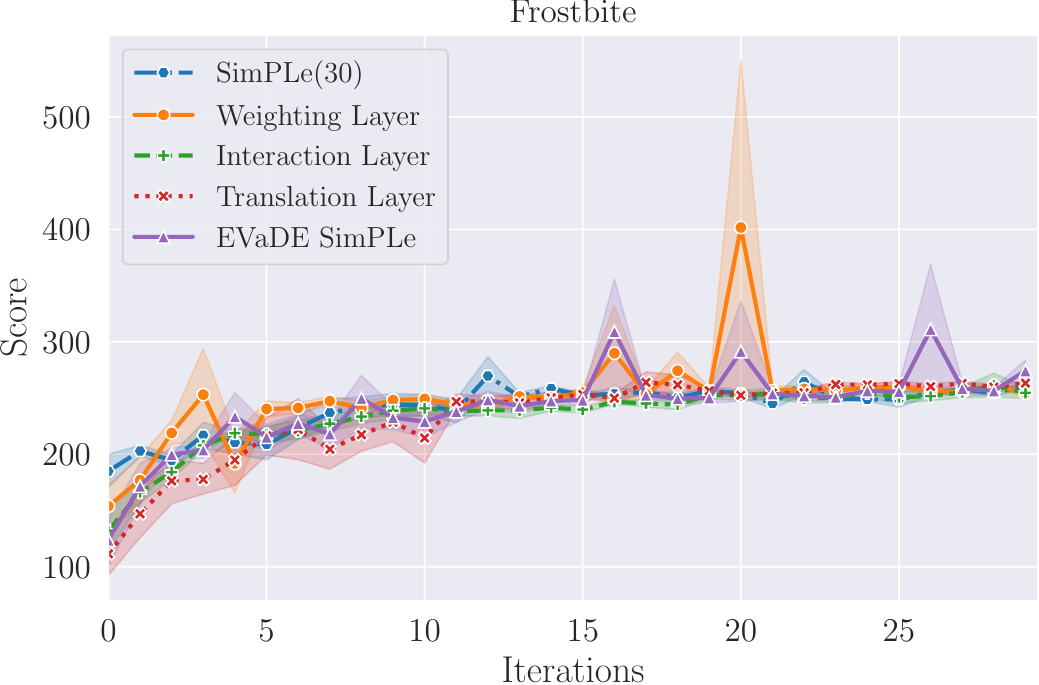}
}
\hspace*{\fill}
\newline 
\hspace*{\fill}
\subfigure{
\includegraphics[width=0.28\linewidth]{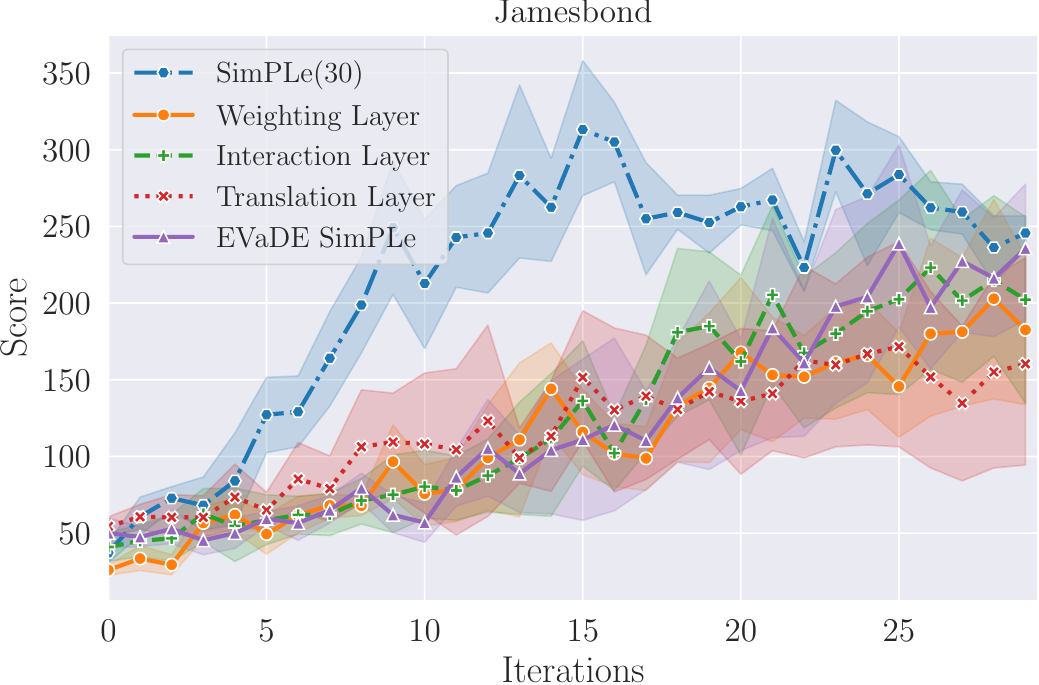}
}
\hfill
\subfigure{
\includegraphics[width=0.28\linewidth]{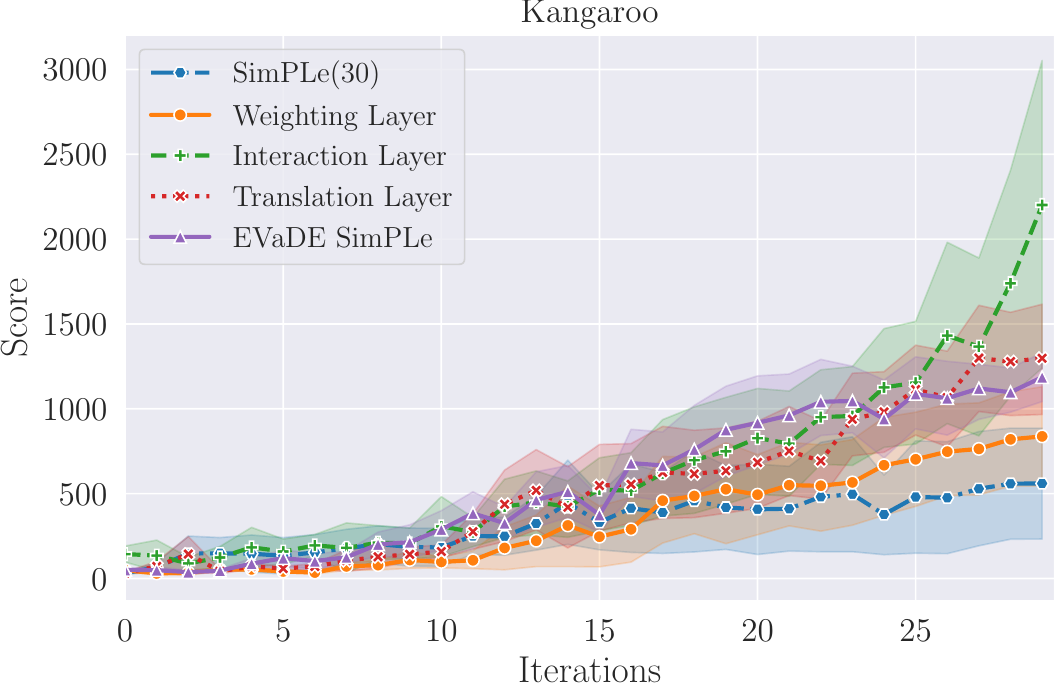}
}
\hfill
\subfigure{
\includegraphics[width=0.28\linewidth]{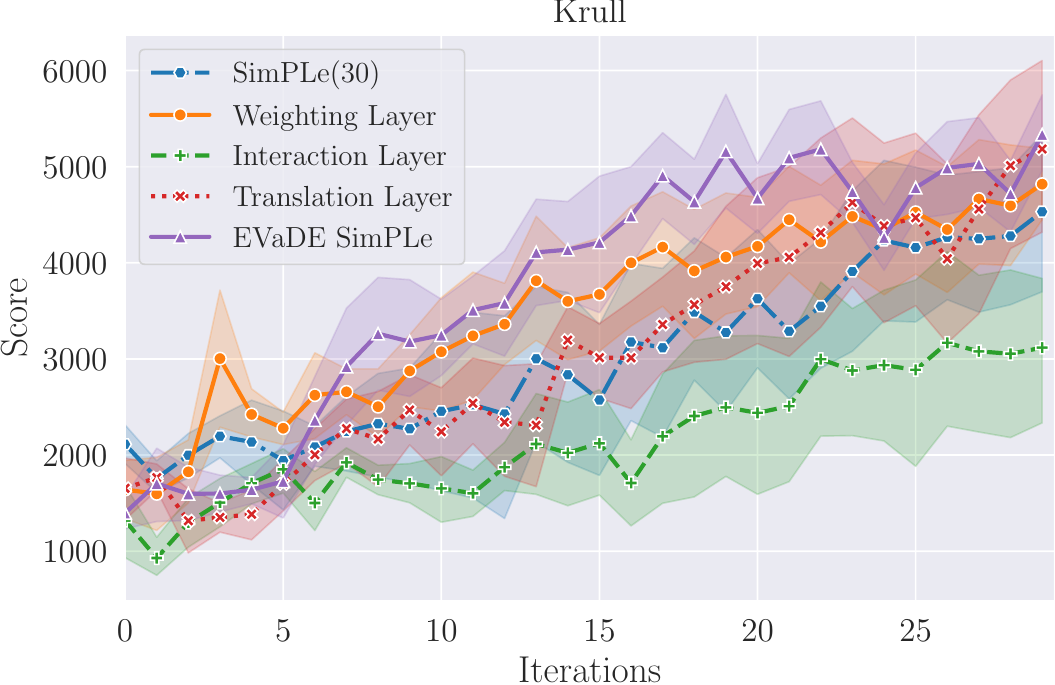}
}
\hspace*{\fill}
\newline
\hspace*{\fill}
\subfigure{
\includegraphics[width=0.28\linewidth]{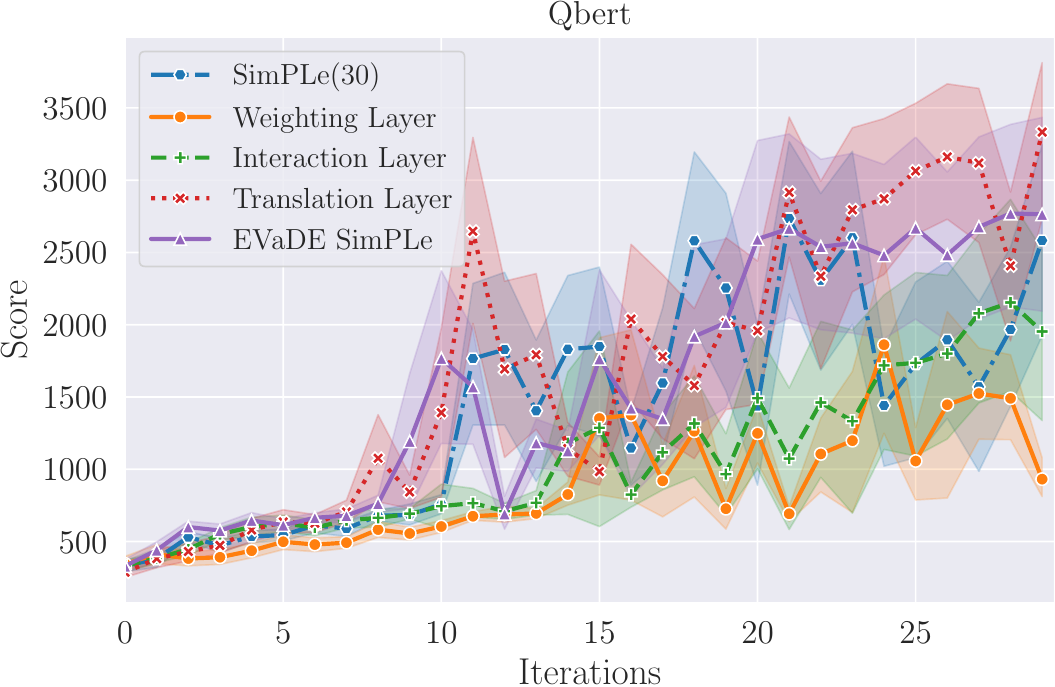}
}
\hfill
\subfigure{
\includegraphics[width=0.28\linewidth]{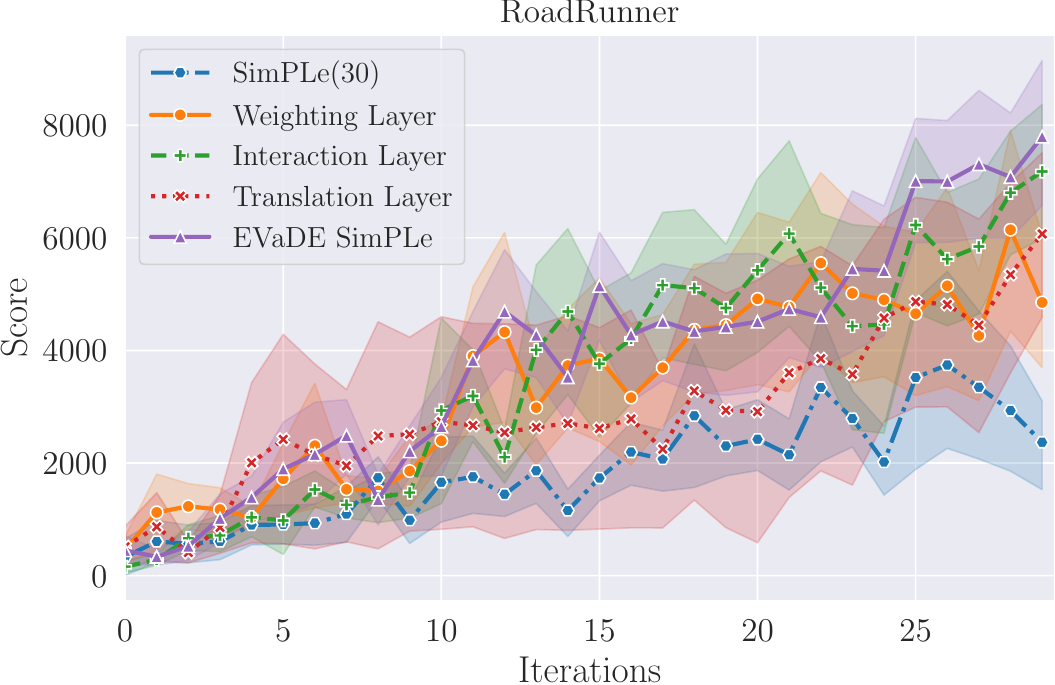}
}
\hfill
\subfigure{
\includegraphics[width=0.28\linewidth]{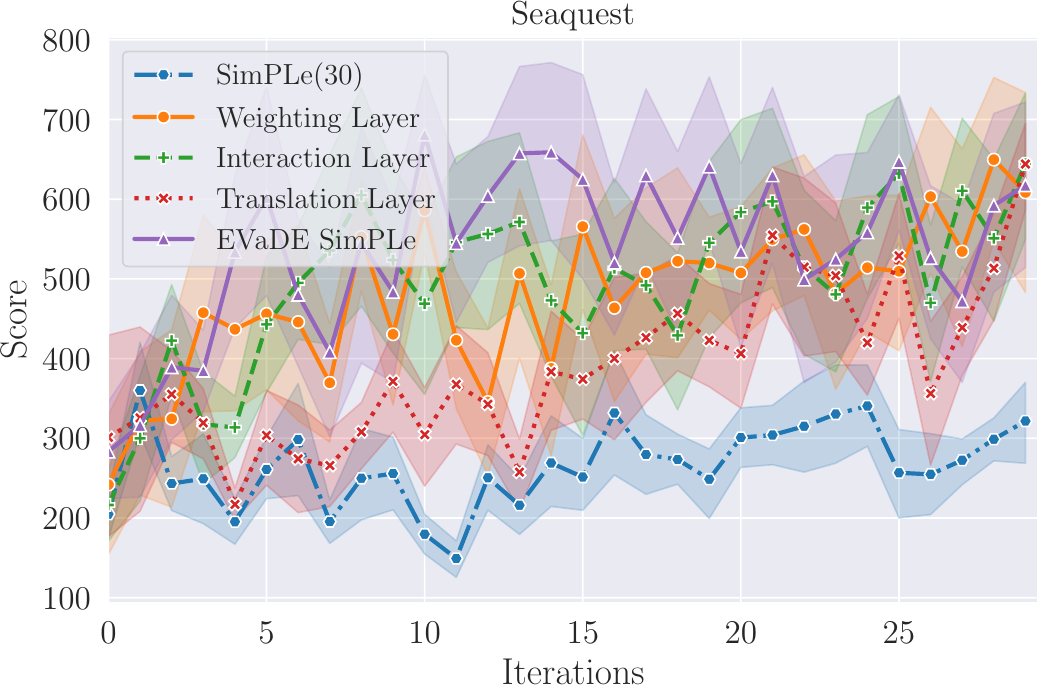}
}
\hspace*{\fill}
}
\caption{Learning curves of EVaDE-SimPLe agents, SimPLe(30) agents and agents which only add one of the EVaDE layers with error bars of 1 standard error.}
\label{fig:evade_exp2_err}
\end{figure*} 

\clearpage

{\tiny
\bibliography{acml24_submission_template}

\begin{thebibliography}{45}
\providecommand{\natexlab}[1]{#1}
\providecommand{\url}[1]{\texttt{#1}}
\expandafter\ifx\csname urlstyle\endcsname\relax
  \providecommand{\doi}[1]{doi: #1}\else
  \providecommand{\doi}{doi: \begingroup \urlstyle{rm}\Url}\fi

\bibitem[Agarwal et~al.(2021)Agarwal, Schwarzer, Castro, Courville, and Bellemare]{agarwal2021deep}
Rishabh Agarwal, Max Schwarzer, Pablo~Samuel Castro, Aaron~C Courville, and Marc Bellemare.
\newblock Deep {R}einforcement {L}earning at the {E}dge of the {S}tatistical {P}recipice.
\newblock \emph{Advances in Neural Information Processing Systems}, 34, 2021.

\bibitem[Agrawal and Jia(2017)]{NIPS2017_3621f145}
Shipra Agrawal and Randy Jia.
\newblock Optimistic {P}osterior {S}ampling for {R}einforcement {L}earning: {W}orst-{C}ase {R}egret {B}ounds.
\newblock In I.~Guyon, U.~V. Luxburg, S.~Bengio, H.~Wallach, R.~Fergus, S.~Vishwanathan, and R.~Garnett, editors, \emph{Advances in Neural Information Processing Systems}, volume~30. Curran Associates, Inc., 2017.
\newblock URL \url{https://proceedings.neurips.cc/paper/2017/file/3621f1454cacf995530ea53652ddf8fb-Paper.pdf}.

\bibitem[Aravindan and Lee(2021)]{aravindan2021state}
Siddharth Aravindan and Wee~Sun Lee.
\newblock {S}tate-{A}ware {V}ariational {T}hompson {S}ampling for {D}eep {Q}-{N}etworks.
\newblock In \emph{20th International Conference on Autonomous Agents and Multiagent Systems (AAMAS 2021)}, pages 124--132, 2021.

\bibitem[Azizzadenesheli et~al.(2018)Azizzadenesheli, Brunskill, and Anandkumar]{azizzadenesheli2018efficient}
Kamyar Azizzadenesheli, Emma Brunskill, and Animashree Anandkumar.
\newblock Efficient {E}xploration through {B}ayesian {D}eep {Q}-{N}etworks.
\newblock In \emph{2018 Information Theory and Applications Workshop (ITA)}, pages 1--9. IEEE, 2018.

\bibitem[Babaeizadeh et~al.(2018)Babaeizadeh, Finn, Erhan, Campbell, and Levine]{babaeizadeh2018stochastic}
Mohammad Babaeizadeh, Chelsea Finn, Dumitru Erhan, Roy~H. Campbell, and Sergey Levine.
\newblock Stochastic {V}ariational {V}ideo {P}rediction.
\newblock In \emph{International Conference on Learning Representations}, 2018.
\newblock URL \url{https://openreview.net/forum?id=rk49Mg-CW}.

\bibitem[Bellemare et~al.(2016)Bellemare, Srinivasan, Ostrovski, Schaul, Saxton, and Munos]{bellemare2016unifying}
Marc Bellemare, Sriram Srinivasan, Georg Ostrovski, Tom Schaul, David Saxton, and Remi Munos.
\newblock Unifying {C}ount-{B}ased {E}xploration and {I}ntrinsic {M}otivation.
\newblock In \emph{Advances in Neural Information Processing Systems}, pages 1471--1479, 2016.

\bibitem[Brittain et~al.(2019)Brittain, Bertram, Yang, and Wei]{DBLP:journals/corr/abs-1905-12726}
Marc Brittain, Joshua~R. Bertram, Xuxi Yang, and Peng Wei.
\newblock Prioritized {S}equence {E}xperience {R}eplay.
\newblock \emph{CoRR}, abs/1905.12726, 2019.
\newblock URL \url{http://arxiv.org/abs/1905.12726}.

\bibitem[Chebotar et~al.(2017)Chebotar, Kalakrishnan, Yahya, Li, Schaal, and Levine]{chebotar2017path}
Yevgen Chebotar, Mrinal Kalakrishnan, Ali Yahya, Adrian Li, Stefan Schaal, and Sergey Levine.
\newblock Path {I}ntegral {G}uided {P}olicy {S}earch.
\newblock In \emph{2017 IEEE international conference on robotics and automation (ICRA)}, pages 3381--3388. IEEE, 2017.

\bibitem[Coulom(2006)]{coulom2006efficient}
R{\'e}mi Coulom.
\newblock Efficient {S}electivity and {B}ackup {O}perators in {M}onte-{C}arlo {T}ree {S}earch.
\newblock In \emph{International conference on computers and games}, pages 72--83. Springer, 2006.

\bibitem[Curi et~al.(2020)Curi, Berkenkamp, and Krause]{curi2020efficient}
Sebastian Curi, Felix Berkenkamp, and Andreas Krause.
\newblock Efficient {M}odel-{B}ased {R}einforcement {L}earning through {O}ptimistic {P}olicy {S}earch and {P}lanning.
\newblock \emph{Advances in Neural Information Processing Systems}, 33, 2020.

\bibitem[Fortunato et~al.(2018)Fortunato, Azar, Piot, Menick, Hessel, Osband, Graves, Mnih, Munos, Hassabis, Pietquin, Blundell, and Legg]{fortunato2018noisy}
Meire Fortunato, Mohammad Azar, Bilal Piot, Jacob Menick, Matteo Hessel, Ian Osband, Alex Graves, Volodymyr Mnih, Remi Munos, Demis Hassabis, Olivier Pietquin, Charles Blundell, and Shane Legg.
\newblock Noisy {N}etworks for {E}xploration.
\newblock In \emph{International Conference on Learning Representations}, 2018.
\newblock URL \url{https://openreview.net/forum?id=rywHCPkAW}.

\bibitem[Gal and Ghahramani(2016)]{gal2016dropout}
Yarin Gal and Zoubin Ghahramani.
\newblock Dropout as a {B}ayesian {A}pproximation: {R}epresenting {M}odel {U}ncertainty in {D}eep {L}earning.
\newblock In \emph{international conference on machine learning}, pages 1050--1059. PMLR, 2016.

\bibitem[Gregor et~al.(2019)Gregor, Papamakarios, Besse, Buesing, and Weber]{gregor2018temporal}
Karol Gregor, George Papamakarios, Frederic Besse, Lars Buesing, and Theophane Weber.
\newblock Temporal {D}ifference {V}ariational {A}uto-{E}ncoder.
\newblock In \emph{International Conference on Learning Representations}, 2019.
\newblock URL \url{https://openreview.net/forum?id=S1x4ghC9tQ}.

\bibitem[Ha and Schmidhuber(2018)]{ha2018recurrent}
David Ha and J{\"u}rgen Schmidhuber.
\newblock Recurrent {W}orld {M}odels {F}acilitate {P}olicy {E}volution.
\newblock In \emph{Proceedings of the 32nd International Conference on Neural Information Processing Systems}, pages 2455--2467, 2018.

\bibitem[Hafner et~al.(2019)Hafner, Lillicrap, Fischer, Villegas, Ha, Lee, and Davidson]{hafner2019learning}
Danijar Hafner, Timothy Lillicrap, Ian Fischer, Ruben Villegas, David Ha, Honglak Lee, and James Davidson.
\newblock Learning {L}atent {D}ynamics for {P}lanning from {P}ixels.
\newblock In \emph{International Conference on Machine Learning}, pages 2555--2565. PMLR, 2019.

\bibitem[Harris et~al.(2020)Harris, Millman, van~der Walt, Gommers, Virtanen, Cournapeau, Wieser, Taylor, Berg, Smith, Kern, Picus, Hoyer, van Kerkwijk, Brett, Haldane, del R{\'{i}}o, Wiebe, Peterson, G{\'{e}}rard-Marchant, Sheppard, Reddy, Weckesser, Abbasi, Gohlke, and Oliphant]{harris2020array}
Charles~R. Harris, K.~Jarrod Millman, St{\'{e}}fan~J. van~der Walt, Ralf Gommers, Pauli Virtanen, David Cournapeau, Eric Wieser, Julian Taylor, Sebastian Berg, Nathaniel~J. Smith, Robert Kern, Matti Picus, Stephan Hoyer, Marten~H. van Kerkwijk, Matthew Brett, Allan Haldane, Jaime~Fern{\'{a}}ndez del R{\'{i}}o, Mark Wiebe, Pearu Peterson, Pierre G{\'{e}}rard-Marchant, Kevin Sheppard, Tyler Reddy, Warren Weckesser, Hameer Abbasi, Christoph Gohlke, and Travis~E. Oliphant.
\newblock Array programming with {NumPy}.
\newblock \emph{Nature}, 585\penalty0 (7825):\penalty0 357--362, September 2020.
\newblock \doi{10.1038/s41586-020-2649-2}.
\newblock URL \url{https://doi.org/10.1038/s41586-020-2649-2}.

\bibitem[Henderson et~al.(2018)Henderson, Islam, Bachman, Pineau, Precup, and Meger]{henderson2018deep}
Peter Henderson, Riashat Islam, Philip Bachman, Joelle Pineau, Doina Precup, and David Meger.
\newblock Deep reinforcement learning that matters.
\newblock In \emph{Proceedings of the AAAI conference on artificial intelligence}, volume~32, 2018.

\bibitem[Jaksch et~al.(2010)Jaksch, Ortner, and Auer]{jaksch2010near}
Thomas Jaksch, Ronald Ortner, and Peter Auer.
\newblock Near-{O}ptimal {R}egret {B}ounds for {R}einforcement {L}earning.
\newblock \emph{Journal of Machine Learning Research}, 11\penalty0 (Apr):\penalty0 1563--1600, 2010.

\bibitem[Kielak(2020)]{DBLP:journals/corr/abs-2003-10181}
Kacper Kielak.
\newblock Importance of {U}sing {A}ppropriate {B}aselines for {E}valuation of {D}ata-{E}fficiency in {D}eep {R}einforcement {L}earning for {A}tari.
\newblock \emph{CoRR}, abs/2003.10181, 2020.
\newblock URL \url{https://arxiv.org/abs/2003.10181}.

\bibitem[Kingma et~al.(2015)Kingma, Salimans, and Welling]{kingma2015variational}
Diederik~P Kingma, Tim Salimans, and Max Welling.
\newblock Variational {D}ropout and the {L}ocal {R}eparameterization {T}rick.
\newblock In \emph{Proceedings of the 28th International Conference on Neural Information Processing Systems-Volume 2}, pages 2575--2583, 2015.

\bibitem[Laskin et~al.(2020)Laskin, Srinivas, and Abbeel]{laskin2020curl}
Michael Laskin, Aravind Srinivas, and Pieter Abbeel.
\newblock {C}{U}{R}{L}: {C}ontrastive {U}nsupervised {R}epresentations for {R}einforcement {L}earning.
\newblock In \emph{International Conference on Machine Learning}, pages 5639--5650. PMLR, 2020.

\bibitem[Levine and Koltun(2013)]{levine2013guided}
Sergey Levine and Vladlen Koltun.
\newblock Guided {P}olicy {S}earch.
\newblock In \emph{International conference on machine learning}, pages 1--9. PMLR, 2013.

\bibitem[Nagabandi et~al.(2017)Nagabandi, Kahn, Fearing, and Levine]{DBLP:journals/corr/abs-1708-02596}
Anusha Nagabandi, Gregory Kahn, Ronald~S. Fearing, and Sergey Levine.
\newblock {N}eural {N}etwork {D}ynamics for {M}odel-{B}ased {D}eep {R}einforcement {L}earning with {M}odel-{F}ree {F}ine-{T}uning.
\newblock \emph{CoRR}, abs/1708.02596, 2017.
\newblock URL \url{http://arxiv.org/abs/1708.02596}.

\bibitem[Oh et~al.(2015)Oh, Guo, Lee, Lewis, and Singh]{oh2015action}
Junhyuk Oh, Xiaoxiao Guo, Honglak Lee, Richard Lewis, and Satinder Singh.
\newblock Action-{C}onditional {V}ideo {P}rediction {U}sing {D}eep {N}etworks in {A}tari {G}ames.
\newblock In \emph{Proceedings of the 28th International Conference on Neural Information Processing Systems-Volume 2}, pages 2863--2871, 2015.

\bibitem[Osband and Van~Roy(2015)]{osband2015bootstrapped}
Ian Osband and Benjamin Van~Roy.
\newblock Bootstrapped {T}hompson {S}ampling and {D}eep {E}xploration.
\newblock \emph{arXiv preprint arXiv:1507.00300}, 2015.

\bibitem[Osband and Van~Roy(2017)]{osband2017posterior}
Ian Osband and Benjamin Van~Roy.
\newblock Why is {P}osterior {S}ampling {B}etter than {O}ptimism for {R}einforcement {L}earning?
\newblock In \emph{International Conference on Machine Learning}, pages 2701--2710, 2017.

\bibitem[Osband et~al.(2013)Osband, Russo, and Van~Roy]{osband2013more}
Ian Osband, Daniel Russo, and Benjamin Van~Roy.
\newblock ({M}ore) {E}fficient {R}einforcement {L}earning via {P}osterior {S}ampling.
\newblock In \emph{Advances in Neural Information Processing Systems}, pages 3003--3011, 2013.

\bibitem[Osband et~al.(2016{\natexlab{a}})Osband, Blundell, Pritzel, and Van~Roy]{osband2016deep}
Ian Osband, Charles Blundell, Alexander Pritzel, and Benjamin Van~Roy.
\newblock Deep {E}xploration via {B}ootstrapped {D}{Q}{N}.
\newblock In \emph{Advances in Neural Information Processing Systems}, pages 4026--4034, 2016{\natexlab{a}}.

\bibitem[Osband et~al.(2016{\natexlab{b}})Osband, Van~Roy, and Wen]{osband2016generalization}
Ian Osband, Benjamin Van~Roy, and Zheng Wen.
\newblock Generalization and {E}xploration via {R}andomized {V}alue {F}unctions.
\newblock In \emph{Proceedings of the 33rd International Conference on Machine Learning-Volume 48}, pages 2377--2386. JMLR. org, 2016{\natexlab{b}}.

\bibitem[Plappert et~al.(2018)Plappert, Houthooft, Dhariwal, Sidor, Chen, Chen, Asfour, Abbeel, and Andrychowicz]{plappert2018parameter}
Matthias Plappert, Rein Houthooft, Prafulla Dhariwal, Szymon Sidor, Richard~Y. Chen, Xi~Chen, Tamim Asfour, Pieter Abbeel, and Marcin Andrychowicz.
\newblock Parameter {S}pace {N}oise for {E}xploration.
\newblock In \emph{International Conference on Learning Representations}, 2018.
\newblock URL \url{https://openreview.net/forum?id=ByBAl2eAZ}.

\bibitem[Salimans et~al.(2017)Salimans, Ho, Chen, Sidor, and Sutskever]{salimans2017evolution}
Tim Salimans, Jonathan Ho, Xi~Chen, Szymon Sidor, and Ilya Sutskever.
\newblock Evolution {S}trategies as a {S}calable {A}lternative to {R}einforcement {L}earning.
\newblock \emph{arXiv preprint arXiv:1703.03864}, 2017.

\bibitem[Schulman et~al.(2017)Schulman, Wolski, Dhariwal, Radford, and Klimov]{DBLP:journals/corr/SchulmanWDRK17}
John Schulman, Filip Wolski, Prafulla Dhariwal, Alec Radford, and Oleg Klimov.
\newblock Proximal {P}olicy {O}ptimization {A}lgorithms.
\newblock \emph{CoRR}, abs/1707.06347, 2017.
\newblock URL \url{http://arxiv.org/abs/1707.06347}.

\bibitem[Srivastava et~al.(2014)Srivastava, Hinton, Krizhevsky, Sutskever, and Salakhutdinov]{JMLR:v15:srivastava14a}
Nitish Srivastava, Geoffrey Hinton, Alex Krizhevsky, Ilya Sutskever, and Ruslan Salakhutdinov.
\newblock Dropout: {A} {S}imple {W}ay to {P}revent {N}eural {N}etworks from {O}verfitting.
\newblock \emph{Journal of Machine Learning Research}, 15\penalty0 (56):\penalty0 1929--1958, 2014.
\newblock URL \url{http://jmlr.org/papers/v15/srivastava14a.html}.

\bibitem[Strens(2000)]{strens2000bayesian}
Malcolm Strens.
\newblock A {B}ayesian {F}ramework for {R}einforcement {L}earning.
\newblock In \emph{International Conference on Machine Learning}, volume 2000, pages 943--950, 2000.

\bibitem[Sutton(1991)]{sutton1991dyna}
Richard~S Sutton.
\newblock Dyna, an {I}ntegrated {A}rchitecture for {L}earning, {P}lanning, and {R}eacting.
\newblock \emph{ACM Sigart Bulletin}, 2\penalty0 (4):\penalty0 160--163, 1991.

\bibitem[Thompson(1933)]{thompson1933likelihood}
William~R Thompson.
\newblock On the {L}ikelihood that {O}ne {U}nknown {P}robability {E}xceeds {A}nother in {V}iew of the {E}vidence of {T}wo {S}amples.
\newblock \emph{Biometrika}, 25\penalty0 (3/4):\penalty0 285--294, 1933.

\bibitem[Urteaga and Wiggins(2018)]{urteaga2018variational}
I{\~n}igo Urteaga and Chris Wiggins.
\newblock Variational {I}nference for the {M}ulti-{A}rmed {C}ontextual {B}andit.
\newblock In \emph{International Conference on Artificial Intelligence and Statistics}, pages 698--706. PMLR, 2018.

\bibitem[van Hasselt et~al.(2019)van Hasselt, Hessel, and Aslanides]{DBLP:conf/nips/HasseltHA19}
Hado van Hasselt, Matteo Hessel, and John Aslanides.
\newblock When to {U}se {P}arametric {M}odels in {R}einforcement {L}earning?
\newblock In \emph{NeurIPS}, pages 14322--14333, 2019.

\bibitem[Vaswani et~al.(2018)Vaswani, Bengio, Brevdo, Chollet, Gomez, Gouws, Jones, Kaiser, Kalchbrenner, Parmar, Sepassi, Shazeer, and Uszkoreit]{tensor2tensor}
Ashish Vaswani, Samy Bengio, Eugene Brevdo, Francois Chollet, Aidan~N. Gomez, Stephan Gouws, Llion Jones, Lukasz Kaiser, Nal Kalchbrenner, Niki Parmar, Ryan Sepassi, Noam Shazeer, and Jakob Uszkoreit.
\newblock {T}ensor2{T}ensor for {N}eural {M}achine {T}ranslation.
\newblock \emph{CoRR}, abs/1803.07416, 2018.
\newblock URL \url{http://arxiv.org/abs/1803.07416}.

\bibitem[Wang and Zhou(2020)]{pmlr-v119-wang20ab}
Zhendong Wang and Mingyuan Zhou.
\newblock Thompson {S}ampling via {L}ocal {U}ncertainty.
\newblock In \emph{Proceedings of the 37th International Conference on Machine Learning}, volume 119 of \emph{Proceedings of Machine Learning Research}, pages 10115--10125. PMLR, 13--18 Jul 2020.
\newblock URL \url{http://proceedings.mlr.press/v119/wang20ab.html}.

\bibitem[Williams et~al.(2015)Williams, Aldrich, and Theodorou]{williams2015model}
Grady Williams, Andrew Aldrich, and Evangelos Theodorou.
\newblock {M}odel {P}redictive {P}ath {I}ntegral {c}ontrol {U}sing {C}ovariance {V}ariable {I}mportance {S}ampling.
\newblock \emph{arXiv preprint arXiv:1509.01149}, 2015.

\bibitem[Xie et~al.(2019)Xie, Huang, Lei, Liu, Ma, Zhang, and Lin]{xie2018nadpex}
Sirui Xie, Junning Huang, Lanxin Lei, Chunxiao Liu, Zheng Ma, Wei Zhang, and Liang Lin.
\newblock {NADPE}x: {A}n {O}n-{P}olicy {T}emporally {C}onsistent {E}xploration {M}ethod for {D}eep {R}einforcement {L}earning.
\newblock In \emph{International Conference on Learning Representations}, 2019.
\newblock URL \url{https://openreview.net/forum?id=rkxciiC9tm}.

\bibitem[Ye et~al.(2021)Ye, Liu, Kurutach, Abbeel, and Gao]{ye2021mastering}
Weirui Ye, Shaohuai Liu, Thanard Kurutach, Pieter Abbeel, and Yang Gao.
\newblock Mastering atari games with limited data.
\newblock \emph{Advances in Neural Information Processing Systems}, 34, 2021.

\bibitem[Zhang et~al.(2019)Zhang, Wen, Chen, Fang, Yu, and Carin]{zhang2019scalable}
Ruiyi Zhang, Zheng Wen, Changyou Chen, Chen Fang, Tong Yu, and Lawrence Carin.
\newblock Scalable {T}hompson {S}ampling via {O}ptimal {T}ransport.
\newblock In \emph{The 22nd International Conference on Artificial Intelligence and Statistics}, pages 87--96. PMLR, 2019.

\bibitem[Łukasz et~al.(2020)Łukasz, Mohammad, Piotr, Błażej, Roy, Konrad, Dumitru, Chelsea, Piotr, Sergey, Afroz, Ryan, George, and Henryk]{Kaiser2020Model}
Kaiser Łukasz, Babaeizadeh Mohammad, Miłos Piotr, Osiński Błażej, Campbell Roy, H, Czechowski Konrad, Erhan Dumitru, Finn Chelsea, Kozakowski Piotr, Levine Sergey, Mohiuddin Afroz, Sepassi Ryan, Tucker George, and Michalewski Henryk.
\newblock {M}odel {B}ased {R}einforcement {L}earning for {A}tari.
\newblock In \emph{International Conference on Learning Representations}, 2020.
\newblock URL \url{https://openreview.net/forum?id=S1xCPJHtDB}.

\end{thebibliography}
}










\end{document}